\icmltitlerunning{Maximum-and-Concatenation Networks}
\newtheorem{theorem}{Theorem}
\newtheorem{corollary}{Corollary}[theorem]
\newtheorem{lemma}[theorem]{Lemma}
\newtheorem{claim}{Claim}
\newtheorem{condition}{Condition}
\def\*#1{\mathbf{#1}}
\def\^#1{\mathcal{#1}}
\def\##1{\mathbb{#1}}
\newcommand{\RNum}[1]{\lowercase\expandafter{\romannumeral #1\relax}}
\begin{document}

\twocolumn[
\icmltitle{Maximum-and-Concatenation Networks}

\begin{icmlauthorlist}
	\icmlauthor{Xingyu Xie}{pku}
	\icmlauthor{Hao Kong}{pku}
	\icmlauthor{Jianlong Wu}{sdu}
	\icmlauthor{Wayne Zhang}{sensetime}
	\icmlauthor{Guangcan Liu}{nuist}
	\icmlauthor{Zhouchen Lin}{pku}
\end{icmlauthorlist}

\icmlaffiliation{pku}{Key Lab. of Machine Perception (MoE), School of EECS, Peking University}
\icmlaffiliation{sdu}{School of Computer Science and Technology, Shandong University}
\icmlaffiliation{sensetime}{SenseTime Research}
\icmlaffiliation{nuist}{B-DAT and CICAEET, School of Automation, Nanjing University of Information Science and Technology}

\icmlcorrespondingauthor{Guangcan Liu}{gcliu@nuist.edu.cn}
\icmlcorrespondingauthor{Zhouchen Lin}{zlin@pku.edu.cn}



\vskip 0.3in
]

\printAffiliationsAndNotice{}  
\begin{abstract}
While successful in many fields, deep neural networks (DNNs) still suffer from some open problems such as bad local minima and unsatisfactory generalization performance. 
In this work, we propose a novel architecture called Maximum-and-Concatenation Networks (MCN) to try eliminating bad local minima and improving generalization ability as well.
Remarkably, we prove that MCN has a very nice property; that is, \emph{every local minimum of an $(l+1)$-layer MCN can be better than, at least as good as, the global minima of the network consisting of its first $l$ layers}. 
In other words, by increasing the network depth, MCN can autonomously improve its local minima's goodness, what is more, \emph{it is easy to plug MCN into an existing deep model to make it also have this property}.
Finally, under mild conditions, we show that MCN can approximate certain continuous functions arbitrarily well with \emph{high efficiency}; that is, the covering number of MCN is much smaller than most existing DNNs such as deep ReLU. 
Based on this, we further provide a tight generalization bound to guarantee the inference ability of MCN when dealing with testing samples. 
\end{abstract}
\section{Introduction}
\vspace{-2mm}
Deep neural networks (DNNs) have been showing superior performance in various fields such as computer vision, speech recognition, natural language processing, and so on. At the first glance, DNN learning is not an enigmatic technique, as its basic idea is quite simple and mostly about learning a possibly over-parameterized DNN from a huge number of training samples; namely,
\begin{equation}\label{eq:obj0}
\vspace{-2mm}
\min_{\bm{\theta}} L(\bm{\theta}) := \frac{1}{n}\sum_{i=1}^n\ell(\*f_{\bm{\theta}}(\*x_i), \*y_i),
\end{equation}
where $\*x_i \in \mathbb{R}^{d_x}$ and $\*y_i \in \mathbb{R}^{d_y}$ denote an input and a target, respectively, $\*f_{\bm{\theta}}(\cdot)$ standards for a DNN with parameters $\bm{\theta}$, and $\ell(\cdot)$ is some loss function. Notice that, some kind of regularization schema has already been implanted into the network to constrain the parameter space, though there is no explicit regularizer imposed on $\bm{\theta}$~\cite{arora2019implicit}. Despite its ordinary appearance, DNN learning is meanwhile quite complicated in many ways, and the current DNNs still suffer from several weaknesses, e.g., the training procedure may easily get stuck in \emph{bad local minima} (i.e., the local minima with large training error), the learnt model may be prone to \emph{over-fit the training data} (i.e., the testing error is large when small training error is obtained), etc. Overcoming these difficulties are crucial for DNNs to solve the real-world problems that are more challenging and significant, but they are still \emph{open} problems.

To address the issue of bad local minima, many heuristic techniques have been proposed, e.g., batch normalization~\cite{ioffe2015batch}, group normalization~\cite{wu2018group}, dropout~\cite{srivastava2014dropout}, etc. These techniques would be useful under certain context, but may not be generally helpful and, even worse, it is hard to know when and which method should be used. In fact, the elimination of bad local minima, i.e., having small empirical training error at \emph{all} local minima, is really important for DNN learning. Some recent theories~\cite{zhang2016understanding,wei2019data,cao2019generalization,li2018learning,allen2018learning,arora2019fine} have revealed that, whenever the local minima produces only small training error, DNNs have probably good generalization performance at these local minima. That is to say, in some cases, good local minima mean good predictors which are the ultimate goal of supervised learning. With the hope of pursuing the property of \emph{no bad local minima}, some learning theories~\cite{kawaguchi2016deep, arora2018convergence, hardt2016identity,liang2018adding,liang2018understanding} have been established to prove that, under certain conditions, any local minima of a certain DNN are also global minima. While impressive, existing studies are still unsatisfactory in some aspects:
\begin{itemize}
\item Most existing theories about ``all local minima are global minima'' are built upon on some unrealistic network architectures, e.g., without activation function, which means that they cannot be applied to common deep learning tasks. 
The work~\cite{kawaguchi2019elimination} considers general architectures, but requires additional regularizer and is limited to shallow case. In addition, strictly speaking, the conclusion of ``all local minima are global minima'' cannot really ensure that ``DNN has no bad local minima''. This is because, whenever the adopted network itself is poorly designed, global minima can still lead to large training error. In one word, existing studies have not gained convenient schemes that can be easily used to reduce the training error of general DNNs.
\item Though small training error may bring good generalization for some specially designed DNNs~\cite{zhang2016understanding,wei2019data,cao2019generalization,li2018learning,allen2018learning,arora2019fine}, a rigorous generalization bound is still important for general DNNs to produce superior performance in practice. There is sparse research in the direction of generalization analysis, e.g., deep ReLU~\cite{yarotsky2017error}. However, the covering number in deep ReLU is very large, which means that the approximation ability of the network is rather weak.
\item  What is more, to our knowledge, there is no theoretical study that addresses the issues of local minima and generalization ability simultaneously. These two problems are closely related and should be investigated at the same time.
\vspace{-2mm}
\end{itemize}
To relieve the issues highlighted above, we propose a novel multi-layer DNN termed \textbf{M}aximum-and-\textbf{C}oncatenation \textbf{N}etworks~(MCN).
In our MCN, one hidden layer is formed by concatenating together two parts, with one being a linear transformation of the output of the previous layer, and the other being a maximum of two piecewise smooth functions. The output of the final layer is further transformed by some linear operators, so as to stay in step with the configuration of the target output. In general, the concatenation operator is a good option during designing DNNs, and it is indeed a primary cause of the superiorities of MCN over existing architectures.
\par
We prove that MCN naturally ensures the effectiveness of its learning process, i.e., the no bad local minima property. To be more precise, suppose that $\bm{\theta}'$ is a global minimum to (\ref{eq:obj0}) with $\*f_{\bm{\theta}'}$ being an $l$-layer MCN (briefly, we say that $\bm{\theta}'$ is a global minimum of an $l$-layer MCN), and $\bm{\theta}$ is a local minimum of the $(l+1)$-layer MCN obtained by adding one layer to the former $l$-layer network. Then we have $ L(\bm{\theta}) \leq L(\bm{\theta}')$, which means that \emph{the global minima of an $l$-layer MCN may be outperformed, at least can be attained, by simply increasing the network depth}. More importantly, MCN can be easily appended to many existing network architectures, and we prove that, under mild conditions, \emph{the modified DNN will get the nice properties of MCN}. This property is achieved mainly due to a \emph{skip connection} with a proper activation function: With the help of this skip connection, the bad local minima are moved to infinity, while the implicit regularizer carried by the network itself may encourage the optimization procedure to seek for the remaining good local minima.
\par
\begin{figure*}[!htp]
	\centering
	\includegraphics[width=\textwidth]{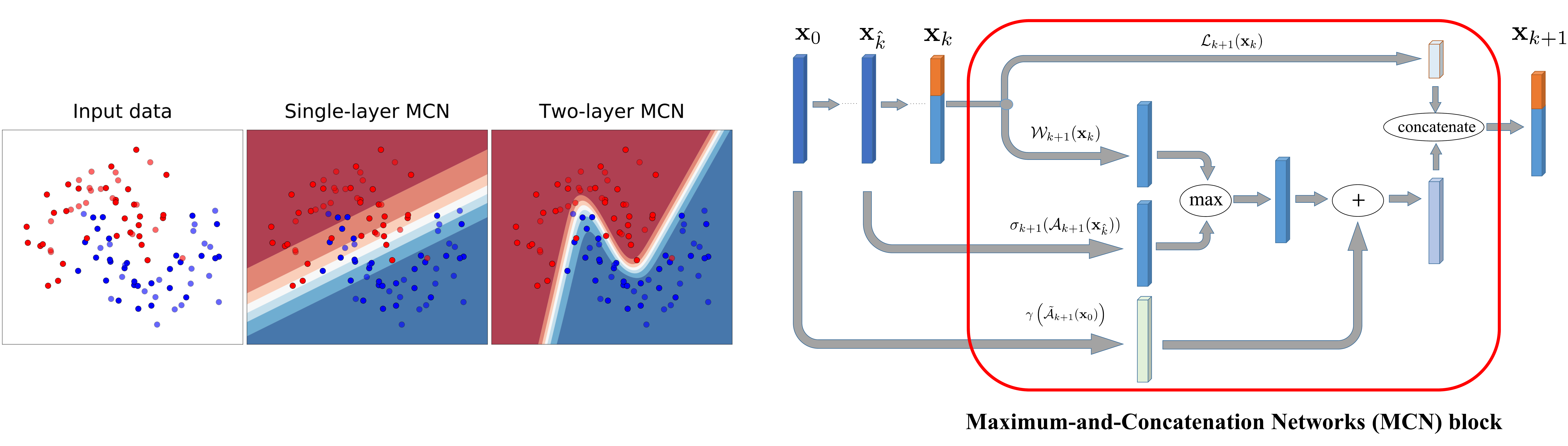}
	\caption{Left: Illustration of the motivations for inventing MCN, which is indeed a generalization of the piecewise smooth function. The composition of MCNs may increase the pieces exponentially. Right: One block of MCN, where each layer consists of four parts.}
	\label{fig:LM-CVNN}
\end{figure*}
Notice that, piecewise liner functions can approximate any Lipschitz continuous function up to arbitrarily small error, and the maximum operator can model the piecewise linear function efficiently~\cite{telgarsky2016benefits}. Based on these facts, we show that MCN with \emph{sparse} connection can approximate a wide range of continuous functions arbitrarily well. Our analysis framework is new and quite different from the previous studies~\cite{lu2020deep,yarotsky2018optimal,yarotsky2017error}, which rely on Taylor expansion and requires a parameter complexity of $\^O(N^{d_x})$, where $N\gg 1$ is a quantity that controls the approximation accuracy~\footnote{$N^{-\beta}$ is the dominant term in approximation error, where $\beta >0$ relates to the smoothness of the target function.}. By sharp contrast, we show that a complexity of only $\^O(N(\ln N)^{d_x -2})$ is enough to approximate the target function.
\par
Based on the approximation analysis, we further investigate the generalization ability of MCN to cope with testing samples, proving that MCN has much smaller covering number than deep ReLU. Interestingly, our results suggest that the width has less effects than the depth on the generalization bound. Our results also show that, whenever the training data are exactly fitted, \emph{MCN achieves the statistically optimal rate in the minmiax sense}; this confirms the conjectures in~\cite{wei2019data,arora2019fine,belkin2018overfitting} that ultra-deep networks may generalize well on testing data~\footnote{Note here that we have no intention to suggest using infinitely deep networks, as the computational cost is also a matter and the required data amount in the extreme case could be huge.}. To summarize, the contributions of this paper mainly include: 	
\vspace{-3mm}
\begin{itemize}
	\item
	We propose a novel architecture termed MCN and prove that MCN can help to overcome the issue of bad local minima. Namely, the global minima of an $l$-layer MCN can be always attained or even outperformed by simply increasing the network depth (Theorem \ref{thm:depth}). More importantly, we show that MCN is able to turn a possibly poorly-designed DNN into a good one, which also has the nice property of ``no bad local minima" under certain conditions (Corollary \ref{corollary:part} and Theorem \ref{thm:full}). These results would be more significant than~\cite{kawaguchi2016deep, hardt2016identity}, which only show that all local minima of a certain DNN with fixed depth are global minima, but provide no practical guidance for the users to seek better solutions to their tasks---just finding the globally optimal solutions to some over-simplified optimization problems is essentially not enough.
	
	\item We devise a new framework to analyze the approximation ability of MCN, showing that MCN can approximate some classes of continuous functions arbitrarily well by only using a parameter complexity of $\^O(N(\ln N)^{d_x -2})$ (Theorem~\ref{thm:approximation}). This is much lower than the $\^O(N^{d_x})$ complexity obtained by the previous studies~\cite{lu2020deep,yarotsky2018optimal,yarotsky2017error}.

	\item Unlike the previous analyses in~\cite{liang2018adding,liang2018understanding,kawaguchi2019elimination}, which focus on the elimination of local minima but ignore the generalization performance, we provide rigorous analysis to guarantee the generalization ability of MCN under certain conditions (Theorem \ref{thm:covering} and Corollary \ref{corollary:risk}). In particular, our results show that MCN has a much smaller covering number than deep ReLU, revealing that the depth is more important than the width for generalization; this supports the mechanism of deep learning.
	
\end{itemize}
\vspace{-2mm}
\section{Model and Setting}
This section introduces the technical details of MCN, as well as the setup for establishing theoretical analysis.
\vspace{-2mm}
\subsection{Maximum-and-Concatenation Networks}
The design of our MCN---a linearity and maximum concatenation network---is inspired by the following observations. Consider the task of shattering some points that are not linearly separable, which is shown in Figure~\ref{fig:LM-CVNN}. Intuitively, the maximum of two hyperplanes may produce smaller classification error than every single one of them. Therefore, we may reduce the classification error by replacing parts of the current classifier with some maximum-derived units. Such a replacement process can be repeated several times, learning progressively a refined classification surface that will be piecewise smooth. Moreover, considering the regression problem, we have a classical claim from the Stone-Weierstrass approximation theorem.
\begin{claim}\label{claim:lipfrompiece}
	Any Lipschitz continuous function can be approximated arbitrarily well by a piecewise linear function.
\end{claim}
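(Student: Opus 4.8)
The plan is to prove Claim~\ref{claim:lipfrompiece} by an explicit construction plus a uniform-approximation estimate, following the classical route through the Stone--Weierstrass / piecewise-linear interpolation argument. First I would reduce to the case of a function $g$ that is Lipschitz on a compact set; for an unbounded domain one restricts attention to the relevant compact cube $K \subset \mathbb{R}^{d_x}$ on which the network operates (the statement is implicitly about approximation on compacta, since no piecewise linear function with finitely many pieces can match a generic Lipschitz function uniformly on all of $\mathbb{R}^{d_x}$). Fix the target accuracy $\varepsilon > 0$ and let $C$ be the Lipschitz constant of $g$, i.e. $|g(\*x) - g(\*x')| \le C\|\*x - \*x'\|$ for all $\*x,\*x' \in K$.

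Next I would build the approximant by triangulating $K$. Partition $K$ into a simplicial mesh of diameter $\delta$ (for example, subdivide a cube into small sub-cubes of side $\delta/\sqrt{d_x}$ and triangulate each into $d_x!$ simplices via the Kuhn triangulation), and let $\phi_\delta$ be the unique continuous function that agrees with $g$ at every vertex of the mesh and is affine on each simplex. Such $\phi_\delta$ is piecewise linear by construction. The key estimate is that on any simplex $S$ with vertices $v_0,\dots,v_{d_x}$, any point $\*x \in S$ is a convex combination $\*x = \sum_j \lambda_j v_j$, so
\begin{align}
|\phi_\delta(\*x) - g(\*x)| &= \Bigl| \sum_j \lambda_j g(v_j) - g(\*x) \Bigr| \le \sum_j \lambda_j |g(v_j) - g(\*x)| \nonumber \\
&\le \sum_j \lambda_j \, C \|v_j - \*x\| \le C \,\delta. \nonumber
\end{align}
Choosing $\delta = \varepsilon / C$ then yields $\|\phi_\delta - g\|_{\infty,K} \le \varepsilon$, which proves the claim. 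I would also remark (as is standard) that one can equivalently invoke the density of piecewise linear functions in $C(K)$ as a consequence of the Stone--Weierstrass theorem, but the explicit interpolation argument above is more transparent and also hands over the quantitative rate $\delta = \varepsilon/C$ that the later approximation analysis (Theorem~\ref{thm:approximation}) relies on.

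The main obstacle is not analytic depth but bookkeeping: one must be careful that the piecewise linear interpolant on a simplicial mesh is genuinely continuous across shared faces (which holds precisely because adjacent simplices agree on the common face, where the interpolant is determined by the shared vertex values), and one must be explicit about what ``approximated arbitrarily well'' means — uniform approximation on compact subsets of the domain, not on all of $\mathbb{R}^{d_x}$. A secondary subtlety worth flagging is the dependence of the number of pieces on $\varepsilon$ and $d_x$: the mesh above has on the order of $(C/\varepsilon)^{d_x}$ simplices, i.e. the naive construction suffers the curse of dimensionality, which is exactly the inefficiency that MCN is later designed to circumvent by generating pieces multiplicatively through composition. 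For the purposes of Claim~\ref{claim:lipfrompiece} itself, however, no efficiency is claimed, so the straightforward triangulation argument suffices.
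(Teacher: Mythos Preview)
Your argument is correct. The paper, however, does not actually supply a proof of this claim: it simply presents Claim~\ref{claim:lipfrompiece} as ``a classical claim from the Stone--Weierstrass approximation theorem'' and uses it purely as motivation for the MCN architecture. So there is nothing in the paper to compare against beyond the one-line attribution to Stone--Weierstrass. Your explicit simplicial-interpolation construction is more informative than the paper's bare invocation, and you even flag the Stone--Weierstrass route yourself as the alternative --- that alternative is essentially all the paper offers. Your additional remarks about the $(C/\varepsilon)^{d_x}$ piece count and the compacta caveat are accurate and well placed, but note that the paper does not rely on any quantitative rate from this claim; Theorem~\ref{thm:approximation} is proved via an entirely separate Fourier-basis construction, so the $\delta=\varepsilon/C$ rate you extract is not actually used downstream.
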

\vspace{-2mm}
By composing a series of maximum operators, we can easily construct a piecewise smooth function. Consider approximating the quadratic function $x\rightarrow x^2$. Define the operator $\^T^m(x) \coloneqq \max\{-x/2,x/2 - 2^{1-2m}\}$ and let $g^m(x) \coloneqq \^T^m\circ \^T^{m-1}\circ \cdots \^T^1(x)$. It is known that $x+\sum_{i=1}^m g^i(x)$ approximates $x^2$ exponentially fast in $m$~\cite{telgarsky2016benefits}. In contrast, to approximate a twice differentiable non-piecewise linear function $f$, it would be awkward to use some existing DNNs that need to rescale the second order differences: $(f(t+2\delta x) -2f(t+x\delta)+f(x\delta))/(\delta^2 f''(t)) \rightarrow x^2$ for $\delta\to 0$ with $f''(t) \ne 0$. Note that $\delta\to 0$ will cause the scale of network parameters to be very large.
\par
Beneath it all, the model of an $l$-layer MCN, which is indeed a mapping from input $\*x$ to output $\*y$, is designed as follows, for $k=0,\cdots,l-1$:
\begin{equation}\label{Eq:LM-CVNN}
\*x_{k+1} = \left[	\^L_{k+1}(\*x_k);~\gamma\left(\^{\tilde{A}}_{k+1}(\*x_0)\right) + \^M_{k+1}(\*x_k)\right],
\end{equation}
where
\[
\^M_{k+1}(\*x_k)=\max\left\{\^W_{k+1}(\*x_k),  \sigma_{k+1}\left(\^A_{k+1}\left(\*x_{\hat{k}}\right)\right)\right\},
\]
$0\leq\hat{k}\leq k$ ($\*x_{\hat{k}}$ is the output of any intermediate layer between $\*x_{k}$ and $\*x_{0}$), $\gamma(\cdot)$ and $\sigma_{k+1}(\cdot)$ are some element-wise activation functions, $\*x_0 = \*x\in \mathbb{R}^{d_x}$ is the input data vector, $\*x_k\in \mathbb{R}^{d_k}$ is the output of the $k$-th layer, $[~;~]$ is the operator that vertically concatenates two vectors into a single one, $\^L_{k+1}:\mathbb{R}^{d_k} \to \mathbb{R}^{d_{\^L}}$ is a learnable linear operator\footnote{For convenience, we assume that the output of $\^L_{k+1}$ has a fixed dimension $d_{\^L}$, $\forall{}k=0,\cdots,l-1$. Actually, our methods and theories do not need this assumption.}, and $\^A_{k+1}(\cdot)$, $\^{\tilde{A}}_{k+1}(\cdot)$ and $\^W_{k+1}(\cdot)$ are all learnable linear operators from $\mathbb{R}^{d_k}$ to $\mathbb{R}^{d_{k+1} - d_{\^L}}$.
\par
In fact, as mentioned in Figure~\ref{fig:LM-CVNN}, MCN is a generalization of piecewise smooth functions, and it can contain many existing DNNs as special cases, e.g., ResNet, Maxout Network~\cite{goodfellow13} and Input Convex Neural Networks (ICNN)~\cite{amos2017input}. In MCN, there are layers that directly connect the input $\*x_0$ to the hidden units in deeper layers. Such connections are unnecessary for traditional networks, but very important for achieving the nice property of ``no bad local minimum'' which we will introduce later. The highway with the operator $\^L_k$ connects the training loss with the geometric projection residual in the proper setting (Section \ref{sec:connectLR} in supplementary material), which helps MCN perform well when it goes deeper and wider.

\subsection{Setting}
To analyze MCN theoretically, we consider a typical task of regression (or classification). Denote by $\*x \in \mathbb{R}^{d_x}$ and $\*y \in \mathbb{R}^{d_y}$ an input vector and a target, respectively. Let $\{(\*x_i, \*y_i)\}_{i=1}^n$ be a training set consisting of $n$ samples, with $\{\*x_i\}_{i=1}^n$ being \emph{distinct} points in $\mathbb{R}^{d_x}$. Denote by $\*x_{k,i}$ the output of the $k$-th layer on the $i$-th training sample $\*x_i$. Notice that MCN is primarily designed to learn some extrinsic structures from the data $\*x$, and its outputs may be inconsistent with the target $\*y$, e.g., they might have different dimensions. Hence, an additional mapping $\Psi:\mathbb{R}^{d_l}\rightarrow \mathbb{R}^{d_y}$ is used to further transform the network outputs, resulting in the following objective function for training an $l$-layer MCN:
\begin{equation}\label{eq:tr_loss}
\vspace{-2mm}
L(\bm{\theta}_{l}) := \frac{1}{n}\sum_{i=1}^n\ell(\Psi(\*x_{l,i}), \*y_i),
\vspace{-1mm}
\end{equation}
where $ \ell: \mathbb{R}^{d_y} \times \mathbb{R}^{d_y} \rightarrow \mathbb{R}$ is an arbitrary lower-bounded loss function (without losing generality, we assume the lower bound is 0), and $\bm{\theta}_{l} = \{\bm{\theta} (\^L_{k}, \^W_{k}, \^{\tilde{A}}_{k}, \^A_{k})\}_{k=1}^l$ is a collection of all learnable parameters with $ \bm{\theta}(\^L_{k}, \^W_{k}, \^{\tilde{A}}_{k}, \^A_{k})$ being the parameters of the operators $\^L_{k}, \^W_{k}, \^{\tilde{A}}_{k}$ and $\^A_{k}$ defined in (\ref{Eq:LM-CVNN}). In our setup, the extra mapping $\Psi(\cdot)$ could be either learnt or fixed~\footnote{There is no much difference between these two variants, as fixing the last layer of a DNN may cause very little influence~\cite{hoffer2018fix}.}, while the activation functions $\sigma_{k}(\cdot)$ and $\gamma(\cdot)$ are always fixed.

To obtain rigorous conclusions, some technical conditions are required. But for the ease of presentation, we would like to present them along with the established theorems.  
\section{Main Results}
This section presents the main results of this paper, including a couple of theories regarding the optimality, fitting ability and generalization performance. All the detailed proofs of these theorems are provided in the supplementary material.

\subsection{Effects of Depth}
First note that an $(l+1)$-layer MCN is obtained by adding one layer into the network consisting of its first $l$ layers, i.e., $\bm{\theta}_{l+1} = \{\bm{\theta}_{l}, \bm{\theta} (\^L_{l+1}, \^W_{l+1}, \^{\tilde{A}}_{l+1}, \^A_{l+1})\}$. Under some mild technical conditions, we prove that the training objective~\eqref{eq:tr_loss} is non-increasing, or even monotonically decreasing, as the network goes deeper\footnote{This is not in conflict with the learning-based optimization theories~\cite{xie2019DLADMM,liu2018alista}, which show that their networks can converge fast and need only a smaller number of layers to solve optimization problems. In fact, empirically, MCN will converge when the network is deep enough.}.
\begin{theorem}[Effects of Depth]\label{thm:depth}
Let the activation function $\gamma(\cdot)$ be the element-wise $\exp(\cdot)$. Suppose that the loss function $\ell(\cdot)$ in~(\ref{eq:tr_loss}) is differentiable and convex. Denote by $\bm{\theta}_{l+1}$ any local minimum of an $(l+1)$-layer MCN. If $d_{l+1} = d_l$, then the following holds for any fixed injection $\Psi(\cdot)$:
	\[
	L(\bm{\theta}_{l+1})  \leq \min_{\bm{\theta}'_{l}} L(\bm{\theta}'_{l}),
	\]
	where $\bm{\theta}'_{l}$ is a global minimum of the $l$-layer MCN. Moreover, if $\ell(\cdot)$ is strongly convex and there exists $i$ such that $\*x_{l+1,i}\neq \*x'_{l,i}$, then the inequality is strict, namely $L(\bm{\theta}_{l+1})  < \min_{\bm{\theta}'_{l}} L(\bm{\theta}'_{l})$.
\end{theorem}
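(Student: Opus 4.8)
The plan is to prove the inequality in two movements: first, that an $(l+1)$-layer MCN can reproduce on the $n$ training points the layer-$l$ output of \emph{any} $l$-layer MCN, so that $\inf_{\bm{\theta}_{l+1}} L(\bm{\theta}_{l+1}) \le \min_{\bm{\theta}'_l} L(\bm{\theta}'_l)$; and second, that the $\exp$ skip-connection forces \emph{every} local minimum of the $(l+1)$-layer loss to attain this infimum. As bookkeeping I would write $\bm{\theta}_{l+1} = \{\bm{\theta}_l,\ \bm{\theta}(\^L_{l+1},\^W_{l+1},\^{\tilde{A}}_{l+1},\^A_{l+1})\}$, split each layer-$l$ output as $\*x_{l,i} = [\*u_i;\*v_i]$ with $\*u_i\in\mathbb{R}^{d_{\^L}}$, and define the effective residual $\*r_i := \Psi'(\*x_{l+1,i})^{\top}\nabla_1\ell(\Psi(\*x_{l+1,i}),\*y_i)$, written $[\*r_i^{\^L};\*r_i^{M}]$; by convexity of $\ell$, comparing $L$ at a perturbation that only moves $\*x_{l+1,i}$ amounts to controlling $\langle \*r_i, \Delta\*x_{l+1,i}\rangle$ plus a nonnegative curvature term. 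The hypothesis $d_{l+1}=d_l$ is used here only to guarantee that the same fixed injection $\Psi$ applies to the outputs of both networks.

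For the first movement, let $\bm{\theta}'_l$ globally minimize the $l$-layer loss with outputs $\*x'_{l,i} = [\*u'_i;\*v'_i]$. I would build an $(l+1)$-layer network whose first $l$ layers are $\bm{\theta}'_l$ and whose last layer sets $\^L_{l+1}$ to the coordinate projection onto the first block (so its first block is $\*u'_i$), sets $\^W_{l+1}$ to the projection onto the second block minus a constant bias $\varepsilon\mathbf{1}$ (so $\^W_{l+1}\*x'_{l,i} = \*v'_i - \varepsilon\mathbf{1}$), drives $\^A_{l+1}$ so that $\sigma_{l+1}(\^A_{l+1}\*x'_{\hat k,i})$ sits strictly below $\*v'_i-\varepsilon\mathbf{1}$ on the training points (possible for $\varepsilon$ small, since under the standing assumptions on the activations the $\exp$-branch of the preceding layers makes $\*v'_i$ strictly exceed $\inf\sigma_{l+1}$ entrywise), and sets $\gamma(\^{\tilde{A}}_{l+1}\*x_{0,i}) = \varepsilon\mathbf{1}$ by taking $\^{\tilde{A}}_{l+1}$ constant equal to $\ln\varepsilon$. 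Then the $\max$ selects the $\^W$-branch and the second block equals $(\*v'_i-\varepsilon\mathbf{1}) + \varepsilon\mathbf{1} = \*v'_i$, so $\*x_{l+1,i}=\*x'_{l,i}$ for all $i$ and this network has loss exactly $\min_{\bm{\theta}'_l}L(\bm{\theta}'_l)$. Note this competitor lives near ``infinity'' in parameter space (the bias of $\^A_{l+1}$ diverges), so it is \emph{not} a small perturbation of an arbitrary $\bm{\theta}_{l+1}$ --- which is precisely why the landscape argument below is needed rather than a one-line descent-path argument.

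For the second movement --- the crux --- I would exploit $\gamma=\exp$ at a local minimum $\bm{\theta}_{l+1}$. Because $\exp$ never touches the $\max$, the map $t\mapsto L$ with $\^{\tilde{A}}_{l+1}\leftarrow\^{\tilde{A}}_{l+1}+t\^B$ is $C^{1}$ for every affine $\^B$; first-order stationarity then gives $\sum_i \bigl(\*r_i^{M}\odot\gamma(\^{\tilde{A}}_{l+1}\*x_{0,i})\bigr)\*x_{0,i}^{\top}=0$ together with $\sum_i \*r_i^{M}\odot\gamma(\^{\tilde{A}}_{l+1}\*x_{0,i})=0$, i.e.\ the weighted second-block residuals are orthogonal to the affine hull of $\{\*x_{0,i}\}$. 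The second-order (positive-semidefinite Hessian) condition, using that $\gamma,\gamma',\gamma''>0$ everywhere, then pins these residuals down further: a direction in which the loss could still strictly decrease is excluded only when the second-block predictions are already optimal given the data. Combining this with stationarity in $\^L_{l+1}$ --- which is a linear least-squares/projection condition on $\*r_i^{\^L}$, the ``geometric projection residual'' alluded to in the model section --- and in $\^W_{l+1},\^A_{l+1}$, and with the competitor of the first movement lying in the achievable set, one concludes $L(\bm{\theta}_{l+1})\le\min_{\bm{\theta}'_l}L(\bm{\theta}'_l)$. For strictness under strong convexity, the same first/second-order analysis becomes strict: a strictly convex $\ell$ turns the curvature inequality into a strict one unless $\*r_i\equiv 0$, which forces $\Psi(\*x_{l+1,i})=\Psi(\*x'_{l,i})$ and hence, by injectivity of $\Psi$, $\*x_{l+1,i}=\*x'_{l,i}$ for every $i$, contrary to hypothesis.

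I expect the main obstacle to be the second movement. Three difficulties compound. The $\max$ makes $L$ only piecewise smooth, so the stationarity conditions must be stated with one-sided derivatives / subgradients, and one must track which branch of each $\max$ is active near the local minimum. The ``bad'' configurations escape to infinity (the competitor above sits on the boundary of parameter space), so there is no explicit descent path and the contradiction must be squeezed purely from local first- and second-order information --- which is exactly where the special choice $\gamma=\exp$ (positive value, positive first and second derivative, multiplicative under additive perturbation) is indispensable. Finally, if $\Psi$ is a genuinely nonlinear injection, its linearisation $\Psi'(\*x_{l+1,i})$ must be handled so that convexity of $\ell$ remains usable (for affine $\Psi$ this step is immediate). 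The technical heart, I believe, is turning ``stationary and PSD in $\^{\tilde{A}}_{l+1}$ and $\^L_{l+1}$'' into ``the residual is a projection residual no larger than that of the best $l$-layer MCN''.
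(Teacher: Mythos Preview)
Your two-movement plan contains a genuine gap in the second movement, and the first movement turns out to be unnecessary.

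The paper's argument does not go through a ``reproduce the $l$-layer output'' competitor and then argue that every local minimum beats the infimum. Instead it proves directly, for \emph{any} $\bm{\theta}'_l$, that the convexity lower bound
\[
n\bigl(L(\bm{\theta}'_l) - L(\bm{\theta}_{l+1})\bigr) \;\geq\; \sum_{j}\sum_{i}\bigl(\nabla\ell_{\Psi}(\*x_{l+1,i})\bigr)_j\,(\*x'_{l,i}-\*x_{l+1,i})_j \;=\; L_B
\]
vanishes. The key step you are missing is how to force $L_B=0$. Your first-order stationarity in $\^{\tilde A}_{l+1}$ gives only
\[
\sum_i c_{i,j}\,(\nabla\ell_\Psi)_j\,\*x_{0,i} = 0,\qquad \sum_i c_{i,j}\,(\nabla\ell_\Psi)_j = 0,
\]
i.e.\ orthogonality to affine functions of $\*x_{0,i}$; the Hessian PSD condition you invoke gives only an \emph{inequality}, not further equalities, so ``pins these residuals down further'' is where your argument becomes non-rigorous. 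That is not enough: the entries $(\*x'_{l,i}-\*x_{l+1,i})_j$ are \emph{arbitrary} continuous functions of $\*x_{0,i}$, not affine ones.

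The paper closes this gap by Taylor-expanding the perturbation $\^{\tilde A}_{l+1}\leftarrow\^{\tilde A}_{l+1}+\epsilon\,\*u\*e_j^\top$ to \emph{all} orders, and peeling off each power of $\epsilon$ inductively from the local-minimum inequality. This yields, for every $q\ge 1$,
\[
\sum_i \gamma^{(q)}\!\bigl((\^{\tilde A}_{l+1}\*x_{0,i})_j\bigr)\,(\nabla\ell_\Psi)_j\,(\*u^\top\*x_{0,i})^q = 0.
\]
The reason $\gamma=\exp$ is indispensable is \emph{not} the positivity of $\gamma,\gamma',\gamma''$ that you cite, but the identity $\gamma^{(q)}=\gamma$: it makes the weights $c_{i,j,q}=\gamma^{(q)}((\^{\tilde A}_{l+1}\*x_{0,i})_j)$ independent of $q$. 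One then has orthogonality of the single weighted residual vector to $(\*u^\top\*x_{0,i})^q$ for all $q$ and all $\*u$, hence (via a symmetric-tensor argument) to $\operatorname{vec}(\*x_{0,i}^{\otimes q})$ for all $q\le n-1$. Since the $n$ inputs are distinct, polynomial interpolation lets you write $c_{i,j}^{-1}(\*x'_{l,i}-\*x_{l+1,i})_j$ (times suitable ratios of residual components) as a polynomial in $\*x_{0,i}$ of degree $\le n-1$; plugging this into $L_B$ collapses it to zero term by term. Your proposal never reaches these higher-order moment conditions, so the bound on $L_B$ cannot be closed, and the strictness argument (which you base on $\*r_i\equiv 0$) likewise does not follow from the conditions you have actually extracted.
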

The setting of fixing $\Psi(\cdot)$ is to ensure that an $(l+1)$-layer MCN and its $l$-layer part are comparable. According to the above theorem, the global minima of an $l$-layer MCN can be attained, or even outperformed, by simply increasing the network depth by one. So, given the context of MCN, increasing network depth can not only ``eliminate'' local minima, but also help seek good solutions that possess smaller training error, providing a theoretically interpretation for a well-known empirical observation---deeper networks usually lead to better training results.

Among the other things, provided that the loss function is differentiable and strongly convex, we can further prove that the training error is able to go to zero. But the proof needs a key theorem established in the next subsection.
\par
\textbf{Remark 1}: One may worry that there exist decreasing paths to infinity, and the weight may need to diverge to improve the performance of local minima~\cite{sohl2019eliminating}.
The previous work~\cite{kawaguchi2016deep,liang2018adding,liang2018understanding} may suffer from this problem, mainly due to their explicit regularization, whose coefficient should decay to zero to ensure the consistency of optimization. 
Hence, it leads to the divergence of some parameters to ensure the scale of output.
However, our results hold without requiring any parameter to approach zero or infinity.
Furthermore, for the classification problem, this divergence problem can be solved by proper parameter regularization~\cite{liang2019revisiting}. 
But, for the general regression problem, regularization may not work.
Fortunately, under the over-parameterized setting, algorithmic analysis~\cite{allen2019convergence,du2018gradient1} can entirely avoid the divergence risk. 
We leave the algorithmic analysis of MCN as our future work.
\subsection{Approximation Ability}
In general, it is unlikely that all mathematical functions can be approximated by DNNs. The following defines a class of functions which can be well approximately by MCN.
\begin{condition}\label{cond:r2f}
	For $\beta \in \#N$, we define a modified $\beta$-th Sobolev space on the hypercube $[-1,1]^{d_x}$
	\[
	\^H^{\beta}\coloneqq\left\{\*f: \mathrm{D}^{\bm{\alpha}} \*f \in \mathrm{L}^{2}\left(\left[-1,1\right]^{d_x}\right), \forall \bm{\alpha}:|\bm{\alpha}|_{\infty} \leq \beta\right\},
	\]
	where $\bm{\alpha} = \left(\alpha_1,\cdots,\alpha_{d_x}\right) \in \#N^{d_x}$ is a multi-index, $\mathrm{D}^{\bm{\alpha}}$ corresponds to the  \textbf{weak} derivatives  operator $\partial_{x_{1}}^{\alpha_{1}} \ldots \partial_{x_{d_x}}^{\alpha_{d}}$ of order $|\bm{\alpha}| = \alpha_1+\cdots+\alpha_{d_x}$ and $|\bm{\alpha}|_{\infty} = \max\{\alpha_i\}$. It is assumed that the function $\*f \in \^H^{2\beta+2}$ obeys the homogeneous Neumann boundary conditions up to order $\beta$:
	\[
	\left.\partial_{x_{j}}^{2 r+1} \*f\right|_{\partial \Omega_{j}}=0, \quad j=1, \ldots, d_x, \quad r=0, \ldots, \beta-1,
	\]
	where $\partial \Omega_{j}=\left\{\*x \in[-1,1]^{d}: x_{j}=\pm 1\right\}$ is the boundary.
\end{condition}
The above condition depicts a class of continuous functions $\*f \in \mathrm{L}^{2}([-1,1]^{d_x})$ such that $\*f$ and its weak derivatives up to a certain order have finite $L_2$ norm. Note that the Neumann boundary condition of $[-1,1]^{d_x}$ is not harsh, and we can always extend the target function by firstly using the Sine or Cosine functions to introduce the homogeneous Neumann property and then scaling it to the interval $[-1,1]^{d_x}$.

As pointed out by~\cite{barron1993universal}, a standard fully connected neural network with enough, possibly infinite, hidden units can approximate any continuous function in compact domain. For MCN, we have an explicit approximation bound to connect the width and depth in a finite fashion.
\begin{theorem}[Approximation Ability]\label{thm:approximation}
	Let $\*f$ be a vector-valued function that obeys Condition \ref{cond:r2f}, and let $w\geq0,p\geq 0,N\gg 1$ be given numbers. Define $N_d = N \left(\ln N\right)^{d_x -2}$, and denote by $\*f_{\bm{\theta}}$ the output of an MCN. Suppose either $\*f_{\bm{\theta}}$ is of width $\^O(N_d d_x wp\ln p)$ and depth $\^O(l\ln p + N^2)$, or $\*f_{\bm{\theta}}$ has $\^O(d_x wp \ln p)$ width and $\^O(N_d l\ln p + N^2N_d)$ depth. Then $\*f$ can be approximated by MCN with proper parameters, in a sense that:
	\[
	\|\*f_{\bm{\theta}}(\*x) - \*f(\*x)\|_\infty \leq \epsilon,\quad \forall \*x \in [-1,1]^{d_x},
	\]
	where
	\[
	\epsilon =
	\^O\left(d_x2^{d_x}p^2 2^{-wl} + N^{-2\beta-2}\left(\ln N \right)^{d_x-1}\right).
	\]
The number of non-zero parameters in $\bm{\theta}$ is in the order of $\^O\left(N_d\left(d_xw^2p\ln p + N^2\right)\right)$.
\end{theorem}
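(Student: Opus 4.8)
The plan is to couple a \emph{hyperbolic-cross (sparse) spectral expansion} of $\*f$ with an explicit MCN realization of the trigonometric basis, so that every count grows with the cardinality of a hyperbolic cross rather than of a full tensor grid---this is the source of the $N(\ln N)^{d_x-1}$-type savings over the $\^O(N^{d_x})$ of Taylor-based constructions. Since $\*f$ satisfies the homogeneous Neumann conditions of Condition~\ref{cond:r2f}, it admits a multivariate cosine series $\*f(\*x)=\sum_{\*k\in\#N^{d_x}}\*c_{\*k}\prod_{j=1}^{d_x}\cos(\pi k_j x_j)$, and membership in $\^H^{2\beta+2}$ (dominating mixed smoothness $2\beta+2$) forces $|\*c_{\*k}|\lesssim\prod_{j}(1+k_j)^{-(2\beta+2)}$. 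First I would truncate to the hyperbolic cross $\Gamma(N)=\{\*k:\prod_{j}(1+k_j)\le N\}$: a classical lattice-point count gives $|\Gamma(N)|=\^O\!\big(N(\ln N)^{d_x-1}\big)=\^O(N_d\ln N)$, and summing $|\*c_{\*k}|$ over $\#N^{d_x}\setminus\Gamma(N)$ together with the mixed-smoothness embedding $\^H^{2\beta+2}\hookrightarrow\mathrm C^0$ bounds the truncation error by $\^O\!\big(N^{-2\beta-2}(\ln N)^{d_x-1}\big)$, which is exactly the second term of $\epsilon$.

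On the network side I would realize each basis product $\prod_{j}\cos(\pi k_j x_j)$ by an MCN sub-network built from three primitives. (i) For the $g^m$ sawtooth cascades of~\cite{telgarsky2016benefits}, $t+\sum_{i=1}^m g^i(t)$ approximates $t\mapsto t^2$ on $[-1,1]$ with error exponentially small in $m$ (rate $\^O(2^{-2m})$), using a depth-$\^O(m)$, constant-width MCN; calibrating $m$ to the width-$w$, depth-$l$ block budget---the skip term $\sigma_{k+1}(\^A_{k+1}(\*x_{\hat k}))$ inside $\^M_{k+1}$ lets a depth-$l$ block emulate a longer composition---brings this error to $\^O(2^{-wl})$. (ii) Polarization $uv=\tfrac14\big((u+v)^2-(u-v)^2\big)$ upgrades the square to an approximate multiplier on bounded inputs; a binary tree of $d_x-1$ multipliers forms the $d_x$-fold product with the error amplified by at most $\^O(d_x2^{d_x})$. (iii) Each $\cos(\pi k_j\cdot)$ is replaced by its degree-$p$ Chebyshev partial sum, assembled from (i)--(ii) through the Chebyshev recurrence, so that propagating the per-multiplier error over $\^O(p)$ recurrence steps produces the first error term $\^O(d_x2^{d_x}p^22^{-wl})$; this also fixes the width factor $p$ and the per-layer depth $\ln p$, while the one-time cascade needed to reach $N^{-2\beta-2}$-scale fidelity for the high-frequency factors accounts for the additive $N^2$ in the depth. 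The concatenation operator of~(\ref{Eq:LM-CVNN}) is what makes the assembly lossless: placing the $|\Gamma(N)|$ basis sub-networks in parallel and letting $\Psi$ read off $\sum_{\*k\in\Gamma(N)}\*c_{\*k}\prod_j(\cdots)$ gives the wide/shallow variant, while stacking the same sub-networks in series with the highways $\^L_k$ accumulating the running linear combination gives the narrow/deep one---multiplying the depth, and only the depth, by $|\Gamma(N)|/\ln N=\^O(N_d)$. A block-by-block tally ($\^O(w^2)$ connections per layer, $\^O(\ln p)$ layers per cosine factor, $d_x$ factors, over $\^O(N_d)$ basis functions, plus $\^O(N_dN^2)$ for the cascade) then gives the claimed $\^O\!\big(N_d(d_xw^2p\ln p+N^2)\big)$ non-zero parameters, all local.

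The step I expect to be the main obstacle is the \emph{simultaneous control of error accumulation and sparsity}. Each $\cos(\pi k_jx_j)$ is realized only approximately, the approximations are multiplied, and $\^O(N_d\ln N)$ of these approximate products are summed against coefficients $\*c_{\*k}$ that are individually not small, so one must verify that the $\^O(2^{-wl})$ primitive error and the degree-$p$ truncation both survive the $d_x$-fold product (bounded-input stability of the polarized multiplier) and the weighted sum---here one invokes $\sum_{\*k\in\Gamma(N)}|\*c_{\*k}|=\^O(1)$, valid because $2\beta+2>1$ makes the mixed geometric series converge---all while checking that the block-diagonal layout keeps the connection count linear, not quadratic, in the width. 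A secondary, largely bookkeeping, issue is reconciling the two width/depth budgets: both must be shown to host the identical family of sub-networks, differing only in parallel versus serial placement of the $|\Gamma(N)|$ basis modules, so that shifting the factor $N_d$ from the width to the depth is a pure relayout that changes neither $\epsilon$ nor the parameter count.
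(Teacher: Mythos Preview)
Your high-level strategy---hyperbolic-cross (sparse) Fourier truncation combined with an MCN realization of the basis functions via the $x^2$ sawtooth primitive and polarization multipliers---is exactly the paper's route, and your identification of the two error terms (spectral truncation $N^{-2\beta-2}(\ln N)^{d_x-1}$ and primitive error $d_x2^{d_x}p^22^{-wl}$) matches. The one genuine difference is in how $\cos(n\pi x)$ is realized for \emph{large} $n$, and this is also where your proposal has a gap.

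You propose to replace $\cos(\pi k_j x)$ by a ``degree-$p$ Chebyshev partial sum'' and to build it ``through the Chebyshev recurrence.'' Taken literally, a degree-$p$ polynomial with $p$ independent of $k_j$ cannot approximate $\cos(\pi k_j x)$ on $[-1,1]$ once $k_j\gg p$ (it has $k_j$ sign changes); if you instead mean the recurrence $T_{k}(\cos\theta)=\cos(k\theta)$ applied to an approximate $\cos(\pi x)$, then you incur $\Theta(k)$ sequential approximate multiplications, and you must control error growth through that cascade and re-derive the depth. Either way, your sentence attributing the additive $N^2$ in the depth to ``reaching $N^{-2\beta-2}$-scale fidelity'' is not the correct mechanism.

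The paper handles this step differently and quite concretely: it approximates $\cos(\pi x)$ once on $[-\pi,\pi]$ by a degree-$p$ Taylor polynomial on $\^O(1)$ subintervals (this is where the $p^{-p}e^{p}$ residual, dominated by $p^22^{-wl}$, enters), and then obtains $\cos(n\pi x)$ by \emph{periodicity reduction} $\cos(n\pi x)=\cos\big(n\pi x-\lfloor xn^2/2\rfloor\cdot 2\pi/n\big)$, implementing $\lfloor\cdot\rfloor$ exactly with a binary-step activation $\sigma(t)=\mathbf 1\{t\ge 0\}$ via $M_f(t)=\sum_{j}\sigma(t-j)$. That floor sub-network has depth $\^O(n^2)$ and is the sole source of the $+N^2$ in the depth and of the $+N^2$ inside the parameter count $\^O(N_d(d_xw^2p\ln p+N^2))$. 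The paper also recovers $\sin((n-\tfrac12)\pi x)$ for free from the Dirichlet-kernel identity once all $\cos(k\pi x)$, $k\le n$, are available. If you want your Chebyshev-recurrence variant to reproduce the stated bounds you will need either this floor trick (and hence the non-standard step activation), or a separate argument giving depth $\^O(N^2)$ and uniform error $\^O(2^{-wl})$ through $\Theta(N)$ chained approximate multiplications---neither of which is in your proposal as written.
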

\begin{proof}[Proof Sketch] We first construct the shallow MCNs that approximate $\sin(n\pi x)$ and $\cos(n\pi x)$ for different $n \in \#N$ exponentially fast. Then we can obtain a multivariate function $\phi_{\*n} \coloneqq \prod_{i=1}^{d_1} \sin(n_i\pi x)\prod_{k=d_1+1}^{d_x}\cos(n_k\pi x)$ by an MCN of $\^O(\ln d_x)$ depth, where $d_1 \leq d_x$ and $\*n \in \#N^{d_x}$. Since the set $\{\phi_{\*n}, \*n\in \#N^{d_x}\}$ is a Fourier orthogonal basis for $\mathrm{L}^{2}([-1,1]^{d_x})$, we can prove that $N(\ln N)^{d_x -2}$ sub-MCNs suffice to approximate the target function, where $N = \prod_i n_i$. More detailed proofs can be founded in the supplementary material.
\end{proof}

Remarkably, Theorem \ref{thm:approximation} shows that MCN requires only a parameter complexity of $\^O\left(d_xN(\ln N)^{d_x-2}\right)$ to approximate the target function, which is dramatically lower than the $\^O(d_x^{d_x}N^{d_x})$ required by deep ReLU~\cite{yarotsky2017error}. This is mainly benefited from our analysis techniques. Unlike the analyses in~\cite{lu2020deep,yarotsky2018optimal}, which split the input space into small hyper-cubes and use a local network to approximate the Taylor expansion on those hyper-cubes, our analysis is built upon high-dimensional Fourier expansions and can therefore obtain higher decay rate for the approximation residual. Besides, the special network architecture of MCN is another cause of the advantage of lower complexity. Namely, the maximum operator makes the power of the decay term for approximating underlying polynomial  be in the order of width$\times$depth. By contrast, the decay power is just proportional to the depth in deep ReLU.

In summary, Theorem \ref{thm:approximation} illustrates that MCN with highly sparse connectivity between neurons can produce good approximation performance. This forms good basis for establishing tight generalization bound and eliminating bad local minima, as will be shown soon.
\subsection{Generalization Bound}
Theorem \ref{thm:approximation} ensures the existence of a good predictor when MCN goes deeper and wider. Now, one natural question is: does the generalization bound also shrink as the network becomes deeper? To analyze the generalization ability of DNNs or any other learning methods, it is indeed necessary to make some assumptions about the data. In this subsection, we set $d_y =1$ and assume that $\*x_i \in [0,1]^{d_x}$ for $i = 1,\cdots,n$. We consider the nonparametric regression task, i.e., there exists a target oracle function $f_0$ such that
\begin{equation}\label{eq:reg}
	y_i = f_0(\*x_i) + \varepsilon_i,\quad i = 1,\cdots,n,
\end{equation}
where the noise terms $\varepsilon_i$'s are assumed to be i.i.d. Gassuian and independent of $\*x_i$.
\par
Denote the function class of our MCN as
\[
\^F(\bm{\theta}, s)\coloneqq \left\{f_{\bm{\theta}}: \operatorname{Supp}\left(\bm{\theta}\right)<s, \|\bm{\theta}_k\|^2_{F} < \infty, \forall k\leq l \right\},
\]
where $\|\bm{\theta}_k\|_{F} $ is the Frobenius norm of all the parameters at the $k$-th layer, and the operator $\operatorname{Supp}(\cdot)$ denotes the support of a set, i.e., $\operatorname{Supp}\left(\bm{\theta}\right)$ is the number of non-zero parameters in MCN. The boundness assumption of $\operatorname{Supp}\left(\bm{\theta}\right)<s$ is made on the basis of Theorem \ref{thm:approximation}, which shows that MCN with sparse connections can possess strong approximation ability. For convenience, we consider the case where the structure of $\^F(\bm{\theta}, s)$ is deterministic, i.e., the input layer of $\^A_k(\cdot)$ is the same for all MCNs in $\^F(\bm{\theta}, s)$. Denote by $\^N\left(\delta, \^F(\bm{\theta}, s), \|\cdot\|_1\right)$ the minimal number of $\ell_1$-balls with radius $\delta$ that covers $\^F(\bm{\theta}, s)$. The logarithm of $\^N\left(\delta, \^F(\bm{\theta}, s), \|\cdot\|_1\right)$ is also called the \emph{covering number} for convenience. For an operator $\^A$, $\|\^A\|_{1}$ denotes its $\ell_1$ norm induced by the vector $\ell_1$ norm, namely $\|\^A\|_{1} = \max_{\*x \ne 0} \frac{\|\^A(\*x)\|_{1}}{\|\*x\|_1}$. Then we have the following theorem to bound the covering number (i.e., $\ln \^N\left(\delta, \^F(\bm{\theta}, s), \|\cdot\|_1\right)$).
\begin{theorem}[Covering Number of MCN]\label{thm:covering}
	Assume that the activation function $\sigma_k(\cdot)$ is $\rho_k$-Lipschitz and $\rho_k\leq \rho$ for $k=1,\cdots,l$. Then one block of MCN is $\kappa_k$-Lipschitz continuous w.r.t. the input layers and
	\[
	\kappa_{k} = \left( 1  +  \max\{\rho_{k},2 \}\|\bm{\theta}_k\|_{1}  \right),
	\]
	where
	\[
	\|\bm{\theta}_k\|_{1} \coloneqq \max\{ \|\^{\tilde{A}}_{k}\|_{1}, \|\^A_{k}\|_{1}, \|\^W_{k}+\^L_{k}\|_{1} \}.
	\]
	Moreover, we have
	\[
	\ln \mathcal{N}\left(\^F(\bm{\theta},s), \delta,\|\cdot\|_{1}\right) \leq\^O\left(sl \ln\left(\frac{\rho  w \prod_{k=1}^l \kappa_k}{\delta}\right)  \right),
	\]
	where $w$ and $l$ are the width and depth of MCN, respectively.
\end{theorem}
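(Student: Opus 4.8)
The plan is to establish the two assertions in sequence: first the per-block Lipschitz estimate yielding the constant $\kappa_k$, and then to convert it into the covering-number bound through a standard bounded-parameter discretization argument. Throughout I work with the vector $\ell_1$ norm and the induced operator $\ell_1$ norm, for which: a linear map obeys $\|\^A(\*u)-\^A(\*u')\|_1\le\|\^A\|_1\|\*u-\*u'\|_1$; vertical concatenation obeys $\|[\*a;\*b]-[\*a';\*b']\|_1=\|\*a-\*a'\|_1+\|\*b-\*b'\|_1$; the elementwise maximum obeys $\|\max\{\*a,\*b\}-\max\{\*a',\*b'\}\|_1\le\|\*a-\*a'\|_1+\|\*b-\*b'\|_1$ (from $|\max\{s,t\}-\max\{s',t'\}|\le\max\{|s-s'|,|t-t'|\}$ applied coordinatewise); and an $r$-Lipschitz elementwise activation is $r$-Lipschitz in $\ell_1$. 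Since the data lie in the compact cube $[0,1]^{d_x}$ and the weights are bounded, the skip activation $\gamma=\exp$ is Lipschitz on the relevant bounded range, with constant absorbed into the factors below.

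For the first assertion I would unfold one block of~\eqref{Eq:LM-CVNN} and track how a perturbation of its input layers $\*x_k$, $\*x_{\hat{k}}$, $\*x_0$ propagates to $\*x_{k+1}$. The concatenation with $\^L_{k+1}(\*x_k)$ contributes $\|\^L_{k+1}\|_1\|\Delta\*x_k\|_1$ in the first block of coordinates; inside $\^M_{k+1}$ the maximum against $\sigma_{k+1}(\^A_{k+1}(\cdot))$ contributes the branch $\|\^W_{k+1}\|_1\|\Delta\*x_k\|_1$ and the branch $\rho_{k+1}\|\^A_{k+1}\|_1\|\Delta\*x_{\hat{k}}\|_1$; and the additive skip contributes (Lipschitz constant of $\gamma\circ\^{\tilde{A}}_{k+1}$) $\cdot\,\|\Delta\*x_0\|_1$. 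Collecting these, using $\max\{p,q\}\le p+q$ on the two maximum branches, recombining the $\^L_{k+1}$ and $\^W_{k+1}$ contributions through $\|\^W_{k+1}+\^L_{k+1}\|_1$, absorbing the coordinatewise-maximum factor into the constant $2$, and upper-bounding every operator norm in play by $\|\bm{\theta}_k\|_1=\max\{\|\^{\tilde{A}}_k\|_1,\|\^A_k\|_1,\|\^W_k+\^L_k\|_1\}$ gives $\kappa_k=1+\max\{\rho_k,2\}\|\bm{\theta}_k\|_1$, where the leading $1$ carries the identity-type contribution through the block. Iterating over layers, and noting that $\hat{k}\le k$ rules out any feedback so that the recursion genuinely telescopes while the $\*x_0$-skips only append further factors no larger than the corresponding $\kappa$'s, shows the whole MCN is $\bigl(\prod_{k=1}^l\kappa_k\bigr)$-Lipschitz from input to output.

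For the second assertion I would use the classical fact that a function class parameterized by finitely many uniformly bounded weights, on which the weight-to-function map is Lipschitz, has covering number of order (number of parameters) $\times\ln$(Lipschitz constant$/\delta$). Fix the data $\{\*x_i\}$. Because the support pattern of $\^F(\bm{\theta},s)$ is deterministic and $\operatorname{Supp}(\bm{\theta})<s$, it suffices to discretize $s$ scalar parameters, each confined to some interval $[-B,B]$ (the $\ell_1$-norm bounds on $\^{\tilde{A}}_k,\^A_k,\^W_k,\^L_k$ that enter $\kappa_k$, together with the Frobenius constraint, make $B$ finite). An entrywise perturbation of the weights of layer $j$ by $\delta'$ changes each operator there by at most $w\delta'$ in operator $\ell_1$ norm; since the intermediate activations $\|\*x_{j,i}\|_1$ are uniformly bounded on the compact input domain and the resulting discrepancy is then amplified by the later blocks and the activations by at most $\^O\bigl(\rho\prod_{k>j}\kappa_k\bigr)$, it changes every output $f_{\bm{\theta}}(\*x_i)$ by at most $\delta'\cdot\^O\bigl(\rho w\prod_{k=1}^l\kappa_k\bigr)$. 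Covering layer by layer so that the $l$ layers jointly form a $\delta$-net, i.e.\ taking for each layer a grid of mesh $\delta/\^O\bigl(l\rho w\prod_k\kappa_k\bigr)$ over at most $s$ parameters in $[-B,B]$, yields $\ln\^N\le l\cdot s\cdot\ln\!\bigl(\^O(\rho w\prod_k\kappa_k)/\delta\bigr)=\^O\bigl(sl\ln(\rho w\prod_{k=1}^l\kappa_k/\delta)\bigr)$, as claimed.

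The main obstacle is the first assertion, and specifically pinning the per-block estimate down to \emph{exactly} $\kappa_k=1+\max\{\rho_k,2\}\|\bm{\theta}_k\|_1$. One must be careful about (i) the interplay of the concatenation with the maximum, since the $\^L_{k+1}$-highway and the $\^W_{k+1}$-branch feed disjoint coordinates of $\*x_{k+1}$ and therefore have to be merged through $\|\^W_k+\^L_k\|_1$ rather than bounded separately; (ii) the three distinct input layers $\*x_0,\*x_{\hat{k}},\*x_k$, whose contributions must be reorganized so as not to inflate the constant; and (iii) the skip activation $\gamma=\exp$, which is not globally Lipschitz and is tamed only by the compactness of the input domain and the boundedness of the weights --- the very same boundedness that the discretization in the second assertion needs in order to be finite, so the two parts of the argument share this technical core.
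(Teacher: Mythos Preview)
Your overall strategy is right, but the per-block Lipschitz derivation has a genuine gap that you yourself flag as ``the main obstacle.'' The paper resolves it by a device you have not found: it introduces the augmented state $\*y_k := [\*x_0;\*x_1;\ldots;\*x_k]$ and rewrites the block as $\*y_{k+1} = \^G_k(\*y_k) := [\*y_k;\,\sigma_{\mathrm{MCN}}(\^T_{k+1}\*y_k)]$, where $\^T_{k+1}$ stacks $\^{\tilde A}_{k+1}, \^A_{k+1}, \^W_{k+1}, \^L_{k+1}$ as a block-sparse operator acting on the appropriate slices of $\*y_k$. With this reformulation the ``$+1$'' in $\kappa_k$ is immediate --- it is exactly the contribution $\|\*y_k-\*y_k'\|_1$ from the preserved prefix --- and the three separate input layers $\*x_0,\*x_{\hat k},\*x_k$ are handled uniformly since all are coordinates of $\*y_k$. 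The paper then bounds the new part via $\max\{a,b\}=\mathrm{ReLU}(a-b)+b$, which produces the terms $\|\*p-\*p'\|_1+2\|\*r-\*r'\|_1+\rho_{k+1}\|\*q-\*q'\|_1+\|\*s-\*s'\|_1$ and hence the factor $\max\{\rho_{k+1},2\}\cdot\|\^T_{k+1}\|_1$, with $\|\^T_{k+1}\|_1 = \max\{\|\^{\tilde A}\|_1,\|\^A\|_1,\|\^W+\^L\|_1\}$ by block-sparsity (the $\^W$ and $\^L$ rows act on the \emph{same} slice $\*x_k$, which is why their column sums combine). Your attempt to ``merge $\^L_{k+1}$ and $\^W_{k+1}$ through $\|\^W_k+\^L_k\|_1$'' starting from disjoint \emph{output} coordinates, and your reading of the ``$+1$'' as an unspecified ``identity-type contribution,'' do not go through without this augmented-state viewpoint.

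For the covering number you take a somewhat different route. You read ``deterministic structure'' as fixing the sparsity pattern and then discretize $s$ scalars layer by layer; the paper instead keeps the sparsity pattern free, uses a telescoping sum to get $\|\*f_{\bm\theta_1}(\*x)-\*f_{\bm\theta_2}(\*x)\|_1\le\rho\, l\,\|\*x\|_1\prod_k\kappa_k\cdot\varepsilon$ for $\|\bm\theta_1-\bm\theta_2\|_1\le\varepsilon$, and then multiplies by the number $\binom{w^2l}{s}\le(w+1)^{ls}$ of sparsity patterns --- this pattern count is where the extra factor $l$ in the final bound actually comes from. Your layer-by-layer accounting lands at the same order, but the justification for the $l$ is loose: with a fixed pattern there are only $s$ (not $sl$) scalars to grid. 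Finally, your concern about $\gamma=\exp$ is more careful than the paper's own proof, which silently treats the skip as linear in its $\sigma_{\mathrm{MCN}}$ reformulation.
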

The above theorem shows that the covering number of MCN is $\^O\left(sl^2 \ln \left(w/\delta\right)\right)$, where $s = \Theta\left(d_xN(\ln N)^{d_x-2}\right)$. By contrast, to achieve the same approximation accuracy, deep ReLU needs a covering number of $\^O\left(s'l\ln \left(s'w^2l/\delta\right)\right)$, with $s' = \Theta(d_x^{d_x}N^{d_x})$. In the situation of high-dimensional data, i.e., $d_x$ is large, it is clear that MCN has much smaller covering number than deep ReLU, which means that the model complexity of MCN is much lower. Due to this, MCN provably owns good generalization performance, as shown in the following.
\begin{corollary}[Generalization Bound]\label{corollary:risk}
	Consider the regression problem in (\ref{eq:reg}) and assume $\max_{\*x \in [0,1]^{d_x}} f_0(\*x)<\infty$. Let $f_{\operatorname{M}}$ be any MCN from $\^F(\bm{\theta}, s)$, and define
	\[
	\ell_n(f) \colon = \frac{1}{n}\sum_{i=1}^n \left(y_i - f(\*x_i)\right)^2,
	\]
	\[
	\Delta_n\coloneqq \#E_{f_0}\left[ \ell_n(f_{\operatorname{M}})-\inf_{f\in \^F}\ell_n(f)\right],
	\]
	 where $\#E_{f_0}$ is the expectation taken with respect to the samples generated from the regression model (\ref{eq:reg}). Define
	 \[
	 \operatorname{dis}(f_{\operatorname{M}},f_0) \coloneqq
	 \#E_{f_0}\left[\left(f_{\operatorname{M}}(\*x) - f_{0}(\*x)\right)^2\right].
	 \]
	 Then, we have
	 \[
	 \operatorname{dis}(f_{\operatorname{M}},f_0) \leq \^O\left(\Delta_n + \inf_{f\in \^F}\operatorname{dis}(f,f_0)+ \frac{sl^2 \ln \left(w n\right)}{n}\right).
	 \]
\end{corollary}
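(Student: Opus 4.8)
The plan is to prove Corollary~\ref{corollary:risk} as an oracle inequality in the spirit of classical nonparametric regression with least squares, decomposing the population risk $\operatorname{dis}(f_{\operatorname{M}},f_0)$ into three parts: an \emph{approximation} part controlled by $\inf_{f\in\^F}\operatorname{dis}(f,f_0)$, an \emph{optimization slack} part controlled by $\Delta_n$ (which by definition is the expected excess empirical risk of $f_{\operatorname{M}}$), and a \emph{stochastic estimation} part controlled by the covering number of $\^F(\bm{\theta},s)$ supplied by Theorem~\ref{thm:covering}. The role of Theorem~\ref{thm:approximation} is only to justify that the sparsity level $s=\Theta(d_xN(\ln N)^{d_x-2})$ is large enough that the class $\^F(\bm{\theta},s)$ actually contains a good approximant, so that the approximation term is small; the analytic work is entirely in the stochastic term.

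First I would write the basic inequality. For any $f\in\^F(\bm{\theta},s)$, expanding the squares in $\ell_n(f_{\operatorname{M}})$ and $\ell_n(f)$ and substituting $y_i=f_0(\*x_i)+\varepsilon_i$ gives, with $\|g\|_n^2:=\frac1n\sum_{i=1}^n g(\*x_i)^2$,
\[
\|f_{\operatorname{M}}-f_0\|_n^2 \;\le\; \|f-f_0\|_n^2 \;+\;\bigl(\ell_n(f_{\operatorname{M}})-\ell_n(f)\bigr)\;+\;\tfrac{2}{n}\textstyle\sum_{i=1}^n\varepsilon_i\bigl(f_{\operatorname{M}}(\*x_i)-f(\*x_i)\bigr).
\]
Choosing $f$ to (nearly) attain $\inf_{f\in\^F}\ell_n(f)$ and then taking $\#E_{f_0}$ makes the middle term at most $\Delta_n$ and the first term at most $\inf_{f\in\^F}\operatorname{dis}(f,f_0)$ up to an empirical-vs-population discrepancy. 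The crux is therefore to control the Gaussian cross term uniformly over $f_{\operatorname{M}}\in\^F(\bm{\theta},s)$.

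For the cross term I would run a standard chaining/peeling argument. Conditionally on $\{\*x_i\}$, the variable $\frac1n\sum_i\varepsilon_i g(\*x_i)$ is sub-Gaussian with proxy $\sigma^2\|g\|_n^2/n$; a union bound over a minimal $\delta$-net of $\^F(\bm{\theta},s)$ in $\|\cdot\|_1$—whose log-cardinality is $\^O\!\left(sl^2\ln(\rho w\prod_{k}\kappa_k/\delta)\right)$ by Theorem~\ref{thm:covering}—together with the block-wise Lipschitz continuity of an MCN (again from Theorem~\ref{thm:covering}, bounding the discretization error by $\delta\prod_k\kappa_k$ up to the fixed maps $\^L_k,\^{\tilde A}_k$ and $\Psi$) yields, with high probability,
\[
\sup_{f\in\^F}\frac{\frac1n\sum_i\varepsilon_i\bigl(f_{\operatorname{M}}(\*x_i)-f_0(\*x_i)\bigr)}{\|f_{\operatorname{M}}-f_0\|_n\vee r_n}\;\le\;\^O\!\left(\sigma\sqrt{\tfrac{\ln\^N(\delta)}{n}}\right)
\]
for a suitable resolution $r_n$ and $\delta\asymp 1/n$. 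Peeling over the shells $\{2^jr_n\le\|f_{\operatorname{M}}-f_0\|_n\le 2^{j+1}r_n\}$ and applying $2ab\le\tfrac12 a^2+2b^2$ converts the slow $\sqrt{\ln\^N/n}$ rate into the fast rate $\ln\^N(1/n)/n=\^O\!\left(sl^2\ln(wn)/n\right)$, since $\rho$ and $\prod_k\kappa_k$ (finite because $\|\bm{\theta}_k\|_F<\infty$ for all $k$) enter only inside the logarithm. Absorbing $\sigma^2$ and $\max_{\*x}f_0(\*x)<\infty$ into the $\^O(\cdot)$ constant and paying one more covering-number-controlled uniform-concentration step to pass from $\|\cdot\|_n$ back to $\operatorname{dis}(\cdot,f_0)$ then gives exactly
\[
\operatorname{dis}(f_{\operatorname{M}},f_0)\le\^O\!\left(\Delta_n+\inf_{f\in\^F}\operatorname{dis}(f,f_0)+\tfrac{sl^2\ln(wn)}{n}\right).
\]

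The main obstacle I anticipate is the peeling/localization step that delivers the \emph{fast} $1/n$ rate rather than the naive $1/\sqrt n$ rate: one must choose $r_n$ carefully, control the sub-Gaussian suprema on all shells simultaneously, and ensure the discretization error from the (possibly large) product $\prod_k\kappa_k$ remains inside the logarithm—this is precisely where the explicit $\kappa_k$-formula of Theorem~\ref{thm:covering} and the assumption $\|\bm{\theta}_k\|_F<\infty$ are indispensable. Secondary technical points—the gap between empirical and population $L_2$ norms, and the fact that $f_{\operatorname{M}}$ is only an approximate empirical minimizer—are absorbed, respectively, into the lower-order covering-number term and into $\Delta_n$.
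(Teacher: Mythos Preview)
Your proposal is correct and aligns with the paper's approach: the paper states that Corollary~\ref{corollary:risk} is a direct application of the general statistical oracle inequalities in~\cite{lu2020deep,yarotsky2017error}, with the covering-number bound of Theorem~\ref{thm:covering} supplying the complexity term $sl^2\ln(wn)/n$. Your sketch simply unpacks that black-box oracle inequality into its standard constituents---basic inequality, sub-Gaussian control of the noise cross term over a $\delta$-net, and peeling/localization for the fast rate---which is precisely the machinery the cited references provide.
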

This corollary is indeed a direct application of the general statics generalization inequality in~\cite{lu2020deep,yarotsky2017error}. As we can see, the generalization bound depends on three parts, intuitively described  as $\varepsilon_1+\varepsilon_2 + \varepsilon_3$, where $\varepsilon_1$ is the gap from the obtained training loss to the global minimal one, $\varepsilon_2$ is the approximation error, and $\varepsilon_3$ is the covering number. Notably, Theorem \ref{thm:depth} provides a way to reduce $\varepsilon_1$, and Theorem~\ref{thm:covering} ensures that small $\varepsilon_2$ unnecessarily results in large $\varepsilon_3$.
\par
For nonparametric regression with square loss, when the target function $f_0$ is $\beta$-smooth, it is well-known that the statistically optimal estimation rate in terms of data size is $n^{-\frac{2\beta}{2\beta + d_x}}$~\cite{gine2016mathematical}, also called as \emph{minimax estimation rate}. Owning the minimax estimation rate means that the estimator performs the best in the worst case. Interestingly, when the training data is fitted exactly, MCN also owns this property.
\begin{theorem}[Minimax Estimation Rate]\label{thm:gene}
Suppose that the density $p(\cdot)$ over some compact set $\^C$ satisfies
	\[0 < p_{\min} \leq p(\*x) \leq p_{\max}, \quad \forall \*x \in \^C.\]
Assume that the target function $f_0$ is $\beta$-smooth and let $\ell(\cdot)$ in (\ref{eq:tr_loss}) be the square loss. Denote the final output of our model as $f_{\bm{\theta}}(\*x_i)$, where $\bm{\theta}$ is the learnable parameters of MCN. If $f_{\bm{\theta}}(\*x_i) = y_i$ for $i =1,\cdots,n$, then for any data sample $\*x \in \mathbb{R}^{d_x}$ located in the support set of $p$, the output of MCN satisfies the following with high probability:
	\[
	\mathbb{E}_{\^S^n}\left[	\mathbb{E}_{\varepsilon}\qty[\|f_{\bm{\theta}}\left(\*x\right)-f_0\left(\*x\right)\|^{2} \mid \^S^n]\right] \leq C n^{-\frac{2\beta}{2\beta+d_x}},
	\]
	where $\^S^n = \qty{\qty(\*x_i,y_i))}_{i=1}^n$ and $C > 0$ is a number depends only on the numerical range of the outputs of MCN.
\end{theorem}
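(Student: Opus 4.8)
The plan is to bound the pointwise risk $\#E_{\^S^n}\!\big[\#E_{\varepsilon}[(f_{\bm\theta}(\*x)-f_0(\*x))^2\mid\^S^n]\big]$ directly, not through Corollary~\ref{corollary:risk}: the covering-number route is vacuous here, since interpolating $n$ noisy labels exactly forces $\operatorname{Supp}(\bm\theta)\gtrsim n$, which turns the complexity term $sl^2\ln(wn)/n$ of Corollary~\ref{corollary:risk} into a non-vanishing constant. Instead I would exhibit an interpolating MCN of the form ``accurate low-complexity estimator $+$ thin interpolation spikes'' and show that, at a fixed test point, the spikes are invisible in expectation while the low-complexity part already achieves the minimax rate. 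First I would invoke classical nonparametric theory~\cite{gine2016mathematical}: for a $\beta$-smooth $f_0$ and a random design with $0<p_{\min}\le p\le p_{\max}$ on $\^C$, a local polynomial estimator $\tilde f$ of $\^S^n$ of degree $\lfloor\beta\rfloor$ with bandwidth $h\asymp n^{-1/(2\beta+d_x)}$ obeys $\#E_{\varepsilon}[(\tilde f(\*x)-f_0(\*x))^2\mid\^S^n]\le Cn^{-2\beta/(2\beta+d_x)}$ for every $\*x$ in the support of $p$, provided the design is ``regular'', i.e.\ each $h$-cell holds $\Theta(nh^{d_x})$ points and $\min_{i\ne j}\|\*x_i-\*x_j\|\gtrsim n^{-2/d_x}$. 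Since $p$ is bounded above and below, a union bound over the $\^O(h^{-d_x})$ cells shows this regularity holds with high probability over $\^S^n$, which is the origin of the ``with high probability'' clause in the statement.

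Next I would realize such an estimator inside the MCN family. As $\tilde f$ is a fixed piecewise-polynomial function, Theorem~\ref{thm:approximation} --- or, elementarily, the $\max$-based approximation of $x\mapsto x^2$ from~\cite{telgarsky2016benefits} together with the polarization identity $uv=\frac14[(u+v)^2-(u-v)^2]$ --- gives an MCN $\hat f$ with $\|\hat f-\tilde f\|_\infty\le n^{-3}$. To turn $\hat f$ into an interpolator I would add a correction $\sum_{i=1}^n(y_i-\hat f(\*x_i))\,\psi_i$, where $\psi_i$ is a continuous piecewise-linear ``tent'' supported on the ball $B(\*x_i,r)$ with $\psi_i(\*x_i)=1$; tents and finite sums of them are exactly the kind of maps the maximum-and-concatenation layers represent (Figure~\ref{fig:LM-CVNN}), so $f_{\bm\theta}\coloneqq\hat f+\sum_i(y_i-\hat f(\*x_i))\psi_i$ is again an MCN of finite (if large) depth, and combining Theorems~\ref{thm:depth} and~\ref{thm:approximation} such a deep MCN can indeed drive the training loss to zero. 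By construction $f_{\bm\theta}(\*x_i)=y_i$ for all $i$, and the output range of $f_{\bm\theta}$ is at most $\|\hat f\|_\infty+\max_i|y_i-\hat f(\*x_i)|=\^O(1+\sigma\sqrt{\ln n})$ with high probability, so the final constant $C$ depends only on this numerical range, as claimed.

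To control the risk at a fixed $\*x$, I would write $f_{\bm\theta}(\*x)-f_0(\*x)=(\hat f(\*x)-f_0(\*x))+\sum_i(y_i-\hat f(\*x_i))\psi_i(\*x)$ and choose $r$ small enough that the balls $B(\*x_i,r)$ are pairwise disjoint (possible with high probability because $\min_{i\ne j}\|\*x_i-\*x_j\|\gtrsim n^{-2/d_x}$). Then the correction term is nonzero only when $\*x$ falls into the single ball $B(\*x_{i^\ast},r)$, an event of probability at most $p_{\max}\,n\,\operatorname{vol}(B(0,r))\lesssim nr^{d_x}$ over $\^S^n$; on that event the squared error is $\^O((1+\sigma\sqrt{\ln n})^2)$ by the range bound, and off it $f_{\bm\theta}(\*x)=\hat f(\*x)$. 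Hence
\[
\#E_{\^S^n}\!\big[\#E_{\varepsilon}[(f_{\bm\theta}(\*x)-f_0(\*x))^2\mid\^S^n]\big]\le 2\,\#E\big[(\hat f(\*x)-f_0(\*x))^2\big]+\^O\big((1+\sigma^2\ln n)\,nr^{d_x}\big),
\]
and since $\|\hat f-\tilde f\|_\infty\le n^{-3}$ the first term is at most $Cn^{-2\beta/(2\beta+d_x)}+o(n^{-1})$. Taking $r$ so small that $nr^{d_x}\le n^{-2\beta/(2\beta+d_x)}$ --- any $r\asymp n^{-3}$ works and leaves the output range unchanged --- closes the estimate, while the contribution of the (probability at most $n^{-10}$) non-regular designs is absorbed into the same order via the uniform range bound.

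The main obstacle is the genuine tension between \emph{exact} interpolation of noisy labels and a \emph{vanishing} risk rate: the $n$ spikes forced by interpolation carry $\Theta(\sigma)$ height and therefore cannot be made negligible in $L^2$; they can only be made narrow enough that a \emph{fixed} test point misses all of them with overwhelming probability --- which is exactly why the conclusion is pointwise and in expectation rather than an $L^2$ bound. A secondary difficulty is the bookkeeping needed to make the construction rigorous: verifying that ``local-polynomial estimator $+$ $n$ disjoint tents'' is a single MCN of finite depth and width with the claimed interpolation property and a controlled output range (which uses only the piecewise-smooth modelling power of the maximum-and-concatenation layers together with Theorem~\ref{thm:approximation}), and tracking the high-probability design-regularity event, which is where the density bounds $p_{\min},p_{\max}$ enter. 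Finally, since the statement speaks of the interpolator actually produced by the model, one must additionally argue --- via the implicit regularizer mentioned in the introduction, or by restricting attention to the minimum-complexity interpolator --- that the learned solution is of the benign ``smooth-plus-thin-spikes'' type rather than a pathological bounded interpolator; making this last point fully rigorous is, in my view, the most delicate part of the argument.
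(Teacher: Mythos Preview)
Your route is genuinely different from the paper's, and the gap you name in your final paragraph is exactly where the two part ways. You construct a \emph{specific} benign interpolator (local-polynomial estimator plus thin tents) and show it achieves the rate; the paper instead analyses the piecewise structure of \emph{whatever} MCN fits the data. It observes that an interpolating MCN partitions $\^C$ into cells $\^C_s$ on each of which $f_{\bm\theta}(\*x)=\*w_s^\top\sigma(\*A_s\*x)+\*b_s^\top\*x$, with the cell containing $\*x$ determined by a bounded set $\^V(\*x)$ of training points, $d_x\le|\^V(\*x)|\le(d_x+1)d_a+d_x$. Since $f_{\bm\theta}$ interpolates those points and the local map is affine in its parameters, one can solve a small linear system for weights $\bar{\*w}$ with $\sum_i\bar w_i=1$ and write $f_{\bm\theta}(\*x)=\sum_iW(\*x,\*x_i)\bigl(f_0(\*x_i)+\varepsilon_i\bigr)$ --- i.e.\ the trained interpolator is itself a local weighted-averaging estimator in the sense of~\cite{belkin2018overfitting}. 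A bias--variance decomposition conditioned on the event $\^E=\{\operatorname{diam}(\^C_s)\le h\}$, a smallest-singular-value bound giving $\|\bar{\*w}\|^2\lesssim1/|v|$, and the estimate $P(\bar{\^E})\le\exp(-Cp_{\min}nh^{d_x})$ then balance at $h\asymp n^{-1/(2\beta+d_x)}$.

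So the paper sidesteps your implicit-bias worry by never choosing an interpolator: the piecewise-smooth structure of \emph{any} interpolating MCN is claimed to force the local-averaging representation. Your construction, by contrast, proves only that the rate is \emph{achievable} within the MCN class, and --- as you correctly diagnose --- says nothing about the model actually trained without a separate implicit-regularisation argument that you do not supply. What your approach buys is modularity (the statistical content is cleanly separated from the architecture) and independence from the paper's rather lightly justified claims about cell diameters and the cardinality of $\^V(\*x)$; but as a proof of the statement as written it has the real gap you identify, and the paper's local-averaging viewpoint is what is meant to close it.
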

In general, the above theorem confirms the phenomenon that over-parameterized DNNs may not necessarily cause over-fitting~\cite{pmlr-v89-belkin19a,belkin2018overfitting}. For Theorem~\ref{thm:gene} to hold, the training error has to be reduced to zero. This can actually be accomplished by using the techniques in~\cite{gasca2000polynomial} to link together Theorem~\ref{thm:depth} and Theorem~\ref{thm:approximation}, as will be shown in next subsection.

\textbf{Remark 2}: One may worry about the ``exact fitting'' assumption may not be satisfied since the noise belongs to an unbounded distribution.  
The derivatives or weights of DNN may diverge to infinity as $n \to \infty$.
However, this may not be a problem and exact fitting can easily happen under mild condition.
On the one hand, Gaussian distribution has an exponential decay tail. 
Thus, we can approximately treat it as bounded.
On the other hand, some recent results~\cite{arora2019fine,du2018gradient1,allen2018learning,liang2020multiple,ma2019generalization} show that the DNNs, having universal approximation ability, can easily fit the Gaussian noise without any weight diverging. Even more, exact fitting can happen near the initial state of DNN as long as it is wide or deep enough; the depth or width is in the polynomial order of $n$. 
For MCN, we already prove its approximation ability in Theorem \ref{thm:approximation}. 
Following the same road-map, we can conclude that its parameters do not diverge in the exact fitting case.\\
\textbf{Remark 3}: We remark that the estimator $f_{\bm{\theta}}$ does not belong to the $\beta$-smooth function class (its smoothness depends on the architecture and activation function). 
In conclusion, even though $f_{\bm{\theta}}$ is not $\beta$-smooth and fits the data exactly, it attains optimal excess loss rates. We refer the readers to~\cite{rakhlin2017empirical} for further discussion of optimal rates in non-parametric estimation and statistical learning.
\\
\textbf{Remark 4}: For any $\*x_i \in \^S^n$, \
\[
\mathbb{E}_{\epsilon_i}\left[\|f_{\bm{\theta}}\left(\*x_i\right)-f_0\left(\*x_i\right)\|^{2} \mid \^S^n\right] = 1.
\]
However,
\[
\mathbb{E}_{\^S^n}\left[	\mathbb{E}_{\varepsilon}\qty[\|f_{\bm{\theta}}\left(\*x_i\right)-f_0\left(\*x_i\right)\|^{2} \mid \^S^n]\right] \to 0, \text{ as } n \to \infty,
\]
due to the measure of a specific point is $0$.
\begin{figure*}[!htp]
	\centering 
	\includegraphics[width=1.0\textwidth]{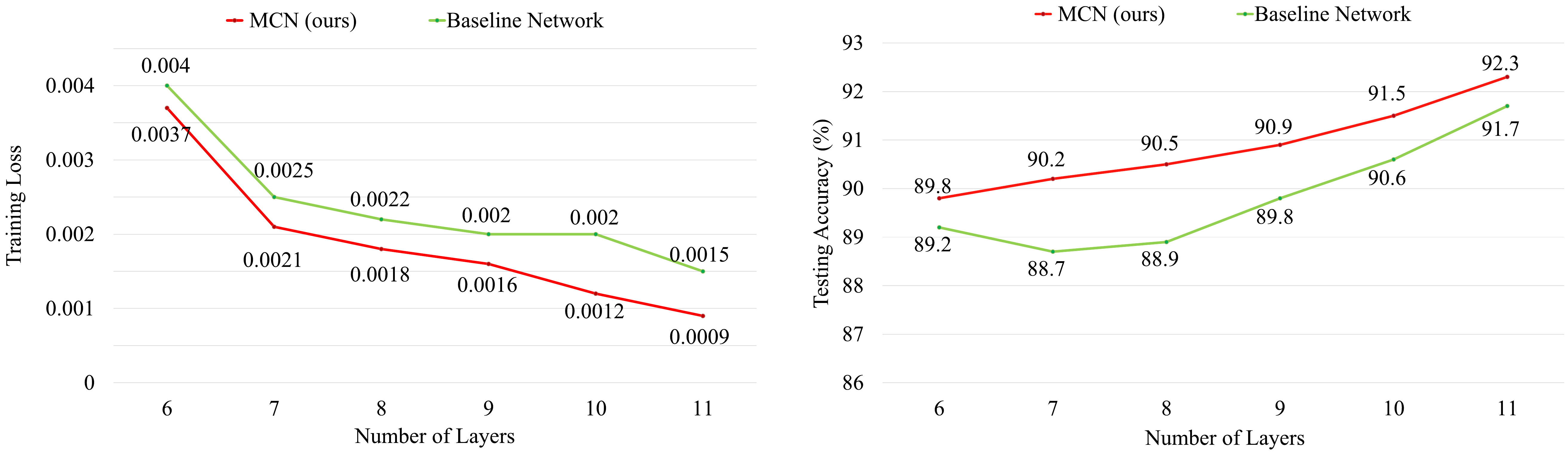}
	\caption{Left: Training loss of our MCN~(red) and baseline network~(green) with various number of layers. Right: Testing accuracy.}
	\label{fig:EXP-LM-CVNN}
\end{figure*}

\subsection{No Bad Local Minima}\label{sec:nobadlocal}
As aforementioned, under mild technical conditions, the training error produced by MCN can be arbitrarily small when the network is deep enough.
\begin{corollary}[Optimal Training Error]\label{corollary:zerotraining} Suppose that the loss function $\ell(\cdot)$ is differentiable and strongly convex. Denote by $\bm{\theta}_{l}$ any local minimum of an $l$-layer MCN. For any $\epsilon>0$, there exists a $D\in \#N$ such that $L(\bm{\theta}_{l}) \leq \epsilon$ holds for any $l>D$.
\end{corollary}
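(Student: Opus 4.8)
The plan is to derive the corollary by combining the depth-monotonicity of the optimal training error (Theorem~\ref{thm:depth}) with the fitting power of MCN (Theorem~\ref{thm:approximation}), in the spirit of the link via \cite{gasca2000polynomial} mentioned after Theorem~\ref{thm:gene}. Fix once and for all an injection $\Psi$ and a constant layer width $w$ (the one produced in Theorem~\ref{thm:approximation}, with zero-padding if needed), keep the hypotheses of Theorem~\ref{thm:depth} in force (in particular $\gamma=\exp$ and $d_{k+1}=d_k$ for the layers we append), and set $a_l \coloneqq \min_{\bm{\theta}_l} L(\bm{\theta}_l)$ for the global minimal training loss of an $l$-layer MCN of width $w$. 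Since every global minimum is also a local minimum, Theorem~\ref{thm:depth} gives $a_{l+1}\le a_l$ for all $l$, so $(a_l)$ is non-increasing; moreover, for any $l$, \emph{every} local minimum $\bm{\theta}_l$ of an $l$-layer MCN satisfies $L(\bm{\theta}_l)\le a_{l-1}$. Consequently it suffices to exhibit one depth $D\in\#N$ with $a_D\le\epsilon$: then for every $l>D$ and every local minimum $\bm{\theta}_l$ we get $L(\bm{\theta}_l)\le a_{l-1}\le a_D\le\epsilon$.

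To produce such a $D$, I would first rescale so that the distinct inputs $\*x_1,\dots,\*x_n$ lie in the interior of $[-1,1]^{d_x}$. By classical multivariate Lagrange/Hermite interpolation \cite{gasca2000polynomial} there is a polynomial map $\*g\colon[-1,1]^{d_x}\to\mathbb{R}^{d_l}$ with $\Psi(\*g(\*x_i))=\*y_i$ for $i=1,\dots,n$. Following the recipe noted after Condition~\ref{cond:r2f} (introduce the homogeneous Neumann behaviour via Sine/Cosine corrections near $\partial[-1,1]^{d_x}$, then rescale) one modifies $\*g$ away from the sample points into a map $\*f$ that obeys Condition~\ref{cond:r2f} while still satisfying $\Psi(\*f(\*x_i))=\*y_i$. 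Applying Theorem~\ref{thm:approximation} to $\*f$ with an accuracy $\epsilon'$ to be fixed below yields a depth $D$ and parameters $\bm{\theta}^{*}$ of a width-$w$, $D$-layer MCN with $\|\*f_{\bm{\theta}^{*}}(\*x)-\*f(\*x)\|_\infty\le\epsilon'$ on all of $[-1,1]^{d_x}$, hence $\|\*f_{\bm{\theta}^{*}}(\*x_i)-\*f(\*x_i)\|_\infty\le\epsilon'$ for each $i$. Because $\Psi$ is continuous and $\ell$ is differentiable (hence continuous), the maps $\*z\mapsto\ell(\Psi(\*z),\*y_i)$ are continuous on a fixed compact set determined by the data and the bounded network outputs, with common value $\ell(\*y_i,\*y_i)=0$ at $\*z=\*f(\*x_i)$ (the lower bound $0$ of $\ell$ being attained on the diagonal, which the strongly convex differentiable $\ell$ pins down as its unique minimizer). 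A uniform modulus of continuity then shows that choosing $\epsilon'$ small enough forces $\tfrac1n\sum_{i=1}^n\ell(\Psi(\*f_{\bm{\theta}^{*}}(\*x_i)),\*y_i)\le\epsilon$, i.e. $L(\bm{\theta}^{*})\le\epsilon$ and so $a_D\le\epsilon$. Combined with the first paragraph, this proves the claim.

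The main obstacle is the middle step: converting the purely finite interpolation constraint $\Psi(\*f(\*x_i))=\*y_i$ into a \emph{globally smooth} target function that meets both the Sobolev regularity and the homogeneous Neumann boundary conditions of Condition~\ref{cond:r2f}, and checking that the MCN delivered by Theorem~\ref{thm:approximation} is an admissible competitor in the minimization defining $a_D$ (constant width, compatibility of the input rescaling with the first linear map, and of the chosen $\Psi$ with the last map, so that Theorem~\ref{thm:depth} applies verbatim when appending further layers). By contrast, the passage from a sup-norm approximation guarantee to a small training loss is routine continuity, and the depth-monotonicity is immediate. I would also remark that for the bound $L(\bm{\theta}_l)\le\epsilon$ itself only the non-strict inequality of Theorem~\ref{thm:depth} (hence mere convexity of $\ell$) is needed; the strong convexity hypothesis is what guarantees the stronger qualitative picture that the loss strictly decreases with depth until the data are exactly fit, and it also fixes the loss normalization used above.
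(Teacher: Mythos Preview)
Your proposal is correct and follows essentially the same route the paper uses (made explicit in the proof of Theorem~\ref{thm:full}): invoke Theorem~\ref{thm:depth} to reduce to bounding the global minimum at a single depth, use polynomial interpolation \`a la \cite{gasca2000polynomial} (the paper's Claim~\ref{clm:interpolation}) to hit the targets, extend/rescale so that Condition~\ref{cond:r2f} holds, and then apply Theorem~\ref{thm:approximation}. Your write-up is in fact more careful than the paper's own sketch about the role of $\Psi$, the continuity step from sup-norm approximation to small loss, and the admissibility of the constructed network for Theorem~\ref{thm:depth}.
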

The ``no bad local minima'' property of MCN replies on its special network design, and is unnecessarily true for the other DNNs. In the following, we shall introduce two ways to refine an existing DNN that is possibly poorly designed. The first one is straightforward and simply to treat the output of an existing DNN as the input $\*x_0$ to MCN, and the parameters of the existing network are not involved in re-training. In this case, it is easy to obtain the following result:
\begin{corollary}[Partial Training]\label{corollary:part}
For fixed injection $\Psi(\cdot)$  and an existing $l_0$-layer DNN with output $\*h_0$, construct an $l$-layer MCN with $\gamma(\cdot)$ being element-wisely exponential and input $\*x_0=\*h_0$. If $\*h_0$ is an injective function w.r.t. the input $\*x$ and the loss $\ell(\cdot)$ is differentiable and strongly convex, then for any $\epsilon> 0$, there exists a large enough $D \in \#N$, such that $L(\bm{\theta}_{l}) \leq \epsilon$ holds for any local minimum $\bm{\theta}_{l}$ with $l \geq D$.
\end{corollary}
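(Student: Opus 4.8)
The plan is to \emph{reduce Corollary~\ref{corollary:part} to Corollary~\ref{corollary:zerotraining}} by viewing the MCN stacked on top of the frozen DNN as a stand-alone MCN acting on a transformed data set. Concretely, since the $l_0$-layer DNN is not re-trained, its output $\*h_0$ is a fixed map, and the training objective, regarded as a function of the MCN parameters $\bm{\theta}_{l}$ alone, is exactly
\[
L(\bm{\theta}_{l}) = \frac{1}{n}\sum_{i=1}^n \ell\left(\Psi(\*x_{l,i}),\,\*y_i\right),
\]
where $\*x_{l,i}$ denotes the output of the $l$-layer MCN when fed the input $\widetilde{\*x}_i \coloneqq \*h_0(\*x_i)$. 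Hence $L$ coincides with the training loss of a stand-alone $l$-layer MCN on the data set $\widetilde{\^S} = \{(\widetilde{\*x}_i, \*y_i)\}_{i=1}^n$, and every local minimum $\bm{\theta}_{l}$ of the partially-trained model is a local minimum of this stand-alone MCN. So it suffices to check that $\widetilde{\^S}$ meets the standing assumptions under which Corollary~\ref{corollary:zerotraining} holds.

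First I would verify that the $n$ inputs fed to the MCN are \emph{distinct}, which is the only data-dependent requirement in the MCN setting: since $\{\*x_i\}_{i=1}^n$ are distinct by assumption and $\*h_0$ is injective, $\widetilde{\*x}_i = \*h_0(\*x_i) \neq \*h_0(\*x_j) = \widetilde{\*x}_j$ whenever $i \neq j$. The remaining hypotheses of Corollary~\ref{corollary:zerotraining} are already granted: $\gamma(\cdot)$ is element-wise $\exp(\cdot)$, $\ell(\cdot)$ is differentiable and strongly convex, and $\Psi(\cdot)$ is a fixed injection. Applying Corollary~\ref{corollary:zerotraining} to the stand-alone MCN on $\widetilde{\^S}$ then yields a $D\in\#N$ (depending on $\epsilon$, on $\widetilde{\^S}$, and on the output dimension of $\*h_0$) such that $L(\bm{\theta}_{l}) \leq \epsilon$ at every local minimum of an $l$-layer MCN with $l \geq D$; since the local minima of the partially-trained model are of exactly this form, the corollary follows. (If one prefers the strict inequality $l > D$ as stated in Corollary~\ref{corollary:zerotraining}, simply enlarge $D$ by one.)

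The main obstacle is that Corollary~\ref{corollary:zerotraining} is itself obtained by combining Theorem~\ref{thm:depth} with Theorem~\ref{thm:approximation} through the interpolation device of~\cite{gasca2000polynomial}, and Theorem~\ref{thm:approximation} is phrased for target functions on $[-1,1]^{d_x}$ obeying Condition~\ref{cond:r2f}; so the real work is to confirm that this machinery still goes through after the substitution $\*x \mapsto \*h_0(\*x)$. The delicate step is to exhibit, after an affine rescaling of a bounded box containing $\{\widetilde{\*x}_i\}$ onto the hypercube, a function $g$ that interpolates the pairs $(\widetilde{\*x}_i, \*y_i)$ and lies in the modified Sobolev class of Condition~\ref{cond:r2f} (e.g.\ via multivariate Hermite/Lagrange interpolation, followed by a smooth extension and a sine/cosine periodization that installs the homogeneous Neumann property); this is possible \emph{precisely} because the $\widetilde{\*x}_i$ are distinct, which is exactly what the injectivity of $\*h_0$ buys us. Given such a $g$, Theorem~\ref{thm:approximation} supplies a sufficiently deep MCN that approximates $g$ uniformly on the box, and differentiability together with (local) strong convexity of $\ell$ turns this uniform approximation into training loss $\leq\epsilon$; Theorem~\ref{thm:depth}, instantiated with the frozen DNN's output as the MCN input $\*x_0$, then propagates the bound to the local minima of all deeper MCNs. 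I would expect no essentially new difficulty beyond this bookkeeping: injectivity of $\*h_0$ is the single extra ingredient needed to make the ``finitely many distinct points'' premise survive composition with the pre-trained network.
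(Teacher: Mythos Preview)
Your reduction is correct and matches the paper's intended argument. The paper does not give a separate written proof of Corollary~\ref{corollary:part}; it remarks that the result ``is easy to obtain'' once the frozen DNN is viewed as a fixed preprocessing map, and the explicit argument it gives for the closely related Theorem~\ref{thm:full} follows exactly the route you describe: treat the problem as an MCN on the transformed inputs $\*h_0(\*x_i)$, use injectivity of $\*h_0$ to retain distinctness, invoke polynomial interpolation (Claim~\ref{clm:interpolation}) to produce an interpolant that can be massaged into Condition~\ref{cond:r2f}, and then combine Theorem~\ref{thm:depth} with Theorem~\ref{thm:approximation} to drive the loss to zero as depth grows.
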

In above corollary, the existing DNN is assumed to be fixed and MCN is simply applied to its output. Actually, it is also feasible to re-train all the parameters, including the parameters of both the existing network and the appended MCN blocks.
\begin{theorem}[Full Training]\label{thm:full}
For fixed injection $\Psi(\cdot)$ and an existing $l_0$-layer DNN with output $\*h_0$, append an $l$-layer MCN at its end with $\gamma(\cdot)$ being element-wisely exponential, resulting in a new model $\*h_l$. Suppose that the loss $\ell(\cdot)$ is differentiable and strongly convex, and there exist parameters that make $\*h_0$ be injective. Then, for any $\epsilon> 0$, there exists a large enough $D \in \#N$ such that
	\[
	\frac{1}{n}\sum_{i=1}^n\ell(\Phi\left(\*h_l(\*x_i)\right), \*y_i) \leq \epsilon
	\]
	holds at any local minimum $\*h_l$ with $l\geq D$.
\end{theorem}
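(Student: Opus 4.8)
\emph{Proof proposal.} Being a ``local minimum of $\*h_l$'' means being a local minimum of the joint objective $F_l(\bm\psi,\bm\theta_l):=\frac1n\sum_{i=1}^n\ell\big(\Psi(\*h_l(\*x_i;\bm\psi,\bm\theta_l)),\*y_i\big)$ over \emph{both} the parameters $\bm\psi$ of the pre-existing DNN and the parameters $\bm\theta_l$ of the appended MCN (with $\Phi=\Psi$ the fixed injection). The plan is to replay, for this enlarged model, the mechanism behind Theorem~\ref{thm:depth} and Corollary~\ref{corollary:zerotraining}. The first step is the elementary observation that, at a local minimum $(\bm\psi^\star,\bm\theta_l^\star)$ of $F_l$, the restriction $\bm\theta_l\mapsto F_l(\bm\psi^\star,\bm\theta_l)$ is exactly the ordinary $l$-layer MCN objective on the \emph{induced} training set $\{(\*z_i^\star,\*y_i)\}_{i=1}^n$, $\*z_i^\star:=\*h_0(\*x_i;\bm\psi^\star)$, and $\bm\theta_l^\star$ is a local minimum of it. We are therefore almost in the situation of Corollary~\ref{corollary:part}; the only gap is that the induced inputs $\*z_i^\star$ need not be distinct when $\bm\psi^\star$ is degenerate, and the whole point of re-training $\bm\psi$ is to close this gap.

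I would handle this by a depth induction on the combined network. Read ``pre-existing DNN $+$ $m$-layer MCN'' as a single architecture whose last block is precisely one MCN layer with $\gamma(\cdot)=\exp(\cdot)$, and choose the widths of the appended MCN so that $d_{m+1}=d_m$ at every step. The proof of Theorem~\ref{thm:depth} only uses the structure of that final MCN layer (the exponential skip connection, the $\max$, and the highway $\^L_{m+1}$) and treats everything computed before it as a black-box feature map, so it goes through verbatim; together with the differentiability and (strong, hence strict) convexity of $\ell$ and the fixedness of the injection $\Psi$, it yields: every local minimum of ``pre-existing DNN $+$ $(m{+}1)$-layer MCN'' has loss at most $\min_{\bm\psi,\bm\theta'_m}F_m(\bm\psi,\bm\theta'_m)$, i.e.\ at most the \emph{global} minimum of ``pre-existing DNN $+$ $m$-layer MCN'', the minimum being taken over all parameters, $\bm\psi$ included.

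It then suffices to make this shallow global minimum small. By hypothesis there is a choice $\bar{\bm\psi}$ making $\*h_0(\cdot;\bar{\bm\psi})$ injective, and since the $\*x_i$ are distinct the induced points $\*z_i:=\*h_0(\*x_i;\bar{\bm\psi})$ are distinct. Hence $\min_{\bm\psi,\bm\theta'_m}F_m(\bm\psi,\bm\theta'_m)\le\min_{\bm\theta'_m}F_m(\bar{\bm\psi},\bm\theta'_m)$, and the right-hand side is the global minimum of an $m$-layer MCN on the genuine (distinct-input) training set $\{(\*z_i,\*y_i)\}$, which by Theorem~\ref{thm:approximation} --- after interpolating the $n$ points by a function in the class of Condition~\ref{cond:r2f} --- and Theorem~\ref{thm:depth}, exactly as in Corollary~\ref{corollary:zerotraining}, can be driven below $\epsilon$ for all $m\ge D_0$, with $D_0$ depending only on $\epsilon$, $n$, the numerical range of $\*h_0$ and $\{\*y_i\}$. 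Combining the two previous paragraphs and re-indexing, $D:=D_0+1$ works: every local minimum of ``pre-existing DNN $+$ $l$-layer MCN'' with $l\ge D$ has loss at most $\epsilon$, which is the claim.

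The crux --- and the step I expect to cost the most --- is the depth-induction step: verifying that the Theorem~\ref{thm:depth} mechanism is truly insensitive to replacing the raw input by a re-trainable feature extractor. The delicate configurations are the degenerate $\bm\psi^\star$ at which $\*h_0$ collapses two training inputs carrying different targets: there the loss is bounded away from $0$, and upon freezing $\bm\psi^\star$ the point already looks like a perfectly good local minimum of the frozen-input MCN, so one must combine the stationarity forced on the final exponential-skip MCN layer \emph{with} the freedom to perturb $\bm\psi$ in order to certify that such points are not joint local minima once $l\ge D$; this is precisely where the hypothesis ``there exist parameters making $\*h_0$ injective'' is consumed, since it is what makes the shallow global minimum small in the first place. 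A secondary, routine, obstacle is exhibiting the interpolant through $n$ prescribed distinct points inside the Sobolev-type class of Condition~\ref{cond:r2f} (e.g.\ via a smooth partition of unity), and checking the bookkeeping ($d_{m+1}=d_m$, fixed $\Psi$, strict convexity) needed to invoke Theorem~\ref{thm:depth}.
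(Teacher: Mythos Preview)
Your proposal is correct and follows essentially the same route as the paper: apply Theorem~\ref{thm:depth} to the composite ``pre-existing DNN $+$ MCN'' to bound any local minimum of the $(l{+}1)$-depth model by the global minimum of the $l$-depth model over \emph{all} parameters, specialize that global minimum by freezing the DNN weights at the hypothesized injective choice $\bm{\theta}'_0$, and then drive the resulting pure-MCN optimum to zero via polynomial interpolation (Claim~\ref{clm:interpolation}) together with Theorem~\ref{thm:approximation}. You are in fact more careful than the paper about the one delicate point---that at a degenerate local minimum $\bm\psi^\star$ the induced MCN inputs $\*h_0(\*x_i;\bm\psi^\star)$ may collide, which threatens the interpolation step inside the proof of Theorem~\ref{thm:depth}---whereas the paper simply writes ``the first inequality comes from Theorem~\ref{thm:depth}'' without further comment.
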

One may have noticed that monotonic decreasing property in Theorem \ref{thm:depth} is not enough to guarantee global minimal training loss. In fact, as aforementioned, Theorem~\ref{thm:approximation} also plays an important role in gaining the above results, and we need use the techniques in~\cite{gasca2000polynomial} to link Theorem~\ref{thm:depth} and Theorem~\ref{thm:approximation} together.

Remarkably, the above results illustrate that MCN is not just an approach for seeking the global optimal solution to certain optimization problems, but instead a powerful tool for helping seek better solutions to the primary task behind the optimization problems.
\subsection{Discussions}
There is another interpretation for why MCN can eliminate bad local minimum. When adopting the square loss, we find that the loss in (\ref{eq:tr_loss}) at the local minimum equals to a projection residual obtained by projecting the training data onto a subspace. The subspace is expanded by parameters in the concatenation linear part $\^L_k(\cdot)$ for $k = 1,\cdots,l$, which means that the subspace is larger when more independent parameters are contained in the linear branch $\^L_k(\cdot)$. On the other hand, large space often brings small projection residual. Please see Section \ref{sec:connectLR} in the supplementary material for more details.
\par
To summarize, this section establishes a collection of theorems to cope with the problems of bad local minima and generalization issue. More precisely, first, Theorem~\ref{thm:depth} and Corollary~\ref{corollary:zerotraining} reveal the ``no bad local minima'' property of MCN, and Corollary~\ref{corollary:part} and Theorem~\ref{thm:full} extend this property to the other DNNs. Second, Theorem \ref{thm:approximation} shows the approximation ability of MCN, illustrating that MCN can obtain the same approximation error by using parameters much less than deep ReLU. The number of required parameters is far smaller than the network size, which implies that MCN allows to use some prevalent sparse patterns such as CNN structure and pruning tricks. The sparsity of network connections further leads to a small covering number for MCN in Theorem \ref{thm:covering}. Based on this, finally, we provide the generalization bound for MCN in Corollary \ref{corollary:risk}.
\section{Experiments}
\begin{table*}[h]
	\renewcommand{\arraystretch}{1.0}
	\begin{centering}
		\begin{threeparttable}[]
			\caption{The training error (Err.) and testing accuracy (Acc.) of different models on the CIFAR-10 dataset. We denote by C the added two convolutional layers and M the appended MCN blocks.}\label{tab:exp2}
			\tabcolsep=5.5pt
			\begin{minipage}{12cm}
				\begin{tabular}{@{}l|ccccc|cccccc}
					\toprule[0.45pt] 
					Models         &VGG19 &VGG19+ &VGG19+ & VGG19+ & VGG19+ & Res18 & Res18+ & Res18+ & Res18+ & Res18+ \\
					&  &C(full)&C(part) &M(full)& M(part) &   &C(full)&C(part) & M(full)&M(part) \\\hline
					Err.     & 0.0016   & 0.0013 & 0.0015 &\textbf{0.0010} & 0.0011 &0.0013  & 0.0011 & 0.0012  &\textbf{0.0009}& \textbf{0.0009}&  \\
					Acc.    & 92.0\%  & 92.4\% &  92.1\% &\textbf{92.8\%}  & 92.6\% &92.7\% & 93.5\% & 93.1\%  &93.7\%& \textbf{93.8}\% \\
					\bottomrule[0.4pt]
				\end{tabular}
			\end{minipage}
		\end{threeparttable}
	\end{centering}
\end{table*}
\begin{table*}[!htb]
	\renewcommand{\arraystretch}{1.0}
	\begin{centering}
		\begin{threeparttable}[]
			\caption{The training error (Err.) and testing accuracy (Acc.) of different models on the CIFAR-100 dataset. We denote by C the added two convolutional layers and M the appended MCN blocks.}\label{tab:exp3}
			\tabcolsep=1.9pt
			\begin{minipage}{12cm}
				\begin{tabular}{@{}l|ccccc|cccccc}
					\toprule[0.45pt] 
					Models         &Res18 &Res18+ &Res18+ & Res18+ & Res18+ & ResNeXt29 & ResNeXt29+ & ResNeXt29+ & ResNeXt29+ & ResNeXt29+ \\
					&  &C(full)&C(part) &M(full)& M(part) &   &C(full)&C(part) & M(full)&M(part) \\\hline
					Err.     & 0.0020   & 0.0014 & 0.0015 & \textbf{0.0009} & \textbf{0.0009} &0.0056  & 0.0051 & 0.0054  & \textbf{0.0008} & {0.0011}&  \\
					Acc.    & 76.15\%  & 76.58\% &  76.48\% & \textbf{76.95\%} & {76.87\%} &80.71\% & 80.78\%  & 80.69\%  & \textbf{82.31\%} & {81.41}\% \\
					\bottomrule[0.4pt]
				\end{tabular}
			\end{minipage}
		\end{threeparttable}
	\end{centering}
\end{table*}
\subsection{Theorems Verification}
We conduct experiments on the commonly used CIFAR-10 dataset, with the purpose of validating our theorems as well as the effectiveness of MCN. We first construct a baseline network with 6 weighted layers, including five convolutional layers and one fully-connected layer. Then we add convolutional layers to make the network deeper.
It contains five max pooling in total. For our MCN, we replace the convolutional layers after the third max pooling layer with our MCN block. To make a fair comparison, both networks have the same number of layers and parameters, and so for the random seed and learning rate. Also, batch normalization and ReLU are adopted by both networks. For detailed experimental settings and model configurations, please refer to the supplementary material.
\par
Figure~\ref{fig:EXP-LM-CVNN} shows the training loss and testing accuracy with different number of layers. According to the red line in the left part of Figure~\ref{fig:EXP-LM-CVNN}, the training loss of our MCN monotonically decreases with the increase of depth. This is consistent with our Theorems~\ref{thm:depth} and~\ref{thm:approximation}. From the red line in the right part of Figure~\ref{fig:EXP-LM-CVNN}, we can see that deeper MCN can achieve better testing accuracy, which demonstrates the generalization performance of MCN and confirms our Corollary \ref{corollary:risk} and Theorem~\ref{thm:gene}. In addition, according to the green line in the right part of Figure~\ref{fig:EXP-LM-CVNN}, the testing accuracy of the baseline network does not monotonically increase as the network goes deeper. Therefore, the ``no bad local minima'' property should be a primary cause of the nice performance of MCN. In summary, compared with the baseline network, our MCN has much lower training loss as well as higher testing accuracy, revealing the superiority of MCN.
\subsection{Appending MCN}
To validate the merits of Corollary \ref{corollary:part} and Theorem~\ref{thm:full}, we add two MCN blocks to VGG19~\cite{simonyan2014very} and ResNet18~\cite{he2016deep} as the treatment group. The original two architectures, VGG19 and ResNet18, are regarded as the first control group. To make a comparison, we also add two traditional convolutional layers to VGG19 and ResNet18, considered as the second control group. For the treatment group and the second control group, we consider two ways to train the appended VGG19 and ResNet18 (short as Res18). The first one is partial training which treats VGG19 and Res18 as the feature extractors whose parameters are not involved during training. The second one is full training which considers the appended networks as new models and train them from scratch.

Table~\ref{tab:exp2} shows the comparison results among all the three groups, in terms of both training loss and testing accuracy. 
As we can see, the plugging of traditional convolution layers can decrease the training loss, however, the appending of MCN has more amount of improvement, which, again, show the benefits of the ``no bad local minima" property.
Interestingly, full training and partial training share comparable performance when appending MCN but not for convolution layers. Hence, both Corollary \ref{corollary:part} and Theorem \ref{thm:full} are practical theories.
Moreover, our MCN outperforms distinctly all the competing methods; this, again, confirms the superiority of our MCN architecture.

\subsection{Additional Experiments}
To better demonstrate the representation ability of our MCN block, we further conduct some additional experiments on the more complex dataset CIFAR-100, and make comparisons with the SOTA of ResNeXt~\cite{ResNext} (a more powerful network architecture). 
\par
Similar to the previous part, the original two architectures, Res18 and ResNeXt29, are regarded as the first control group. 
As for the second control group, we still add two traditional convolutional layers to Res18 and ResNeXt29.
Besides, we append two MCN blocks to the end of both Res18 and ResNeXt29 as the treatment group.
\par
For the treatment group and the second control group, the two ways to train the appended Res18 and ResNeXt29 remain the same as previous experiment. One is partial training which treats Res18 and ResNeXt29 as the feature extractors, while the other is full training which considers the appended DNNs as new models and train them from scratch.
\par
We present the results of the partial training (i.e., fixing Res18 and ResNeXt29 when appending MCN blocks) in Table~\ref{tab:exp3}. It can be seen that, even in the case of handling complex data, our MCN achieves superior results. The treatment groups under two different training methods both outperform the control groups, which is consistent with Table~\ref{tab:exp2}.
Moreover, by comparing Table~\ref{tab:exp2} with Table~\ref{tab:exp3}, our MCN blocks have greatly improved the performance when handling more complex data. Please note that ordinarily appending CNNs cannot ensure the monotonicity of Err. and Acc. This phenomenon not only verifies Corollary \ref{corollary:part} and Theorem \ref{thm:full} again, but also shows that our MCN has a stronger representation ability than general linear structure, which corresponds to Theorem~\ref{thm:approximation}.

\section{Conclusion}
In this paper, we propose a novel multi-layer DNN structure termed MCN, which can approximate some class of continuous functions arbitrarily well even with highly sparse connection.
We prove that the global minima of an $l$-layer MCN may be outperformed, at least can be attained, by simply increasing the network depth.
More importantly, MCN could be easily appended to any of the many existing DNN and the augmented DNN will share the same property of MCN.
Finally, we analyze the generalization ability of MCN and reveal that depth is more important than width for generalization;
this supports the mechanism of deep learning.
In summary, this study does take a step towards the ultimate goal of deep learning theory---to understand why DNNs can work well in a wide variety of applications.

\section*{Acknowledgments}
This work is supported in part by New Generation AI Major Project of Ministry of Science and Technology of China (grant no 2018AAA0102501), in part by NSF China (grant no.s 61625301 and 61731018), in part by Major Scientific Research Project of Zhejiang Lab (grant no.s 2019KB0AC01 and 2019KB0AB02), in part by Fundamental Research Funds of Shandong University, in part by Beijing Academy of Artificial Intelligence, in part by Qualcomm, and in part by SenseTime Research Fund.

\bibliography{icml_refs}

\begin{thebibliography}{60}
\providecommand{\natexlab}[1]{#1}
\providecommand{\url}[1]{\texttt{#1}}
\expandafter\ifx\csname urlstyle\endcsname\relax
  \providecommand{\doi}[1]{doi: #1}\else
  \providecommand{\doi}{doi: \begingroup \urlstyle{rm}\Url}\fi

\bibitem[Adcock(2010)]{adcock2010multivariate}
Adcock, B.
\newblock Multivariate modified fourier series and application to boundary
  value problems.
\newblock \emph{Numerische Mathematik}, 115\penalty0 (4):\penalty0 511--552,
  2010.

\bibitem[Allen-Zhu et~al.(2018)Allen-Zhu, Li, and Liang]{allen2018learning}
Allen-Zhu, Z., Li, Y., and Liang, Y.
\newblock Learning and generalization in overparameterized neural networks,
  going beyond two layers.
\newblock \emph{arXiv preprint arXiv:1811.04918}, 2018.

\bibitem[Allen-Zhu et~al.(2019)Allen-Zhu, Li, and Song]{allen2019convergence}
Allen-Zhu, Z., Li, Y., and Song, Z.
\newblock A convergence theory for deep learning via over-parameterization.
\newblock In \emph{International Conference on Machine Learning}, pp.\
  242--252, 2019.

\bibitem[Amos et~al.(2017)Amos, Xu, and Kolter]{amos2017input}
Amos, B., Xu, L., and Kolter, J.~Z.
\newblock Input convex neural networks.
\newblock In \emph{International Conference on Machine Learning}, pp.\
  146--155, 2017.

\bibitem[Arora et~al.(2018)Arora, Cohen, Golowich, and
  Hu]{arora2018convergence}
Arora, S., Cohen, N., Golowich, N., and Hu, W.
\newblock A convergence analysis of gradient descent for deep linear neural
  networks.
\newblock \emph{arXiv preprint arXiv:1810.02281}, 2018.

\bibitem[Arora et~al.(2019{\natexlab{a}})Arora, Cohen, Hu, and
  Luo]{arora2019implicit}
Arora, S., Cohen, N., Hu, W., and Luo, Y.
\newblock Implicit regularization in deep matrix factorization.
\newblock In \emph{Advances in Neural Information Processing Systems}, pp.\
  7411--7422, 2019{\natexlab{a}}.

\bibitem[Arora et~al.(2019{\natexlab{b}})Arora, Du, Hu, Li, Salakhutdinov, and
  Wang]{arora2019exact}
Arora, S., Du, S.~S., Hu, W., Li, Z., Salakhutdinov, R., and Wang, R.
\newblock On exact computation with an infinitely wide neural net.
\newblock \emph{arXiv preprint arXiv:1904.11955}, 2019{\natexlab{b}}.

\bibitem[Arora et~al.(2019{\natexlab{c}})Arora, Du, Hu, Li, and
  Wang]{arora2019fine}
Arora, S., Du, S.~S., Hu, W., Li, Z., and Wang, R.
\newblock Fine-grained analysis of optimization and generalization for
  overparameterized two-layer neural networks.
\newblock \emph{arXiv preprint arXiv:1901.08584}, 2019{\natexlab{c}}.

\bibitem[Barron(1993)]{barron1993universal}
Barron, A.~R.
\newblock Universal approximation bounds for superpositions of a sigmoidal
  function.
\newblock \emph{IEEE Transactions on Information theory}, 39\penalty0
  (3):\penalty0 930--945, 1993.

\bibitem[Belkin et~al.(2018{\natexlab{a}})Belkin, Hsu, Ma, and
  Mandal]{belkin2018reconciling}
Belkin, M., Hsu, D., Ma, S., and Mandal, S.
\newblock Reconciling modern machine learning and the bias-variance trade-off.
\newblock \emph{arXiv preprint arXiv:1812.11118}, 2018{\natexlab{a}}.

\bibitem[Belkin et~al.(2018{\natexlab{b}})Belkin, Hsu, and
  Mitra]{belkin2018overfitting}
Belkin, M., Hsu, D.~J., and Mitra, P.
\newblock Overfitting or perfect fitting? risk bounds for classification and
  regression rules that interpolate.
\newblock In \emph{Advances in Neural Information Processing Systems}, pp.\
  2300--2311, 2018{\natexlab{b}}.

\bibitem[Belkin et~al.(2019)Belkin, Rakhlin, and Tsybakov]{pmlr-v89-belkin19a}
Belkin, M., Rakhlin, A., and Tsybakov, A.~B.
\newblock Does data interpolation contradict statistical optimality?
\newblock In \emph{International Conference on Artificial Intelligence and
  Statistics}, pp.\  1611--1619, 2019.

\bibitem[Cao \& Gu(2019)Cao and Gu]{cao2019generalization}
Cao, Y. and Gu, Q.
\newblock A generalization theory of gradient descent for learning
  over-parameterized deep relu networks.
\newblock \emph{arXiv preprint arXiv:1902.01384}, 2019.

\bibitem[Dou \& Liang(2019)Dou and Liang]{dou2019training}
Dou, X. and Liang, T.
\newblock Training neural networks as learning data-adaptive kernels: Provable
  representation and approximation benefits.
\newblock \emph{arXiv preprint arXiv:1901.07114}, 2019.

\bibitem[Du et~al.(2018)Du, Wang, Zhai, Balakrishnan, Salakhutdinov, and
  Singh]{du2018many}
Du, S.~S., Wang, Y., Zhai, X., Balakrishnan, S., Salakhutdinov, R.~R., and
  Singh, A.
\newblock How many samples are needed to estimate a convolutional neural
  network?
\newblock In \emph{Advances in Neural Information Processing Systems}, pp.\
  373--383, 2018.

\bibitem[Du et~al.(2019{\natexlab{a}})Du, Lee, Li, Wang, and
  Zhai]{du2018gradient1}
Du, S.~S., Lee, J.~D., Li, H., Wang, L., and Zhai, X.
\newblock Gradient descent finds global minima of deep neural networks.
\newblock In \emph{International Conference on Machine Learning},
  2019{\natexlab{a}}.

\bibitem[Du et~al.(2019{\natexlab{b}})Du, Zhai, Poczos, and
  Singh]{du2018gradient2}
Du, S.~S., Zhai, X., Poczos, B., and Singh, A.
\newblock Gradient descent provably optimizes over-parameterized neural
  networks.
\newblock In \emph{International Conference on Learning Representations},
  2019{\natexlab{b}}.

\bibitem[E et~al.(2019)E, Ma, Wu, et~al.]{ma2019generalization}
E, W., Ma, C., Wu, L., et~al.
\newblock On the generalization properties of minimum-norm solutions for
  over-parameterized neural network models.
\newblock \emph{arXiv preprint arXiv:1912.06987}, 2019.

\bibitem[Gasca \& Sauer(2000)Gasca and Sauer]{gasca2000polynomial}
Gasca, M. and Sauer, T.
\newblock Polynomial interpolation in several variables.
\newblock \emph{Advances in Computational Mathematics}, 12\penalty0
  (4):\penalty0 377, 2000.

\bibitem[Gin{\'e} \& Nickl(2016)Gin{\'e} and Nickl]{gine2016mathematical}
Gin{\'e}, E. and Nickl, R.
\newblock \emph{Mathematical foundations of infinite-dimensional statistical
  models}, volume~40.
\newblock Cambridge University Press, 2016.

\bibitem[Goodfellow et~al.(2013)Goodfellow, Warde-Farley, Mirza, Courville, and
  Bengio]{goodfellow13}
Goodfellow, I., Warde-Farley, D., Mirza, M., Courville, A., and Bengio, Y.
\newblock Maxout networks.
\newblock In \emph{International Conference on Machine Learning}, pp.\
  1319--1327, 2013.

\bibitem[Gy{\"o}rfi et~al.(2006)Gy{\"o}rfi, Kohler, Krzyzak, and
  Walk]{gyorfi2006distribution}
Gy{\"o}rfi, L., Kohler, M., Krzyzak, A., and Walk, H.
\newblock \emph{A distribution-free theory of nonparametric regression}.
\newblock Springer Science \& Business Media, 2006.

\bibitem[Hardt \& Ma(2016)Hardt and Ma]{hardt2016identity}
Hardt, M. and Ma, T.
\newblock Identity matters in deep learning.
\newblock \emph{arXiv preprint arXiv:1611.04231}, 2016.

\bibitem[He et~al.(2016)He, Zhang, Ren, and Sun]{he2016deep}
He, K., Zhang, X., Ren, S., and Sun, J.
\newblock Deep residual learning for image recognition.
\newblock In \emph{IEEE Conference on Computer Vision and Pattern Recognition},
  pp.\  770--778, 2016.

\bibitem[Hoffer et~al.(2018)Hoffer, Hubara, and Soudry]{hoffer2018fix}
Hoffer, E., Hubara, I., and Soudry, D.
\newblock Fix your classifier: the marginal value of training the last weight
  layer.
\newblock \emph{arXiv preprint arXiv:1801.04540}, 2018.

\bibitem[Horn \& Johnson(1991)Horn and Johnson]{horn1991topics}
Horn, R.~A. and Johnson, C.~R.
\newblock Topics in matrix analysis, 1991.
\newblock \emph{Cambridge University Presss}, 37:\penalty0 39, 1991.

\bibitem[Huybrechs et~al.(2011)Huybrechs, Iserles, et~al.]{huybrechs2011high}
Huybrechs, D., Iserles, A., et~al.
\newblock From high oscillation to rapid approximation iv: Accelerating
  convergence.
\newblock \emph{IMA Journal of Numerical Analysis}, 31\penalty0 (2):\penalty0
  442--468, 2011.

\bibitem[Ioffe \& Szegedy(2015)Ioffe and Szegedy]{ioffe2015batch}
Ioffe, S. and Szegedy, C.
\newblock Batch normalization: Accelerating deep network training by reducing
  internal covariate shift.
\newblock \emph{arXiv preprint arXiv:1502.03167}, 2015.

\bibitem[Kawaguchi(2016)]{kawaguchi2016deep}
Kawaguchi, K.
\newblock Deep learning without poor local minima.
\newblock In \emph{Advances in Neural Information Processing Systems}, pp.\
  586--594, 2016.

\bibitem[Kawaguchi \& Kaelbling(2019)Kawaguchi and
  Kaelbling]{kawaguchi2019elimination}
Kawaguchi, K. and Kaelbling, L.~P.
\newblock Elimination of all bad local minima in deep learning.
\newblock \emph{arXiv preprint arXiv:1901.00279}, 2019.

\bibitem[Kawaguchi et~al.(2019)Kawaguchi, Huang, and
  Kaelbling]{kawaguchi2019effect}
Kawaguchi, K., Huang, J., and Kaelbling, L.~P.
\newblock Effect of depth and width on local minima in deep learning.
\newblock \emph{Neural Computation}, 2019.

\bibitem[Li \& Liang(2018)Li and Liang]{li2018learning}
Li, Y. and Liang, Y.
\newblock Learning overparameterized neural networks via stochastic gradient
  descent on structured data.
\newblock In \emph{Advances in Neural Information Processing Systems}, pp.\
  8157--8166, 2018.

\bibitem[Liang et~al.(2018{\natexlab{a}})Liang, Sun, Lee, and
  Srikant]{liang2018adding}
Liang, S., Sun, R., Lee, J.~D., and Srikant, R.
\newblock Adding one neuron can eliminate all bad local minima.
\newblock In \emph{Advances in Neural Information Processing Systems}, pp.\
  4350--4360, 2018{\natexlab{a}}.

\bibitem[Liang et~al.(2018{\natexlab{b}})Liang, Sun, Li, and
  Srikant]{liang2018understanding}
Liang, S., Sun, R., Li, Y., and Srikant, R.
\newblock Understanding the loss surface of neural networks for binary
  classification.
\newblock \emph{arXiv preprint arXiv:1803.00909}, 2018{\natexlab{b}}.

\bibitem[Liang et~al.(2019)Liang, Sun, and Srikant]{liang2019revisiting}
Liang, S., Sun, R., and Srikant, R.
\newblock Revisiting landscape analysis in deep neural networks: Eliminating
  decreasing paths to infinity.
\newblock \emph{arXiv preprint arXiv:1912.13472}, 2019.

\bibitem[Liang et~al.(2020)Liang, Rakhlin, and Zhai]{liang2020multiple}
Liang, T., Rakhlin, A., and Zhai, X.
\newblock On the multiple descent of minimum-norm interpolants and restricted
  lower isometry of kernels.
\newblock \emph{arXiv preprint arXiv:1908.10292 [cs, math, stat]}, 2020.

\bibitem[Liu et~al.(2019)Liu, Chen, Wang, and Yin]{liu2018alista}
Liu, J., Chen, X., Wang, Z., and Yin, W.
\newblock {ALISTA}: Analytic weights are as good as learned weights in {LISTA}.
\newblock In \emph{International Conference on Learning Representations}, 2019.

\bibitem[Lu et~al.(2020)Lu, Shen, Yang, and Zhang]{lu2020deep}
Lu, J., Shen, Z., Yang, H., and Zhang, S.
\newblock Deep network approximation for smooth functions.
\newblock \emph{arXiv preprint arXiv:2001.03040}, 2020.

\bibitem[Luxburg \& Bousquet(2004)Luxburg and Bousquet]{luxburg2004distance}
Luxburg, U.~v. and Bousquet, O.
\newblock Distance-based classification with lipschitz functions.
\newblock \emph{Journal of Machine Learning Research}, 5\penalty0
  (Jun):\penalty0 669--695, 2004.

\bibitem[Ma et~al.(2018)Ma, Wu, et~al.]{ma2018priori}
Ma, C., Wu, L., et~al.
\newblock A priori estimates of the generalization error for two-layer neural
  networks.
\newblock \emph{arXiv preprint arXiv:1810.06397}, 2018.

\bibitem[Maillard \& Munos(2009)Maillard and Munos]{maillard2009compressed}
Maillard, O. and Munos, R.
\newblock Compressed least-squares regression.
\newblock In \emph{Advances in Neural Information Processing Systems}, pp.\
  1213--1221, 2009.

\bibitem[Olver(2009)]{olver2009convergence}
Olver, S.
\newblock On the convergence rate of a modified fourier series.
\newblock \emph{Mathematics of Computation}, 78\penalty0 (267):\penalty0
  1629--1645, 2009.

\bibitem[Rakhlin et~al.(2017)Rakhlin, Sridharan, Tsybakov,
  et~al.]{rakhlin2017empirical}
Rakhlin, A., Sridharan, K., Tsybakov, A.~B., et~al.
\newblock Empirical entropy, minimax regret and minimax risk.
\newblock \emph{Bernoulli}, 23\penalty0 (2):\penalty0 789--824, 2017.

\bibitem[Rumerlhar(1986)]{rumerlhar1986learning}
Rumerlhar, D.
\newblock Learning representation by back-propagating errors.
\newblock \emph{Nature}, 323:\penalty0 533--536, 1986.

\bibitem[Schmidt-Hieber(2019)]{schmidt2017nonparametric}
Schmidt-Hieber, J.
\newblock Nonparametric regression using deep neural networks with relu
  activation function.
\newblock \emph{Annals of Statistics}, 2019.

\bibitem[Shalev-Shwartz \& Ben-David(2014)Shalev-Shwartz and
  Ben-David]{shalev2014understanding}
Shalev-Shwartz, S. and Ben-David, S.
\newblock \emph{Understanding machine learning: From theory to algorithms}.
\newblock 2014.

\bibitem[Simonyan \& Zisserman(2014)Simonyan and Zisserman]{simonyan2014very}
Simonyan, K. and Zisserman, A.
\newblock Very deep convolutional networks for large-scale image recognition.
\newblock \emph{arXiv preprint arXiv:1409.1556}, 2014.

\bibitem[Sohl-Dickstein \& Kawaguchi(2019)Sohl-Dickstein and
  Kawaguchi]{sohl2019eliminating}
Sohl-Dickstein, J. and Kawaguchi, K.
\newblock Eliminating all bad local minima from loss landscapes without even
  adding an extra unit.
\newblock \emph{arXiv preprint arXiv:1901.03909}, 2019.

\bibitem[Srivastava et~al.(2014)Srivastava, Hinton, Krizhevsky, Sutskever, and
  Salakhutdinov]{srivastava2014dropout}
Srivastava, N., Hinton, G., Krizhevsky, A., Sutskever, I., and Salakhutdinov,
  R.
\newblock Dropout: a simple way to prevent neural networks from overfitting.
\newblock \emph{The Journal of Machine Learning Research}, 15\penalty0
  (1):\penalty0 1929--1958, 2014.

\bibitem[Telgarsky(2016)]{telgarsky2016benefits}
Telgarsky, M.
\newblock Benefits of depth in neural networks.
\newblock In \emph{Conference on Learning Theory}, pp.\  1517--1539, 2016.

\bibitem[Vershynin(2010)]{vershynin2010introduction}
Vershynin, R.
\newblock Introduction to the non-asymptotic analysis of random matrices.
\newblock \emph{arXiv preprint arXiv:1011.3027}, 2010.

\bibitem[Wei \& Ma(2019)Wei and Ma]{wei2019data}
Wei, C. and Ma, T.
\newblock Data-dependent sample complexity of deep neural networks via
  lipschitz augmentation.
\newblock \emph{arXiv preprint arXiv:1905.03684}, 2019.

\bibitem[Wu \& He(2018)Wu and He]{wu2018group}
Wu, Y. and He, K.
\newblock Group normalization.
\newblock In \emph{European Conference on Computer Vision}, pp.\  3--19, 2018.

\bibitem[Xie et~al.(2017)Xie, Liang, and Song]{xie2016diverse}
Xie, B., Liang, Y., and Song, L.
\newblock Diverse neural network learns true target functions.
\newblock In \emph{International Conference on Artificial Intelligence and
  Statistics}, pp.\  1216--1224, 2017.

\bibitem[{Xie} et~al.(2017){Xie}, {Girshick}, {Dollár}, {Tu}, and
  {He}]{ResNext}
{Xie}, S., {Girshick}, R., {Dollár}, P., {Tu}, Z., and {He}, K.
\newblock Aggregated residual transformations for deep neural networks.
\newblock In \emph{IEEE Conference on Computer Vision and Pattern Recognition},
  pp.\  5987--5995, 2017.

\bibitem[Xie et~al.(2019)Xie, Wu, Zhong, Liu, and Lin]{xie2019DLADMM}
Xie, X., Wu, J., Zhong, Z., Liu, G., and Lin, Z.
\newblock Differentiable linearized {ADMM}.
\newblock In \emph{International Conference on Machine Learning}, 2019.

\bibitem[Yarotsky(2017)]{yarotsky2017error}
Yarotsky, D.
\newblock Error bounds for approximations with deep relu networks.
\newblock \emph{Neural Networks}, 94:\penalty0 103--114, 2017.

\bibitem[Yarotsky(2018)]{yarotsky2018optimal}
Yarotsky, D.
\newblock Optimal approximation of continuous functions by very deep relu
  networks.
\newblock In \emph{Conference on Learning Theory}, pp.\  639--649, 2018.

\bibitem[Zhang et~al.(2017)Zhang, Bengio, Hardt, Recht, and
  Vinyals]{zhang2016understanding}
Zhang, C., Bengio, S., Hardt, M., Recht, B., and Vinyals, O.
\newblock Understanding deep learning requires rethinking generalization.
\newblock In \emph{International Conference on Machine Learning}, 2017.

\bibitem[Zhang et~al.(2012)Zhang, Ling, and Qi]{zhang2012best}
Zhang, X., Ling, C., and Qi, L.
\newblock The best rank-1 approximation of a symmetric tensor and related
  spherical optimization problems.
\newblock \emph{SIAM Journal on Matrix Analysis and Applications}, 33\penalty0
  (3):\penalty0 806--821, 2012.

\end{thebibliography}
\bibliographystyle{icml2020}
\newpage
\onecolumn
\appendix
\section{Appendix}
\subsection{Experimental Settings and Model Configuration}
For the baseline network, it is a reduced version of the VGG network.
We adopt the similar structure as that in \footnote{https://github.com/kuangliu/pytorch-cifar/blob/master/models/vgg.py}, where the last layer is a fully-connected layer and all other weighted layers are convolutional layers.
It contains five max pooling  in total.
For our MCN, we replace the convolutional layers after the third max pooling layer with our MCN block introduced in the right part of Figure~\ref{fig:LM-CVNN}.
For each MCN block, the upper convolutional operation has $128$ $3\times 3$ kernels, and other three operations has $256$ $3\times 3$ kernels
The model configuration of MCN is presented in Table~\ref{tab:conf}.
For fair comparison and for all models with different layers, we set the learning rate to $1\times 10^{-4}$ and total number of epochs to $250$, respectively.
\begin{table}[h]
	\scriptsize
	\renewcommand{\arraystretch}{1.3}
	\setlength{\tabcolsep}{2pt}
	\centering
	\caption{Model configuration of MCN.}
	\begin{tabular}{|c|c|c|c|c|c|}
		\hline
		\multicolumn{6}{|c|}{MCN Configuration}                                                                                                                                                   \\ \hline
		6 weight layers               & 7 weight layers               & 8 weight layers               & 9 weight layers               & 10 weight layers              & 11 weight layers              \\ \hline
		\multicolumn{6}{|c|}{Input (32 $\times$ 32 RGB image)}                                                                                                                           \\ \hline
		3 $\times$ 3 conv. 64 BN ReLU & 3 $\times$ 3 conv. 64 BN ReLU & 3 $\times$ 3 conv. 64 BN ReLU & 3 $\times$ 3 conv. 64 BN ReLU & 3 $\times$ 3 conv. 64 BN ReLU & 3 $\times$ 3 conv. 64 BN ReLU \\ \hline
		&                               &                               &                               &                               & 3 $\times$ 3 conv. 64 BN ReLU \\ \hline
		\multicolumn{6}{|c|}{Max pooling}                                                                                                                                                             \\ \hline
		3 $\times$ 3 conv. 128 BN ReLU & 3 $\times$ 3 conv. 128 BN ReLU & 3 $\times$ 3 conv. 128 BN ReLU & 3 $\times$ 3 conv. 128 BN ReLU & 3 $\times$ 3 conv. 128 BN ReLU & 3 $\times$ 3 conv. 128 BN ReLU \\ \hline
		&                               &                               &                               & 3 $\times$ 3 conv. 128 BN ReLU & 3 $\times$ 3 conv. 128 BN ReLU \\ \hline
		\multicolumn{6}{|c|}{Max pooling}                                                                                                                                                             \\ \hline
		3 $\times$ 3 conv. 256 BN ReLU & 3 $\times$ 3 conv. 256 BN ReLU & 3 $\times$ 3 conv. 256 BN ReLU & 3 $\times$ 3 conv. 256 BN ReLU & 3 $\times$ 3 conv. 256 BN ReLU & 3 $\times$ 3 conv. 256 BN ReLU \\ \hline
		&                               &                               & 3 $\times$ 3 conv. 256 BN ReLU & 3 $\times$ 3 conv. 256 BN ReLU & 3 $\times$ 3 conv. 64 BN ReLU \\ \hline
		\multicolumn{6}{|c|}{Max pooling}                                                                                                                                                             \\ \hline
		MCN Block                 & MCN Block                 & MCN Block                 & MCN Block                 & MCN Block                 & MCN Block                 \\ \hline
		&                               & MCN Block                 & MCN Block                 & MCN Block                 & MCN Block                 \\ \hline
		\multicolumn{6}{|c|}{Max pooling}                                                                                                                                                             \\ \hline
		MCN Block                 & MCN Block                 & MCN Block                 & MCN Block                 & MCN Block                 & MCN Block                 \\ \hline
		& MCN Block                 & MCN Block                 & MCN Block                 & MCN Block                 & MCN Block                 \\ \hline
		\multicolumn{6}{|c|}{Max pooling}                                                                                                                                                             \\ \hline
		\multicolumn{6}{|c|}{FC-10}                                                                                                                                                                   \\ \hline
		\multicolumn{6}{|c|}{Soft-max}                                                                                                                                                                \\ \hline
	\end{tabular}
	\label{tab:conf}
\end{table}

It is worth noting that our MCN contains only a small amount of parameters. Specifically, in our configuration, each MCN block has only $3\times 3 \times (256\times 3+128) = 8,064$ parameters. And MCN uses only a single fully-connected layer, which leads to $512\times 10 = 5,120$ parameters. So, even for an MCN with $11$ layers, the total number of parameters is less than $5\times 10^4$.


\subsection{Proof of Theorem \ref{thm:depth}}
\begin{proof}
	Since $\^L_{k+1}, \^W_{k+1}, \^{\tilde{A}}_{k+1}$ and $\^A_{k+1}$ are linear operators, ignoring the biases, we simplify MCN as:
	\[
	\*x_{k+1,i} = \left[\*L_{k+1}\*x_{k,i};~\gamma\left(\*{\tilde{A}}_{k+1}\*x_i\right) +\max\left\{\*W_{k+1}\*x_{k,i},  \sigma\left(\*A_{k+1}\*x_i\right)\right\}\right],
	\]
	where $\{\*L_{k+1}, \*W_{k+1},  \*{\tilde{A}}_{k+1}, \*A_{k+1}\} \in \bm{\theta}_{k+1}$, $\bm{\theta}_{k+1}$ is a local minimum of the loss $L$ and $\*x_i \in \{\*x_i\}_{i=1}^n$ is an arbitrary training sample. For convenience, in this proof, we assume $\*x_i \neq \*x_j$ when $i\neq j$ for any $\*x_i$ and $\*x_j \in \{\*x_i\}_{i=1}^n$ and $\sigma(\cdot) = \sigma_{k}(\cdot),~\forall k \in [l]$.
	\par
	Let $\ell_{\Psi}(\*x_{k+1,i}):=\ell(\Psi(\*x_{k+1,i}), \*y_i)$ and $\nabla \ell_{\Psi}(\*x_{k+1,i})$ be the gradient $\nabla \ell_{\Psi}$ evaluated at $\*x_{k+1,i}$. We can have the following claim.
	\begin{claim}\label{clm:zeroMapping}
		With the same setting in Theorem \ref{thm:depth}, for any $\*u\in \#R^{d_x}$ with $\norm{\*u}_2 = 1$ and $t\in \#N$ we have:
		\[
	 \sum_{i=1}^n c_{i,j,t} \left(\nabla \ell_{\Psi}(\*x_{k+1,i})\right)_j \left(\*u_j\top\*x_i\right)^t = 0,\quad \forall j \in [d_{\^L}+1, d_{k+1}],\quad \forall i \in [n],
		\]
		where
		\[
		c_{i,j,t} \coloneqq \gamma^{(t)}\left(\left(\*{\tilde{A}}\*x_i\right)_j\right).
		\]
	\end{claim}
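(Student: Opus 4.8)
The plan is to extract the identity from the first‑ (and higher‑) order stationarity of $\bm{\theta}_{l+1}$, exploiting the algebra of $\gamma=\exp$. Since the $(l+1)$‑th layer is the last one, the $j$‑th row $\tilde{\*a}_j$ of $\^{\tilde{A}}_{l+1}$ (for $j$ in the $\gamma$‑block) feeds only into the $j$‑th output coordinate, and only through the $\gamma$‑branch: the perturbation $\tilde{\*a}_j\mapsto\tilde{\*a}_j+s\*u$ leaves $\^M_{l+1}$ and every other coordinate untouched and replaces $\gamma(\tilde{\*a}_j^\top\*x_i)$ by $\gamma(\tilde{\*a}_j^\top\*x_i+s\langle\*u,\*x_i\rangle)=c_{i,j}\,e^{s\langle\*u,\*x_i\rangle}$ with $c_{i,j}=e^{\tilde{\*a}_j^\top\*x_i}$. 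Two features of $\exp$ do the work: $e^{s\langle\*u,\*x_i\rangle}$ is entire in $s$ with $t$‑th Taylor coefficient $\tfrac1{t!}c_{i,j}\langle\*u,\*x_i\rangle^t$, and $\gamma^{(t)}\equiv\exp$, so $c_{i,j,t}=c_{i,j}$ for every $t$. Writing $g_{\*u}(s)$ for the training loss along this one‑parameter family and $\beta_i\coloneqq c_{i,j}\,(\nabla\ell_\Psi(\*x_{l+1,i}))_j$, the chain rule gives $g_{\*u}'(0)=\tfrac1n\sum_{i}\beta_i\langle\*u,\*x_i\rangle$; first‑order optimality, holding for every unit $\*u$ (plus the bias condition, if biases are kept), yields at once $\sum_i\beta_i\*x_i=\*0$ — the $t=1$ case.

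For $t\ge2$ I would combine three ingredients. (i) Convexity: each per‑sample loss restricted to the $j$‑th output coordinate is convex by hypothesis, so the supporting‑hyperplane inequality gives $g_{\*u}(s)-g_{\*u}(0)\ge A_{\*u}(s)\coloneqq\tfrac1n\sum_i\beta_i\,(e^{s\langle\*u,\*x_i\rangle}-1)$, and local Lipschitz‑smoothness of $\ell$ on the (compact) range of outputs gives a matching upper bound of the same leading order. (ii) Minimality: $g_{\*u}(s)\ge g_{\*u}(0)$ near $s=0$, together with the PSD‑Hessian condition and the sign flip $\*u\mapsto-\*u$ for the odd powers. (iii) All directions: the identities hold for every $\*u$ simultaneously, so each statement about $\sum_i\beta_i\langle\*u,\*x_i\rangle^t$ is equivalent to a statement about the symmetric tensor $\sum_i\beta_i\*x_i^{\otimes t}$; since the $\*x_i$ are distinct and powers of linear forms span all polynomials, the vanishing of all these tensors is equivalent to $\beta_i=0$ for every $i$. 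Bootstrapping on $t$ with (i)--(iii) should force the analytic function $A_{\*u}$ to vanish identically near $s=0$; reading off its Taylor coefficients then gives $\sum_i c_{i,j}(\nabla\ell_\Psi(\*x_{l+1,i}))_j\langle\*u,\*x_i\rangle^t=0$ for all $t\in\#N$ and all unit $\*u$, which is the claim (and, en route, $\nabla\ell_\Psi(\*x_{l+1,i})=\*0$ on the $\gamma$‑block for every $i$, which is what Theorem~\ref{thm:depth} ultimately uses).

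\textbf{Main obstacle.} The real difficulty is exactly the jump from the first‑order identity ($t\le1$) to all $t$: a one‑parameter local minimum by itself gives only $g_{\*u}'(0)=0$ and $g_{\*u}''(0)\ge0$, not the vanishing of the higher Taylor coefficients of $A_{\*u}$; already at $t=2$ the Hessian term $\tfrac1n\sum_i(\nabla^2\ell_\Psi(\*x_{l+1,i}))_{jj}c_{i,j}^2\langle\*u,\*x_i\rangle^2\ge0$ can \emph{a priori} absorb a nonzero $\sum_i\beta_i\langle\*u,\*x_i\rangle^2$. Breaking this is where convexity of $\ell$ has to be used in an essential, non‑perturbative way — controlling $g_{\*u}$ globally in $s$ rather than only to second order, so as to rule out a flat valley along which $A_{\*u}$ has a genuine higher‑order zero — and I expect this to be the technical heart of the proof; the remaining pieces (the chain‑rule bookkeeping, the $\exp$‑algebra, and the distinctness / linear‑independence argument) are routine.
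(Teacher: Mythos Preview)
Your setup and the $t=1$ case match the paper exactly: perturb the $j$-th row of $\tilde{\*A}$ along $\*u$, note that only the $\gamma$-branch of the $j$-th output coordinate moves, and read off $\sum_i c_{i,j}(\nabla\ell_\Psi(\*x_{l+1,i}))_j\langle\*u,\*x_i\rangle=0$ from first-order stationarity. Where you diverge is at $t\ge2$. The paper does \emph{not} use convexity of $\ell$ here at all; convexity is reserved for the \emph{next} step of the Theorem~\ref{thm:depth} proof, the comparison with $L(\bm{\theta}'_l)$ via the supporting-hyperplane inequality. For the claim itself the paper keeps only the first-order Taylor of $\ell_\Psi$, so that the loss difference becomes $\sum_i(\nabla\ell_\Psi)_j\,d_{i,j}+O(\|\Delta A\|^2)$, expands $d_{i,j}$ into the \emph{entire} Taylor series of $\gamma$, asserts that this linear-in-$\ell$ series is the dominant contribution and must therefore vanish identically (since the local-minimum inequality holds for every small $\Delta\*a_j$), and then peels off $\eta_1=0,\eta_2=0,\ldots$ by successively dividing through by $\epsilon$ and letting $\epsilon\to0$.

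The obstacle you flag is genuine and is precisely the point at which the two approaches part ways: once $\eta_1=0$, the $q=2$ term of the $\gamma$-series and the $O(\epsilon^2)$ remainder coming from $\nabla^2\ell_\Psi$ are of the same order, so neither a bare one-parameter second-order condition nor the paper's ``dominant term'' assertion immediately yields $\eta_2=0$. Your proposed fix via convexity is not what the paper does, and as stated it does not close the gap either: the supporting-hyperplane bound gives $g_{\*u}(s)-g_{\*u}(0)\ge A_{\*u}(s)$, but combined with local minimality $g_{\*u}(s)\ge g_{\*u}(0)$ this only says $A_{\*u}(s)\le g_{\*u}(s)-g_{\*u}(0)$, which does not force $A_{\*u}\equiv0$; and the ``all directions'' tensor argument you sketch would already \emph{presuppose} the vanishing of the $\eta_t$ rather than prove it. In short, your route (global control via convexity) and the paper's route (power-series induction after linearizing $\ell$) are genuinely different, and the step you single out as the technical heart is exactly the one the paper handles most tersely.
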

	\begin{proof}
		Let $\*\Lambda_{k+1,i} \in \mathbb{R}^{(d_{k+1} - d_{\^L}) \times (d_{k+1} - d_{\^L}) }$ represents a diagonal matrix with diagonal elements corresponding to the maximum pattern of the data point $\*x_i$ at the $(k+1)$-th layer as:
		\[
		\left(\*\Lambda_{k+1,i}\right)_{(j,j)}:=
		\begin{cases}
		1,& \quad \text{if}\quad \*x_{k,i}^\top (\*W_{k+1}^\top)_j  \leq \sigma\left( \*x_{i}^\top\left(\*A_{k+1}^\top\right)_j\right);\\
		0,& \quad \text{otherwise},
		\end{cases}
		\]
		where $(\*A_{k+1}^\top)_j$ is the $j$-th column of the matrix $\*A_{k+1}^\top$, i.e., the $j$-th row of $\*A_{k+1}$.
		We also define the  complement of the matrix  $\*\Lambda_{k+1,i}$:
		\[\overline{\*\Lambda}_{k+1,i} = \*I - \*\Lambda_{k+1,i},\]
		where $\*I \in \mathbb{R}^{(d_{k+1} - d_{\^L})}$ is the identity matrix. Without ambiguity, we omit the subscription $(k+1)$ for $\{\*L_{k+1}, \*W_{k+1},  \*{\tilde{A}}_{k+1}, \*A_{k+1}, \overline{\*\Lambda}_{k+1,i}, \*\Lambda_{k+1,i}\}$ and rewrite $\*x_{k+1}$ as:
		\[
		\*x_{k+1,i} = \left[\*L\*x_{k,i};~ \gamma\left(\*{\tilde{A}}\*x_i\right) +\overline{\*\Lambda}_{i} \*W\*x_{k,i}+ \*\Lambda_{i}\left( \sigma\left(\*A\*x_i\right)\right)\right].
		\]
		By perturbing parameters, we can define a new output:
		\[
		\*x'_{k+1,i} = \left[\*L\*x_{k,i};~ \gamma\left(\left(\*{\tilde{A}}+ \*{\Delta A}\right)\*x_i\right) +\overline{\*\Lambda}_{i}'  \*W\*x_{k,i}+ \*\Lambda_{i}' \left( \sigma\left(\*A\*x_i\right)\right)\right].
		\]
		In general, due to the perturbation, the maximum pattern $\*\Lambda_{i}$ will change. However, if the perturbation is small enough, i.e., $\|\*{\Delta A}\|$ is sufficiently small, we have $\*\Lambda_{i} = \*\Lambda_{i}'$. Then, we have:
		\[
		\*d_i := x_{k+1,i} - x_{k+1,i}' = \left[\*0;~ \gamma\left(\left(\*{\tilde{A}}+ \*{\Delta A}\right)\*x_i\right)-
		\gamma\left(\*{\tilde{A}}\*x_i\right) \right].
		\]
		For any $j \in [d_{\^L}, d_{k+1}]$, we let:
		\[
		e_{i,j}:=\*x_i^\top\*{\Delta a}_j,
		\]
		where $\*{\Delta a}_j =\*{\Delta A}_{(j,:)}$ is the $j$-th row of the matrix $\*{\Delta A}$. Then by the Taylor expansion of the function $\gamma(\cdot)$ at $\left(\*{\tilde{A}}\*x_i\right)_j$ for all $i,j$, we have
		\[
		\*d_{i,j} = \sum_{q=1}^\infty \frac{\gamma^{(q)}\left(\left(\*{\tilde{A}}\*x_i\right)_j\right)}{q!}	e_{i,j}^q.
		\] 
		\par
		Let $\bm{\tilde{\theta}}_{k+1} = \{\bm{\theta}_{k+1} \setminus \*A, \*A+ \*{\Delta A}\}$. Since $\bm{\tilde{\theta}}_{k+1}$ is a local minimum, we have that, for any sufficiently small $\*{\Delta A}$, we have:
		\[
		\begin{aligned}
		&n \left(L(\bm{\theta}_{k+1}) - L(\bm{\tilde{\theta}}_{k+1})\right) = \sum_{i=1}^n\ell(\Psi(\*x_{k+1,i}), \*y_i) - \sum_{i=1}^n \ell(\Psi(\*x'_{k+1,i}), \*y_i)
		=  \sum_{i=1}^n \left(\ell_{\Psi}(\*x_{k+1,i}) - \ell_{\Psi}(\*x'_{k+1,i})\right)\\
		\overset{(a)}{=}&\sum_{i=1}^n \left(\left( \nabla \ell_{\Psi}(\*x_{k+1,i})^\top \*d_i + \^O(\|\*d_i\|^2)\right)\right)
		\overset{(b)}{=} \sum_{j=d_{\^L}+1}^{d_{k+1}} \sum_{i=1}^n\left( \left(\nabla \ell_{\Psi}(\*x_{k+1,i})\right)_j \*d_{i,j}\right) +\^O\left(\|\*{\Delta A}\|^2\right)\\
		=& \sum_{j=d_{\^L}+1}^{d_{k+1}} \left(\sum_{q=1}^\infty
		\frac{z_{j,q}}{q!}
		\right) + \^O\left(\|\*{\Delta A}\|^2\right)
		\leq  0,
		\end{aligned}
		\]
		where
		\[
		z_{j,q} \coloneqq \left( \sum_{i=1}^n c_{i, j,q}\left(\nabla \ell_{\Psi}(\*x_{k+1,i})\right)_j e^q_{i,j}\right),\quad
		c_{i, j,q} \coloneqq \gamma^{(q)}\left(\left(\*{\tilde{A}}\*x_i\right)_j\right),
		\]
		and $(a)$ comes from the definition of differentiability for multivariable function and $(b)$ is due to the boundness of the first derivative of $\gamma(\cdot)$.
		Since the sum is the dominant term, then we can have:
		\[
		\sum_{j=d_{\^L}+1}^{d_{k+1}} \left(\sum_{q=1}^n
		\frac{z_{j,q}}{q!} 
		\right) \leq 0.
		\]
		Due to this inequality holds for any sufficient small $\*{\Delta a}_j$, we can conclude that
		\[
		\sum_{q=1}^\infty
		\frac{z_{j,q}}{q!}  = 0,\quad \forall j.
		\]
		By setting $\*{\Delta a}_j = \epsilon_j \*u_j$ such that $\epsilon_j>0$ and $\|\*u_j\| = 1$, we have:
		\[
		\sum_{q=1}^\infty \frac{\epsilon_j^q}{q!} \sum_{i=1}^n c_{i,j,q} \left(\nabla \ell_{\Psi}(\*x_{k+1,i})\right)_j \left(\*u_j^\top\*x_i\right)^q =0
		 ,\quad \forall j.
		\]
		Now, we set 
		\[
		\eta_q = \left( \sum_{i=1}^n c_{i, j,q} \left(\nabla \ell_{\Psi}(\*x_{k+1,i})\right)_j \left(\*u_j^\top\*x_i\right)^q\right).
		\]
		Divide the $\epsilon_j$ on both side, we can get:
		\[
		\eta_1 + 	\sum_{q=2}^\infty\frac{\epsilon_j^{q-1}}{q!} = 0,\quad \forall j.
		\]
		Note that 
		\[
		\sum_{q=2}^\infty\frac{\epsilon_j^{q-1}}{q!} \to 0, \quad \epsilon_j 0.
		\]
		Then, we get $\eta_1 = 0$. We can multiplying $p!/\epsilon_j^q$ on both sides and prove by induction that
		\[
		\eta_q = 0, \text{ for }\quad q=1,\cdots.
		\]
		We finish the proof of this claim.
	\end{proof}
	Given any $i \in \{1,\cdots,n\}$, consider the case:
	\[
	\left(\nabla \ell_{\Psi}(\*x_{k+1,i})\right)_j = 0, \quad \forall j \in [d_{\^L}, d_{k+1}].
	\]
	We can rewrite the above equation as:
	\[
	\Psi^\top \left(\nabla \ell\left(\Psi(\*x_{k+1,i}), y_i\right)\right) = 
	\begin{bmatrix}
	*\\ \*0
	\end{bmatrix},
	\]
	where $\nabla \ell(\cdot)$ is the gradient of $\nabla \ell$, e.g.,   $\nabla \ell\left(\Psi(\*x_{k+1,i}), y_i\right) = \Psi(\*x_{k+1,i})-y_i$ for squared loss or $\nabla \ell\left(\Psi(\*x_{k+1,i}), y_i\right) = \eta(\Psi(\*x_{k+1,i}))-y_i$, where $\eta(\cdot)$ is the softmax function for cross entropy loss, $\*0 \in \mathbb{R}^{d_{k+1} - d_{\^L}}$ and $*$ is an arbitrary vector in $\mathbb{R}^{d_{\^L}}$. Since $\Psi(\cdot)$ is surjection
	\footnote{Let $\Psi(\*x)=
		\begin{bmatrix}
		\Psi_1(\*x)&\Psi_2(\*x)
		\end{bmatrix}$. Actually, it needs $\Psi_2(\*x)$ to be surjective here. However, the entries' order of MCN's each layer can be arbitrary and $\Psi(\cdot)$ is fixed. Hence, we can always change the order of entries of $\*x$ to let $\Psi_2(\*x)$ be surjective without changing the values of learnable parameters. 	
}
	, we can conclude that:
	\[
	\ell\left(\Psi(\*x_{k+1,i}), y_i\right) = 0, \quad \forall i \in [n],
	\]
	which completes this proof. Therefore, for the sake of simplicity, we exclude this all zero case and assume $\left(\nabla \ell_{\Psi}(\*x_{k+1,i})\right)_{d_{\^L}+1}\neq 0$ in the following proof.
	\par
	Given $\bm{\theta}_{k+1}$ is a local minimum of $L$, by the convexity of the function $\ell_{\Psi}(\*x_{k+1,i})$, for any $\bm{\theta}'_{l}$, we have:
	\[
	n\left( L(\bm{\theta}'_{l}) - L(\bm{\theta}_{k+1})\right) \geq \sum_{i=1}^n \nabla \ell_{\Psi}(\*x_{k+1,i})^\top \left(\*x'_{k,i} - \*x_{k+1,i}\right) = \underbrace{\sum_{j=1}^{d_{k+1}}\sum_{i=1}^n \left(\nabla \ell_{\Psi}(\*x_{k+1,i})\right)_j \left(\*x'_{k,i} - \*x_{k+1,i}\right)_j}_{\text{Lower Bound } L_B}.
	\]
	Denote by $\otimes$  the tensor product and let $\*x^{\otimes p}\coloneqq \*x \otimes \cdots \otimes \*x$. For a $p$-th order tensor $\*M \in \#R^{d\times \cdots \times d}$ and $p$ vectors $\{\*u_1,\cdots,\*u_p\}$,  let 
	\[
	\*M\left(\*u_1,\cdots,\*u_p\right) \coloneqq \sum_{1\leq i_1,\cdots,i_p\leq d} \*M_{i_1,\cdots,i_p}\*u_{1,i_1}\cdots\*u_{p,i_p}.
	\]
	It is known from~\cite{zhang2012best}, given $n, p>0$ and $\xi_i$ for $i = 1,\cdots,n$,
	\[
	\max_{\|\*u_1\|_2 = \|\*u_2\|_2 = \cdots = \|\*u_p\|_2 =1} \left(\sum_{i=1}^n \xi_i \*x_i^{\otimes p}\right) \left(\*u_1,\cdots,\*u_p\right) = 
		\max_{\|\*u\|_2} \left(\sum_{i=1}^n \xi_i 
	\left(\*u^\top \*x_i\right)^{p}\right).
	\]
	Hence, with this observation, together with the results in Claim \ref{clm:zeroMapping}, we get
	\[
	\sum_{i=1}^n c_{i, j,t} \left(\nabla \ell_{\Psi}(\*x_{k+1,i})\right)_j \operatorname{vec}(\*x_i^{\otimes t}) = 0,\quad \forall j \in [d_{\^L}+1, d_{k+1}],\quad \forall t \in [n]
	\]
	Before proceeding, we provide a result of the existence of a polynomial interpolation of the finite distinct $n$ points; interpolation of finite $n$ points.
	\begin{claim}[Polynomial Interpolation\cite{gasca2000polynomial}]\label{clm:interpolation}
		Let $\{\*x_i\}_{i=1}^n$ be distinct points in $\mathbb{R}^{d_x}$. For any $d_x$–dimensional continuous functions $f : \#R^{d_x}\rightarrow \#R$, consider the set $\*\Omega \coloneqq\{f(\*x_1),\cdots ,f(\*x_n)\}$.
		There exists a $r$-th order polynomial $q(\cdot):\#R^{d_x}\rightarrow \#R$ such that interpolate the points in the set $\*\Omega$, where the order $r\leq (n-1)$; namely, there exists the vectors $\{\*u_t \in \#R^{d^t_x}\}$ for $t=1,\cdots,r$ such that 
		\[
		f(\*x_i) = q(\*x_i) = \sum_{t=1}^{r} \*u_t^\top \operatorname{vec}\left(\*x_i^{\otimes t}\right), \quad \forall \*x_i \in \{\*x_i\}_{i=1}^n.
		\]
	\end{claim}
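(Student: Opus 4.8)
The statement to prove is Claim \ref{clm:interpolation}, the polynomial interpolation result: given $n$ distinct points $\{\*x_i\}_{i=1}^n \subset \#R^{d_x}$ and arbitrary target values $f(\*x_i)$, there is a polynomial $q$ of degree $r \le n-1$ agreeing with $f$ on these points, written in the tensor-power form $q(\*x) = \sum_{t=1}^r \*u_t^\top \operatorname{vec}(\*x^{\otimes t})$.

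\textbf{Plan.} The plan is to reduce the multivariate interpolation problem to a univariate one by a generic projection, and then invoke classical Lagrange interpolation. First I would pick a direction $\*v \in \#R^{d_x}$ such that the scalars $t_i := \*v^\top \*x_i$ are pairwise distinct; this is possible because the set of ``bad'' directions, i.e. the union over $i \ne j$ of the hyperplanes $\{\*v : \*v^\top(\*x_i - \*x_j) = 0\}$, is a finite union of proper subspaces and hence has empty interior (the $\*x_i$ are distinct, so each $\*x_i - \*x_j \ne \*0$). Having fixed such a $\*v$, the $n$ distinct scalars $t_1,\dots,t_n$ admit a univariate Lagrange interpolating polynomial $p(s) = \sum_{t=0}^{n-1} a_t s^t$ of degree at most $n-1$ with $p(t_i) = f(\*x_i)$ for all $i$. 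Then I would set $q(\*x) := p(\*v^\top \*x)$, which is a polynomial in $\*x$ of total degree at most $n-1$ and satisfies $q(\*x_i) = p(t_i) = f(\*x_i)$.

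\textbf{Recasting into tensor-power form.} It remains to rewrite $q$ in the required shape $\sum_{t=1}^r \*u_t^\top \operatorname{vec}(\*x^{\otimes t})$. Expanding $q(\*x) = \sum_{t=0}^{n-1} a_t (\*v^\top \*x)^t$ and using $(\*v^\top\*x)^t = (\*v^{\otimes t})^\top \operatorname{vec}(\*x^{\otimes t})$, each term of degree $t \ge 1$ becomes $(a_t \*v^{\otimes t})^\top \operatorname{vec}(\*x^{\otimes t})$, so one takes $\*u_t := a_t \operatorname{vec}(\*v^{\otimes t})$ for $t = 1,\dots,r$ with $r \le n-1$. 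The constant term $a_0$ is mild: either one absorbs it by noting $1 = (\*v_0^\top \*x)^0$ is handled separately, or — matching the statement's index range starting at $t=1$ — one observes that since the $\*x_i$ can be assumed distinct and nonzero (or after a harmless translation making them so, which does not affect the earlier application where this claim is used), a degree-$1$-to-$r$ representation suffices; alternatively one simply notes the claim only needs the existence of \emph{some} such $\*u_t$'s and the constant can be pushed into a degree-one term by enlarging the point set fictitiously, or by working with an affine coordinate. I would present the cleanest version: augment with $\*x \mapsto (\*x, 1)$ if a constant is genuinely needed, or invoke that in the intended application $\*0$ is not among the data so affine functions are representable.

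\textbf{Main obstacle.} The only genuinely delicate point is the existence of a separating direction $\*v$, and this is standard (a Zariski-open / Lebesgue-almost-everywhere argument over a finite union of hyperplanes), so there is no real obstacle; the bulk of the work is the bookkeeping to land exactly in the stated tensor-power normal form, in particular handling the constant term and confirming the degree bound $r \le n-1$ is inherited from univariate Lagrange interpolation. I would also remark that the representation is not required to be unique, which frees us from any rank considerations on the tensors $\operatorname{vec}(\*x_i^{\otimes t})$.
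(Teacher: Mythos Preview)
The paper does not supply a proof of this claim at all: it is stated with a citation to \cite{gasca2000polynomial} and used as a black box inside the proof of Theorem~\ref{thm:depth}. So there is nothing to compare against on the paper's side, and your proposal is doing strictly more than the paper does.

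Your argument is correct in outline and is a clean way to get the claimed degree bound. Projecting onto a generic direction $\*v$ so that the scalars $t_i=\*v^\top\*x_i$ are pairwise distinct (valid because the bad directions lie in a finite union of proper hyperplanes), performing univariate Lagrange interpolation of degree at most $n-1$, and then substituting back via $q(\*x)=p(\*v^\top\*x)$ together with the identity $(\*v^\top\*x)^t=\operatorname{vec}(\*v^{\otimes t})^\top\operatorname{vec}(\*x^{\otimes t})$ immediately puts $q$ in the required tensor-power form with $r\le n-1$.

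The one genuine wrinkle you flag---the constant term $a_0$---is real but is a defect in the \emph{statement} rather than in your proof: the space of polynomials of degree $\le n-1$ with no constant term has dimension $n-1$ (in the one-variable reduction) and so cannot in general interpolate $n$ arbitrary values. The cleanest honest fix is the one you mention: choose $\*v$ so that additionally $t_i\ne 0$ for all $i$ (still generic), and interpolate in the basis $s,s^2,\dots,s^n$ instead, at the cost of $r\le n$ rather than $r\le n-1$. Since the downstream use of the claim in the paper only needs \emph{some} finite $r$, this is harmless. Your other suggestions (affine lift, translation) also work but are less direct; I would state one fix and move on rather than list alternatives.
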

	By this claim, it is easy to conclude that the difference of two continuous functions $f_1(\cdot)$ and $f_2(\cdot)$ can also be interpolated; namely, there exists vectors $\{\*u^{(1)}_t \in \#R^{d^t_x}\}$ and $\{\*u^{(2)}_t \in \#R^{d^t_x}\}$:
	\[
	f_1(\*x_i) - f_2(\*x_i) = \sum_{t=1}^{r}{\*u^{(1)}_t}^\top \operatorname{vec}\left(\*x_i^{\otimes t}\right) -  \sum_{t=1}^{r}{\*u^{(2)}_t}^\top \operatorname{vec}\left(\*x_i^{\otimes t}\right) =   \sum_{t=1}^{r}\left(\*u^{(1)}_t - \*u^{(2)}_t\right)^\top \operatorname{vec}\left(\*x_i^{\otimes t}\right)
	\coloneqq \sum_{t=1}^{r} \*u_t^\top \operatorname{vec}\left(\*x_i^{\otimes t}\right).
	\]
	Note that when $\gamma(\cdot) = \exp(\cdot)$, we have $c_{i,j,t_1} = c_{i,j,t_2}$ when $t_1 \ne t_2$. Hence, we omit the subscript $t$.
	Notice that for any $j \in [d_{k+1}]$, $\left(\*x'_{k,i}\right)_j$ and $\left( \*x_{k+1,i}\right)_j$ are always continuous functions of $\*x_i$. Hence, for all $i$, there exists vectors $\{\*u_{t,j} \in \#R^{d^t_x}\}$ such that:
	\begin{equation}\label{eq:before_dL}
	\frac{1}{c_{i,j}}\left(\*x'_{k,i} - \*x_{k+1,i}\right)_j \frac{\left(\nabla \ell_{\Psi}(\*x_{k+1,i})\right)_j}{\left(\nabla \ell_{\Psi}(\*x_{k+1,i})\right)_{d_{\^L}+1}} =  \sum_{t=1}^{r} \*u_{t,j}^\top \operatorname{vec}\left(\*x_i^{\otimes t}\right), \quad \forall j \in [d_{\^L}],
	\end{equation}
	and
	\begin{equation}\label{eq:after_dL}
	\frac{1}{c_{i,j}}\left(\*x'_{k,i} - \*x_{k+1,i}\right)_j  = \sum_{t=1}^{r} \*u_{t,j}^\top \operatorname{vec}\left(\*x_i^{\otimes t}\right), \quad \forall j \in [d_{\^L}+1, d_{k+1}].
	\end{equation}
	If $\left(\nabla \ell_{\Psi}(\*x_{k+1,i})\right)_j = 0$ for some $j\in [d_{\^L}]$, then we can ignore this zero term in the lower bound $L_B$. Thus, for brevity, we assume that $\left(\nabla \ell_{\Psi}(\*x_{k+1,i})\right)_j \neq 0,~\forall j\in [d_{\^L}]$. Combing the Eq. (\ref{eq:before_dL}) and Eq. (\ref{eq:after_dL}), we have
	\[
	\begin{aligned}
	L_B = & \sum_{t=1}^{r} \sum_{j=1}^{d_{\^L}}\sum_{i=1}^n c_{i,j}\left(\nabla \ell_{\Psi}(\*x_{k+1,i})\right)_{d_{\^L}+1} \*u_{t,j}^\top \operatorname{vec}\left(\*x_i^{\otimes t}\right)
	+ \sum_{t=1}^{r} \sum_{j=d_{\^L}+1}^{d_{k+1}}\sum_{i=1}^n c_{i,j} \left(\nabla \ell_{\Psi}(\*x_{k+1,i})\right)_{j} \*u_{t,j}^\top \operatorname{vec}\left(\*x_i^{\otimes t}\right) \\
	=& \sum_{t=1}^{r} \sum_{j=1}^{d_{\^L}} \*u_{t,j}^\top\left(c_{i,j}\sum_{i=1}^n \left(\nabla \ell_{\Psi}(\*x_{k+1,i})\right)_{d_{\^L}+1} \operatorname{vec}\left(\*x_i^{\otimes t}\right)\right)
	+ \sum_{t=1}^{r} \sum_{j=d_{\^L}+1}^{d_{k+1}}\*u_{t,j}^\top\left(\sum_{i=1}^n \left(\nabla \ell_{\Psi}(\*x_{k+1,i})\right)_{j} \operatorname{vec}\left(\*x_i^{\otimes t}\right)\right) \\
	=& ~0,
	\end{aligned}
	\]
	where the last equality comes from the Claim \ref{clm:zeroMapping}. Therefore, when $\bm{\*\theta}_{k+1}$ is a local minimum of $L$, we have $L(\bm{\*\theta}'_l)\geq L(\bm{\*\theta}_{k+1})$ for any $\bm{\*\theta}'_l$.
	\par
	We now complete this proof.
\end{proof}

\subsection{Proof of Theorem \ref{thm:approximation}}
\begin{proof}
	We first provide several claims. Based on them, we can construct an MCN such that approximate the multivariate Fourier series will, which ensure the accurateness for approximation in the Sobolev space.
	\begin{claim}\label{clm:apprx_x2}
		The function $f(x) = x^2$ on the segment $[-1,1]$ can be approximated by an MCN of width $\^O(w)$ and depth $\^O(l)$ with the approximation error: 
		\[
		\epsilon = \^O(2^{-wl}).
		\]
		When $l$ is large enough, the number of non-zero parameters for this MCN is in the order of $\^O(w^2l)$.
	\end{claim}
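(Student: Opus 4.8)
The plan is to embed into the MCN architecture~(\ref{Eq:LM-CVNN}) the classical composition construction for $x\mapsto x^2$ of~\cite{telgarsky2016benefits} recalled above: with $\^T^m(x)=\max\{-x/2,\,x/2-2^{1-2m}\}$ and $g^m=\^T^m\circ\cdots\circ\^T^1$, the telescoping sum $x+\sum_{i=1}^m g^i(x)$ approximates $x^2$ on $[-1,1]$ with error $\^O(2^{-2m})$. The key observation is that each $\^T^m$ is the maximum of two affine functions, hence is exactly the $\^M_k$ block of one MCN layer once we take $\sigma_k=\mathrm{id}$ and $\^{\tilde{A}}_k=0$ (so the $\gamma=\exp$ branch contributes only the inert constant $\exp(0)=1$, absorbed into a bias of $\^W_k$). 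The concatenated linear branch $\^L_k$ is used purely for bookkeeping: it carries forward a copy of the current iterate $g^k(x)$, and it accumulates the running partial sum $x+\sum_{i\le k}g^i(x)$. Hence a depth-$m$, constant-width MCN outputs an $\^O(2^{-2m})$-approximation of $x^2$, which already yields the claim in the special case $w=\^O(1)$.

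To reach width $\^O(w)$, depth $\^O(l)$ and the sharper rate $\^O(2^{-wl})$, I would group the composition into $\^O(l)$ \emph{blocks}, each of constant depth and width $\^O(w)$, where a block realizes a base map $\Phi_w$ equivalent to a composition of $\Theta(w)$ elementary $\^T$-type sawtooth maps --- equivalently a sawtooth with $2^{\Theta(w)}$ linear pieces --- together with the matching partial-sum update. Such a $\Phi_w$ is assembled inside one block by recombining, via the linear operator $\^L_k$, the $\^O(w)$ available max-of-two units (turning maxima into minima by $\min(u,v)=-\max(-u,-v)$ so that downward kinks are available one layer later). Composing the $l$ blocks computes $g^{\Theta(wl)}$, so the telescoping identity yields $\|\*f_{\bm{\theta}}(x)-x^2\|_\infty\le\^O(2^{-\Theta(wl)})$ on $[-1,1]$. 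For the parameter count: each of the $\^O(l)$ blocks has $\^O(w)$ neurons, hence at most $\^O(w^2)$ nonzero weights and biases, for a total of $\^O(w^2 l)$.

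Two routine checks finish the argument. The \emph{range check}: every $\^T$-map, and hence every $\Phi_w$, sends a fixed interval containing $[-1,1]$ into itself, so the input of each block always lies in the interval on which that block's affine pieces were hard-coded; thus every max-of-two unit is evaluated in its intended regime and the network computes the designed piecewise-linear map \emph{exactly} (no linearization error), leaving only the telescoping error in approximating $x^2$. The \emph{accounting check}: with $\^{\tilde{A}}_k=0$ the exponential branch is constant, and the $\^L_k$-branch is kept of dimension $\^O(1)$ throughout, so the total width stays $\^O(w)$ and the number of active layers stays $\^O(l)$.

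\textbf{Main obstacle.} The delicate step is the middle one --- realizing inside a \emph{constant-depth}, width-$\^O(w)$ MCN block a piecewise-linear map with $2^{\Theta(w)}$ oscillations while simultaneously updating the partial sum. This is precisely where the concatenation of MCN is essential: a single max (or maxout) unit cannot produce exponentially many pieces, so one must have $\^L_k$ recombine $\Theta(w)$ intermediate maxima in such a way that the oscillation count picks up the full factor $2^{\Theta(w)}$ per block rather than merely a polynomial factor, and one must track the exact breakpoints so that the composition remains self-consistent across all $l$ blocks. Pinning down the constants so that the cumulative error is genuinely $\^O(2^{-wl})$ (and the width genuinely $\^O(w)$) is the quantitative heart of the proof; the telescoping error estimate, the parameter count, and the handling of $\gamma$ and $\sigma_k$ are straightforward once this block is in hand.
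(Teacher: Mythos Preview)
Your constant-width realization of the Telgarsky sawtooth chain is correct and is also the paper's starting point: one MCN layer implements one $\mathcal{T}^m$, the $\mathcal{L}$-branch carries the partial sum, and depth $m$ gives error $\mathcal{O}(2^{-m})$. Where you diverge from the paper is in how the width $w$ is exploited.

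The step you flag as the ``main obstacle'' cannot be carried out as you describe it. With $\gamma$ rendered inert and $\sigma_k=\mathrm{id}$, every MCN block is piecewise linear, and a constant-depth, width-$\mathcal{O}(w)$ piecewise-linear network from $\mathbb{R}$ to $\mathbb{R}$ has at most $\mathrm{poly}(w)$ linear pieces: each layer can at most multiply the piece count by $\mathcal{O}(w)$, so a depth-$d$ block has $\mathcal{O}(w^d)$ pieces. Your $\Phi_w$ is required to have $2^{\Theta(w)}$ oscillations, and no linear recombination via $\mathcal{L}_k$ or max/min sign flips can cross this barrier --- this is exactly the Telgarsky depth-separation lower bound, read in reverse. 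So compressing $\Theta(w)$ compositions into a constant-depth block is impossible, and your plan stalls at precisely the point you anticipated.

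The paper does not attempt such compression. Instead it maintains $m$ parallel sawtooth tracks. A three-layer initial stage $M_0$ of internal width $2^m-1$ lays out, tooth by tooth, every linear piece of $r_1,\dots,r_m$, so that after $M_0$ the state is already $[r_1;\dots;r_m]$. Each subsequent single MCN layer $M_i$ then applies one $g$-step to each of the $m$ tracks (width $\mathcal{O}(m)$) while the linear branch accumulates the running telescoping sum. The exponential cost $2^m$ of that initial stage is what the hypothesis ``when $l$ is large enough'' absorbs into the $\mathcal{O}(w^2 l)$ parameter budget. In short, the paper uses width to carry $m$ staggered tracks and pays once, up front, for a fat initialization; it never tries to realize exponentially many oscillations in constant depth. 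If you want to follow the paper, drop $\Phi_w$ and instead build the wide $M_0$ together with the width-$m$ track-advancing layers.
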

	 \begin{proof}	
	 	We only consider the proof on the interval $[0, 1]$, the other half is the same.  Consider the $g : [0, 1] \to [0, 1]$,
	 	\[
	 	g_m(x)\coloneqq \max\{-\frac{x}{2},\frac{x}{2} - 2^{1-2m}\},
	 	\]
	 	and the nested function
	 	\[
	 	r_{m}(x)=g_m \circ g_{m-1} \circ \cdots \circ g_1(x).
	 	\]
	 	It is easy to see that $r_{m}(x)$ can be represented by the operator $\^M(\cdot)$ in MCN (see Eq. (\ref{Eq:LM-CVNN})). Hence, we can have one type of MCN $M_i(x)$ such that
	 	\[
	 	M_i\coloneqq 
	 	\begin{bmatrix}
	 	\sum_i\\g_{i+1}\\ g_{i+2}\\\vdots\\g_{i+m}
	 	\end{bmatrix},
		\text{
	 	 then
	 	}
	 	M_i\circ \left(
	 	\begin{bmatrix}
	 	r_i\\ r_{i+1}\\\vdots\\r_{i+m-1}
	 	\end{bmatrix}
	 	\right)
	 	=
	 	\begin{bmatrix}
	 	\sum_{i}^{i+m} r_i\\ r_{i+1}\\ r_{i+2}\\\vdots\\r_{i+m}.
	 	\end{bmatrix}.
	 	\]
	 	Now, we construct a three-layer MCN with $m$ units $[r_1,\cdots,r_m()]$ as the output.
	 	Note that 
	 	\[
	 	r_{m}(x)=\left\{\begin{array}{ll}
	 	{2^{-m}\left(\frac{2 k}{2^{m}} -x \right),} & {x \in\left[\frac{2 k}{2^m}, \frac{2 k+1}{2^m}\right], k=0,1, \cdots, 2^{m-1}-1}, \\\\
	 	{2^{-m}\left(x-\frac{2 k}{2^{m}}\right),} & {x \in\left[\frac{2 k-1}{2^{m}}, \frac{2 k}{2^{m}}\right], k=1,2, \cdots, 2^{m-1}}.
	 	\end{array}\right.
	 	\]
	 	is a ``sawtooth'' function. We now let $\^A_1(x)$ and $\^W_1(x)$ be:
	 	\[
	 	\^A_1(x) = 
	 	\begin{bmatrix}
	 	2^{-1}(-x)\\\vdots\\2^{-s}\left(\frac{2 k_s}{2^{s}} -x \right)
	 	\end{bmatrix},
	 	 \^W_1(x) = 
	 	\begin{bmatrix}
	 	 2^{-1}(x-1)\\\vdots\\2^{-s}\left(x-\frac{2 k'_s}{2^{s}}\right)
	 	\end{bmatrix},
	 	\quad s = 1,\cdots,m, \quad k'_s-1 = k_s = 0,1,\cdots,2^{s-1}-1.
	 	\]
	 	Hece, $\^A_1$ and $\^W_1$ map the input from $\#R \to \#R^{p}$, where $p = 2^{m}-1$ and the $(2^{s-1} + k_s)$-th entry of $\^M_1(x) = \max\{\^A_1(x), \^W_1(x)\} $ is the $k_s$-th ``tooth'' of $r_s(x)$ when $r_s(x)<0$. Let $\^W_2$ be the sign reversal operator and $\^A_2$ be the zero mapping, then we have
	 	\[
	 	\^M_2(x) = \max\{-\^M_1(x), 0\}=
	 	\left\{\begin{array}{ll}
	 	{-r_s(x),} & {x \in\left[\frac{2 k_s-1}{2^{m}}, \frac{2 k_s+1}{2^m}\right]}, \\\\
	 	{0,} & {\text{otherwise}}.
	 	\end{array}\right.
	 	\]
	 At last we let $\^A_3(\cdot) = -1$ and 
	 \[
	 \^W_3(x) = -
	 \begin{bmatrix}
	 \sum_{i=1}^2 x_i\\\vdots\\\sum_{i=2^{m-1}}^{2^m}x_i
	 \end{bmatrix},
	 \quad \text{then }
	 \^M_3(x) = \max\{-1, \^M_2(x)\} = 
	 \begin{bmatrix}
	 r_1\\ r_{2}\\\vdots\\r_{m}
	 \end{bmatrix}.
	 \]
	 We define the above three layer MCN as $M_0(x) \coloneqq \^M_3(x)$.
	 Then we have a $(l+3)$-layer MCN $M(x)$ such that
	 \[
	 M(x) \coloneqq M_l \circ M_{l-1} \circ \cdots \circ M_0(x), \quad \text{with } \^{\tilde{A}}_{l}(x) = x \text{ and } \^{\tilde{A}}_{k}(x) = 0, \forall k\leq l.
	 \]
	 It is obvious that the first entry of $M(x) = x+\sum_{i=1}^{ml} r_i(x)$. Form the previous results, e.g., Proposition 2 in~\cite{yarotsky2017error} and Lemma A.1. in~\cite{schmidt2017nonparametric}, we already have
	 \[
	 \left|x+\sum_{i=1}^{ml} r_i(x) - x^2\right| \leq 2^{-ml} \leq 2^{-wl}
	 \]
	 We can easily find that the number of the non-zero parameters for $M(x)$ is in the order of $O(w^2l + 2^m)$. However, since MCN has the concatenation operator as in the Eq. (\ref{Eq:LM-CVNN}), we can expand the width of $M_i$ so that the  $w = \operatorname{dim}(M_i) >> m$, when $l$ is large, we can have $w^2l\geq 2^m$; and finish the proof.
	 \end{proof}
Note that 
\[
x y=\frac{1}{2}\left((x+y)^{2}-x^{2}-y^{2}\right),
\]
we can use Claim \ref{clm:apprx_x2} to efficiently approximate polynomial by MCN. 
\begin{claim}\label{clm:appro_poly}
	The function $f(\*x) = \prod_{i=1}^p x_i$ on $[-1,1]^p$ can be approximated by an MCN $\tilde{M}_p(\*x)$ of width $\^O(wp)$ and depth $\^O(l\ln p)$, with the error bound as: 
	\[
	\left|\tilde{M}_p(\*x) - \prod_{i=1}^p x_i\right| \leq \^O(p2^{-wl}).
	\]
	The number of non-zero parameters for this MCN is in the order of $\^O(pw^2l)$.
\end{claim}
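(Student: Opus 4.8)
The plan is to build $\tilde M_p$ by a balanced binary-tree multiplication scheme, using the polarization identity $xy = \tfrac12\big((x+y)^2 - x^2 - y^2\big)$ together with the single-variable squaring network from Claim \ref{clm:apprx_x2} as the basic building block. First I would treat the case $p = 2^q$ (padding with the constant $1$ otherwise, which adds at most a factor of $2$ to $p$ and does not affect the asymptotics). At level $0$ we have the $p$ inputs $x_1,\dots,x_p$; at each level we pair up adjacent surviving factors and replace each pair $(a,b)$ by an approximation of $ab$. A single such pairwise multiplication is realized by running three copies of the squaring MCN from Claim \ref{clm:apprx_x2} in parallel on the inputs $a+b$, $a$, $b$ (these affine combinations are free, absorbed into $\^W_k$ or $\^A_k$), then taking the linear combination $\tfrac12(\,\widehat{(a+b)^2} - \widehat{a^2} - \widehat{b^2}\,)$; this costs width $\^O(w)$ and depth $\^O(l)$ per product. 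Carrying all $p/2$ products of a level in parallel costs width $\^O(wp)$ and depth $\^O(l)$; after $q = \log_2 p = \^O(\ln p)$ levels we are down to a single output, so the total depth is $\^O(l\ln p)$ and the width is $\^O(wp)$, as claimed. The non-zero parameter count per squaring block is $\^O(w^2 l)$ by Claim \ref{clm:apprx_x2}, and there are $\^O(p)$ such blocks in total (a geometric sum $p/2 + p/4 + \cdots$), giving $\^O(p w^2 l)$ overall.

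The remaining task is the error bound. Here I would propagate the error level by level. Write $\delta_0 = 0$ for the inputs and let $\delta_j$ bound the absolute error of any surviving approximate product at level $j$; also note that, since all true factors lie in $[-1,1]$, the true partial products lie in $[-1,1]$, hence the approximate ones lie in $[-1-\delta_j, 1+\delta_j]$, which stays $\^O(1)$ throughout. When we form $\widehat{ab}$ from approximations $\hat a, \hat b$ with $|\hat a - a|, |\hat b - b| \le \delta_j$: the squaring network on each of $\hat a + \hat b$, $\hat a$, $\hat b$ introduces its own error $\^O(2^{-wl})$ by Claim \ref{clm:apprx_x2}, and on top of that $|\hat a\hat b - ab| \le |\hat a|\,|\hat b - b| + |b|\,|\hat a - a| = \^O(\delta_j)$. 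So $\delta_{j+1} \le C\delta_j + C 2^{-wl}$ for an absolute constant $C$. Unrolling this recursion over $\^O(\ln p)$ levels gives $\delta_q = \^O\big(C^{\ln p}\, 2^{-wl}\big)$, which is a polynomial factor $p^{\^O(1)}$ times $2^{-wl}$ — I would then absorb this into the claimed $\^O(p 2^{-wl})$ (or, if the constant in the recursion is exactly such that one only loses a linear factor, e.g.\ by keeping careful track that squaring a quantity of size $\le 1+\delta$ only amplifies error by a factor $2+o(1)$ and the polarization divides by $2$, one gets precisely $\^O(p 2^{-wl})$).

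The step I expect to be the main obstacle is precisely this error-amplification accounting: a naive triangle-inequality bound on a depth-$\^O(\ln p)$ composition of multiplications gives an error of order $\,p\,$ (or worse, some polynomial in $p$) times the per-block error $2^{-wl}$, and making sure the amplification factor is genuinely linear in $p$ rather than superpolynomial requires being careful that (i) all intermediate quantities stay bounded by an absolute constant independent of $p$ and the depth, so the Lipschitz constant of each multiplication map is $\^O(1)$ and not growing, and (ii) the $2^{-wl}$ accuracy in Claim \ref{clm:apprx_x2} can be taken small enough (increasing $l$ by a constant factor if necessary) that the accumulated error from the $\^O(p)$ blocks is still $\^O(p\,2^{-wl})$. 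Everything else — the binary-tree architecture, the width/depth bookkeeping, the parameter count, the reduction to $p$ a power of two — is routine once the squaring gadget of Claim \ref{clm:apprx_x2} and the polarization identity are in hand.
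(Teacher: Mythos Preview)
Your proposal is correct and is essentially the same construction as the paper's: polarization identity plus the squaring gadget of Claim~\ref{clm:apprx_x2}, arranged in a balanced binary tree of depth $\lceil \log_2 p\rceil$, with padding by $1$'s when $p$ is not a power of two. The width, depth, and parameter counts match.

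On the one point you flag as the main obstacle: the paper resolves the error recursion with exactly the observation you sketch in your parenthetical. For $a,b,c,d\in[-1,1]$ one has $|ab-cd|\le |b|\,|a-c|+|c|\,|b-d|\le |a-c|+|b-d|$, so $|\tilde M(a,b)-cd|\le \epsilon + |a-c|+|b-d|$ and the per-level recursion is $\delta_{j+1}\le \epsilon + 2\delta_j$ with constant exactly $2$, giving $\delta_i\le (2^i-1)\epsilon=\mathcal O(p\,2^{-wl})$. The paper handles the issue of intermediate values possibly leaving $[-1,1]$ by simply ``omitting higher-order terms of $\epsilon$'', which amounts to the same boundedness argument you outline in point~(i).
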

\begin{proof}
	We already have a $(l+3)$-layer MCN $M(x)$ such that can approximate $x^2$ accurately. We can easily get a $(l+5)$-layer modified MCN $\tilde{M}(x,y)$ such that  $\tilde{M}(x,y) \approx xy$. $\tilde{M}$ can be obtained by
	\[
	\tilde{M}(x,y) \coloneqq
	\begin{bmatrix}
	x\\y
	\end{bmatrix}
	\to
	\begin{bmatrix}
	x+y\\x\\y
	\end{bmatrix}
	\to
	\begin{bmatrix}
	M(x+y)\\M(x)\\M(y)
	\end{bmatrix}
	\to
	\frac{1}{2}\left(M(x+y)-M(x)-M(y)\right),	
	\]
	It is obvious that
	\[
	|\tilde{M}(x,y)-xy| \leq \frac{3}{2} \cdot 2^{-wl} = \^O(2^{-wl})\coloneqq \epsilon,
	\]
	and the number of non-zero parameters for $\tilde{M}(x,y)$ is also in the order of $O(w^2l)$.
	Based on the above observation, we can construct an MCN such that
	approximate $\prod_{i=1}^p x_i$. Denote $i\coloneqq \left\lceil\log _{2}(p)\right\rceil$. In the first layer, we computer
	\[
	\*x \to \left[x_1,\cdots,x_p,\underbrace{1,\cdots,1}_{2^i - q}\right]^\top\coloneqq \*y,
	\]
	then we define the multivariate version $\tilde{M}(\*y)$ for  $\*y \in \#R^{2^j}$, whre $j \in \#N_+$,
	\[
	\tilde{M}(\*y) \coloneqq 
	 \*y \to
	\begin{bmatrix}
	\tilde{M}(y_1,y_2)\\\vdots\\\tilde{M}(y_{2^j},y_{2^j-1})	\end{bmatrix}.
	\]
	Then we can have a $(li+5i+1)$-layer MCN $\tilde{M}_p(\*x)$, with the width be $wp$, such that,
	\[
	\tilde{M}_p(\*x)\coloneqq 
	\underbrace{\tilde{M}\circ  \cdots \circ \tilde{M}(\*y)}_{i}.
	\]
	Note that, for $a,b,c,d \in [-1,1]$, we have
	\[
	\tilde{M}(a,b) - cd \leq \epsilon + |a-c| + |b-d|.
	\]
	Recall that $i\coloneqq \left\lceil\log _{2}(p)\right\rceil$ and omit the high order terms of $\epsilon$, we get
	\[
	\left|\tilde{M}_p(\*x) - \prod_{i=1}^p x_i\right|\leq \sum_{k=0}^{i-1} 2^{k} \epsilon \leq 2^i \epsilon = \^O(p2^{-wl}).
	\]
	It is easy to verify that the number of non-zero parameters is in the order of $\^O(2^i + p\cdot w^2l) = \^O(pw^2l)$.	
\end{proof}

\begin{claim}\label{clm:appro_poly2}
	The function $f(x) = \sum_{j=1}^p a_j x^j$, where and $  x \in[-1,1]$, can be approximated by MCN $M_{\operatorname{poly}}$ of width $\^O(wp\ln p)$ and depth $\^O(l\ln p)$, with the error bound as: 
	\[
	\left|\tilde{M}_{\operatorname{poly}}(x) - \sum_{j=1}^p a_j x^j\right| \leq \^O(\|\*a\|_1p^22^{-wl}).
	\]
	The number of non-zero parameters for this MCN is in the order of $\^O(w^2lp\ln p )$.
\end{claim}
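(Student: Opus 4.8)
The plan is to assemble $M_{\operatorname{poly}}$ in three stages, reusing the building blocks of Claims~\ref{clm:apprx_x2} and~\ref{clm:appro_poly} and the concatenation branch $\mathcal{L}_{k+1}$ of~(\ref{Eq:LM-CVNN}). The naive idea of computing all monomials $x,x^2,\dots,x^p$ by running the product networks $\tilde M_j$ of Claim~\ref{clm:appro_poly} in parallel fails because it costs width $\sum_{j\le p}\mathcal{O}(wj)=\mathcal{O}(wp^2)$; the fix is to first produce only the $\mathcal{O}(\ln p)$ dyadic powers $x^{2^0},x^{2^1},\dots,x^{2^m}$ with $m=\lceil\log_2 p\rceil$, and then build each $x^j$ from the binary expansion of $j$.

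\textbf{Stage 1 (dyadic powers).} Starting from $x^{2^{k+1}}=(x^{2^k})^2$, apply the single-variable squaring network $M(\cdot)$ of Claim~\ref{clm:apprx_x2} $m$ times in succession, carrying every power produced so far forward through the linear branch $\mathcal{L}_{k+1}$ (setting the $\gamma(\tilde{\mathcal A}_{k+1}(\cdot))$ inputs to zero as in the proof of Claim~\ref{clm:apprx_x2}). This costs depth $\mathcal{O}(ml)=\mathcal{O}(l\ln p)$, width $\mathcal{O}(w+m)=\mathcal{O}(w\ln p)$, and $\mathcal{O}(mw^2l)=\mathcal{O}(w^2l\ln p)$ non-zero parameters. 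To control the error I would track $\delta_k$, the error of the computed $\widehat{x^{2^k}}$; since $|a^2-b^2|\le 2|a-b|$ on $[-1,1]$ and each squaring module has error $\mathcal{O}(2^{-wl})$, one gets $\delta_{k+1}\le 2\delta_k+\mathcal{O}(2^{-wl})$, hence $\delta_k\le\mathcal{O}(2^{k}2^{-wl})\le\mathcal{O}(p\,2^{-wl})$ for all $k\le m$. Note all $x^{2^k}\in[0,1]$, so the $[-1,1]$-domain hypotheses of Claims~\ref{clm:apprx_x2}--\ref{clm:appro_poly} are satisfied.

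\textbf{Stage 2 (all monomials) and Stage 3 (linear combination).} For each $j\in\{1,\dots,p\}$ write $j=\sum_{k\in S_j}2^k$, so $x^j=\prod_{k\in S_j}x^{2^k}$ is a product of at most $m+1=\mathcal{O}(\ln p)$ of the Stage-1 outputs; approximate it by a copy of the product network of Claim~\ref{clm:appro_poly} applied to $q=|S_j|\le m+1$ inputs. Running these $p$ product networks in parallel costs depth $\mathcal{O}(l\ln m)\subseteq\mathcal{O}(l\ln p)$, width $p\cdot\mathcal{O}(wm)=\mathcal{O}(wp\ln p)$, and $p\cdot\mathcal{O}(mw^2l)=\mathcal{O}(w^2lp\ln p)$ non-zero parameters. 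A final learnable linear layer outputs $\sum_{j=1}^p a_j\widehat{x^j}$. Using the inequality $|\tilde M(a,b)-cd|\le\epsilon+|a-c|+|b-d|$ from the proof of Claim~\ref{clm:appro_poly}, composed over the binary tree, gives $|\widehat{x^j}-x^j|\le\mathcal{O}(m\,2^{-wl})+m\cdot\mathcal{O}(p\,2^{-wl})=\mathcal{O}(p\ln p\,2^{-wl})$, so that $|M_{\operatorname{poly}}(x)-\sum_j a_jx^j|\le\|\*a\|_1\max_j|\widehat{x^j}-x^j|=\mathcal{O}(\|\*a\|_1 p\ln p\,2^{-wl})\subseteq\mathcal{O}(\|\*a\|_1 p^2 2^{-wl})$. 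Summing the three stages yields the claimed width $\mathcal{O}(wp\ln p)$, depth $\mathcal{O}(l\ln p)$, and $\mathcal{O}(w^2lp\ln p)$ non-zero parameters.

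\textbf{Main obstacle.} The delicate point is the error bookkeeping in Stage~1: each squaring can double the accumulated error, so one must verify that after $\mathcal{O}(\ln p)$ squarings the error stays $\mathcal{O}(p\,2^{-wl})$ rather than growing exponentially, and that all intermediate values remain in the range where Claims~\ref{clm:apprx_x2}--\ref{clm:appro_poly} apply (here automatic since $x^{2^k}\in[0,1]$, with the minor rescaling for sums leaving $[-1,1]$ already handled in those claims). Everything else is a routine assembly of already-established sub-networks via the concatenation operator of~(\ref{Eq:LM-CVNN}).
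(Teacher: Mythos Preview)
Your proposal is correct and follows the same strategy as the paper: compute the dyadic powers $x^{2^k}$ for $k\le\lceil\log_2 p\rceil$, then build each $x^j$ from its binary expansion as a product of $\^O(\ln p)$ of them via Claim~\ref{clm:appro_poly}, and finally take the linear combination $\sum_j a_j\widehat{x^j}$. The only cosmetic differences are that the paper obtains the dyadic powers by reading off the intermediate layers of $\tilde M_p([x,\dots,x])$ through skip-connections rather than by explicit repeated squaring, and bounds the per-monomial error by $\sum_k\delta_k=\^O(p\,2^{-wl})$ instead of your $m\cdot\max_k\delta_k=\^O(p\ln p\,2^{-wl})$; both sit inside the stated $\^O(\|\*a\|_1 p^2 2^{-wl})$.
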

\begin{proof}
	We first the copy $x$ $p$-times
	\[
	\*x_p \coloneqq [\underbrace{x,\cdots,x}_{p}]^\top.
	\] 
	We then apply the MCN $\tilde{M}_p(\*x_p)$ in Claim \ref{clm:appro_poly} to it to approximate $x^p$. 
	Interestingly, since MCN has the skip-connection with any previous layer by the operator $\^A_k()$, hence from the MCN $\tilde{M}_p(\*x_p)$ in Claim \ref{clm:appro_poly} we can extract
	\[
	\*y \coloneqq
	\left[\tilde{M}_1(\*x_p),\tilde{M}_2(\*x_p),\tilde{M}_4(\*x_p),\cdots,\tilde{M}_{2^i}(\*x_p)\right] \approx \left[x,x^2,x^4,\cdots,x^{2^i}\right],
	\]
	where $i\coloneqq \left\lceil\log _{2}(p)\right\rceil$.
	We now append $p$ sub-MCNs on $\*y$ to approximate $x^j$ for $j=1,\cdots,p$. Each sub-MCN first need to choose components from $\*y$, then use the MCN $\tilde{M}_p(\cdot)$ in Claim \ref{clm:appro_poly}  to ``multiply'' the components, e.g., 
	\[
	x^7 = x\cdot x^2\cdot x^4 \approx \tilde{M}_3\left(
	\begin{bmatrix}
	\tilde{M}_1(\*x_p)\\\tilde{M}_2(\*x_p)\\\tilde{M}_4(\*x_p)	\end{bmatrix}
	\right).
	\]
	By the property of telescoping sum and the results in the previous Claim, the approximation error for $x^7$ is in the order $\^O(3\cdot2^{-wl} + 3 \cdot2^{-wl})$. Actually, finding such a sub-MCN for $x^j$ is equivalent to expressing $j$ in binary. Hence, the approximation error for each sub-MCN $\tilde{M}^{\operatorname{sub}}_j$ which aims at $x^j$ is
	\[
	\left|\tilde{M}^{\operatorname{sub}}_j(\*y) -x^j\right| \leq \^O\left(\ln p \cdot 2^{-wl} + (\sum_{k=0}^i 2^k) \cdot 2^{-wl}\right) = \^O(p 2^{-wl}).	
	\]
	Therefore, Let
	\[
	\tilde{M}_{\operatorname{poly}}(\*x) \coloneqq \sum_{j=1}^p a_j \tilde{M}^{\operatorname{sub}}_j(\*y),
	\]
	then
	\[
	\left|\tilde{M}_{\operatorname{poly}}(\*x) - \sum_{j=1}^p a_j x^j\right| \leq \^O(\|\*a\|_1p^22^{-wl}).
	\]
	The total number of non-zero parameters for $\tilde{M}^{\operatorname{sub}}_j(\*y)$ is in the order of
	\[
	\^O\left(\sum_{k=1}^i \binom{i}{k} k w^2l\right) = \^O(p\ln p w^2l).
	\]
	Hence, by adding the parameters in MCN $\tilde{M}_p(\*x_p)$ which maps $\*x_p$ to $\*y$, the non-zero parameters of $\tilde{M}_{\operatorname{poly}}(\*x)$ is in the order
	\[
	\^O(p\ln p w^2l + 2p+ \ln p + p w^2l) = \^O( w^2lp\ln p).
	\]
\end{proof}

\begin{claim}\label{clm:appro_sin}
	The function $f(\*x) = \cos(n\pi x)$ or $f(\*x) = \sin\left((n-\frac{1}{2})\pi x\right)$, where $n \in \mathbb{N}_{+}=\mathbb{N} \backslash \{0\}$ and $  x \in[-1,1]$, can be approximated by MCNs $M_{\cos}$ and $M_{\sin}$ of width $\^O(wp\ln p)$ and depth $\^O(l\ln p + n^2)$, with the proper activation function and the error bound is: 
	\[
	\epsilon = \^O(p^{-p}\exp(p) + p^2 2^{-wl}).
	\]
	The number of non-zero parameters for this MCN is in the order of $\^O\left(w^2lp\ln p+n^2 \right)$.
\end{claim}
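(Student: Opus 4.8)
The plan is to assemble $M_{\cos}$ (and $M_{\sin}$ in the same way) in three stages: a Taylor polynomial that approximates the \emph{base} frequency-one function on $[-1,1]$, a realization of that polynomial by an MCN through Claim~\ref{clm:appro_poly2}, and a multiple-angle ``lift'' from frequency $1$ to frequency $n$ carried out with the product gadgets of Claims~\ref{clm:appro_poly}--\ref{clm:appro_poly2}. Throughout, ``proper activation function'' means the piecewise-linear regime in which the $\max$ operator is used natively and $\sigma_k$ acts as the identity, i.e.\ the regime under which the sawtooth, $x^2$ and product gadgets of Claims~\ref{clm:apprx_x2}--\ref{clm:appro_poly2} are exact as constructed.

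First I would handle the base function. Expanding $\cos(\pi x)=\sum_{k\ge0}\frac{(-1)^k\pi^{2k}}{(2k)!}x^{2k}$ (and $\sin\!\big((\cdot-\tfrac12)\pi x\big)$ by its Maclaurin series) and truncating at degree $\^O(p)$, the Lagrange remainder is at most $\pi^{p+1}/(p+1)!$, which by Stirling equals $\^O(p^{-p}\exp(p))$, while the $\ell_1$-norm of the coefficient vector $\*a$ is bounded by $\cosh\pi=\^O(1)$. Feeding this degree-$p$ polynomial into Claim~\ref{clm:appro_poly2} yields an MCN of width $\^O(wp\ln p)$ and depth $\^O(l\ln p)$ whose output differs from $\cos(\pi x)$ by $\^O(p^{-p}\exp(p))+\^O(\|\*a\|_1p^22^{-wl})=\^O(p^{-p}\exp(p)+p^22^{-wl})$; denote this error by $\epsilon_0$. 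Running two such copies side by side (which costs nothing in order, since the concatenation branch $\^L_k$ already carries extra coordinates) produces $c_1\approx\cos(\pi x)$ and $s_1\approx\sin(\pi x)$ at once, both within $\epsilon_0$, and a final pair of $\max/\min$ layers clips them into $[-1,1]$ without adding error.

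Next I would lift the frequency by the angle-addition (de Moivre) recursion $c_{k+1}=c_1c_k-s_1s_k$, $s_{k+1}=s_1c_k+c_1s_k$ for $k=1,\dots,n-1$, so that $c_n\approx\cos(n\pi x)$; the half-integer case starts the recursion from $(\cos\tfrac{\pi x}{2},\sin\tfrac{\pi x}{2})$. Each step uses only a constant number of two-input products, supplied by the gadget $\tilde M(\cdot,\cdot)$ of Claim~\ref{clm:appro_poly}, and the skip connections $\^A_k(\*x_{\hat k})$ of MCN keep $c_1,s_1$ available at every layer, so only $\^O(1)$ coordinates are carried and the width stays $\^O(wp\ln p)$. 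Since all intermediate quantities are clipped to $[-1,1]$ the recursion is only mildly ill-conditioned, but to be safe I would give the product gadget used at step $k$ a depth of $\^O(n)$ (equivalently precision $2^{-\Theta(n)}$ at fixed width), which dominates any per-step error amplification a crude bound on the recursion produces. The accumulated lifting error is then $\^O(n\,\epsilon_0)$, and this extra factor of $n$ is reabsorbed into the stated bound by replacing $p$ with $p+\^O(\ln n)$ --- an adjustment that changes width, depth and parameter count only by $\ln n$-factors, which are swallowed by the $n^2$ terms. The $n-1$ gadget blocks contribute $\^O(n^2)$ extra layers and $\^O(n^2)$ extra non-zero weights, so $M_{\cos}$ ends up with width $\^O(wp\ln p)$, depth $\^O(l\ln p)+\^O(n^2)=\^O(l\ln p+n^2)$, $\^O(w^2lp\ln p)+\^O(n^2)$ non-zero parameters, and error $\^O(p^{-p}\exp(p)+p^22^{-wl})$ --- exactly the claim.

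The step I expect to be the main obstacle is the error bookkeeping in the lifting stage: one must verify that clipping the intermediates to $[-1,1]$ genuinely tames the conditioning of the trigonometric recursion (a perturbation should propagate with at most polynomial, ideally linear, growth in $n$, not exponential), and then balance the $n$ gadget depths so that the total lift depth is $\^O(n^2)$ while the cumulative error still sits inside the $\epsilon_0$ term after the $p\mapsto p+\^O(\ln n)$ adjustment. The remaining width, depth and parameter counting is routine once the gadgets of Claims~\ref{clm:apprx_x2}--\ref{clm:appro_poly2} are in place.
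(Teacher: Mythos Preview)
Your approach is valid but takes a genuinely different route from the paper's, so it is worth spelling out the contrast.

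For the base frequency, the paper does \emph{not} use a single Maclaurin expansion on $[-1,1]$: it subdivides $[-\pi,\pi]$ into sub-intervals of length at most $1$ and Taylor-expands on each, so that the remainder is $\mathcal{O}(1/p!)=\mathcal{O}(p^{-p}\exp(p))$ directly. Your sentence ``the Lagrange remainder is at most $\pi^{p+1}/(p+1)!$, which by Stirling equals $\mathcal{O}(p^{-p}\exp(p))$'' is off by a factor of $\pi^{p}$; you should either adopt the paper's subdivision or (equivalently) truncate at degree $\Theta(p)$ with a constant $\ge e\pi$, which keeps width $\mathcal{O}(wp\ln p)$.

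For the lift from frequency $1$ to frequency $n$, the paper does something entirely different from your de~Moivre recursion. It uses periodicity,
\[
\cos(n\pi x)=\cos\!\Big(n\pi x-\big\lfloor \tfrac{xn^2}{2}\big\rfloor\tfrac{2\pi}{n}\Big),
\]
and realizes the floor on $[0,n^2/2]$ by the sum $\sum_{j=1}^{\lfloor n^2/2\rfloor}\sigma(x-j)$ with $\sigma$ the \emph{binary step} activation --- this is precisely what ``proper activation function'' means in the statement, not the identity/piecewise-linear regime you assumed. That floor network has depth and parameter count $\mathcal{O}(n^2)$, which is where those terms in the claim come from. The half-integer sine is then obtained from the Dirichlet-kernel identity
\[
1+2\sum_{k=1}^{n}\cos(kx)=\frac{\sin\big((n+\tfrac12)x\big)}{\sin(x/2)},
\]
reusing the $\cos(k\pi x)$ already available from intermediate layers.

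Your de~Moivre scheme is a reasonable alternative that avoids the special step activation, and you correctly identify the crux: you must show the error in the recursion grows only polynomially in $n$. The coordinatewise bound $\delta_{k+1}\le 2\delta_k+\cdots$ is too crude and gives $2^n$ blow-up; the right argument is to track $(c_k,s_k)$ as a vector acted on by the near-rotation $\begin{pmatrix}c_1&-s_1\\ s_1&c_1\end{pmatrix}$, whose operator norm is $1+\mathcal{O}(\epsilon_0)$, giving linear growth $\|e_n\|=\mathcal{O}(n(\epsilon_0+\eta))$ whenever $n\epsilon_0=\mathcal{O}(1)$. With that in hand, your depth-$\mathcal{O}(n)$ per-step gadgets and the $p\mapsto p+\mathcal{O}(\ln n)$ adjustment (together with an $l\mapsto l+\mathcal{O}(\log n)$ tweak to absorb the factor of $n$ in front of $p^2 2^{-wl}$) indeed recover the stated bounds. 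So your proposal can be completed, but it is not the paper's proof; what it buys you is that everything stays in the $\max$-only regime, at the cost of a more delicate stability analysis that the paper sidesteps entirely via the floor trick.
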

	\begin{proof}
		We first consider the case $n=1$ for $\cos(n\pi x)$. Let $y \coloneqq \pi x$, then $y\in [-\pi,\pi]$. We now need to construct an MCN to approximate $\cos(y)$ on the interval $[-\pi,\pi]$. First, we can divide the interval $[-\pi,\pi]$ into several sub-intervals and each sub-interval has the length smaller than $1$, e.g., $[0,\pi/4]$ and $[\pi/4,\pi/2]$. Then we perform the Taylor expansion on each sub-interval, say $[0,\pi/4]$  for example. Since the derivative of $\cos(y)$ up to any order is bounded, the proof for other sub-interval share a similar roadmap. Note that
		\[
		\cos(y) = \sum_{n=0}^{\infty}(-1)^{n} \frac{y^{2 n}}{(2 n) !},
		\]
		Hence, when the even number $p$ is large, we have
		\[
		\left|\cos(y) - \sum_{n=0}^{p}(-1)^{n} \frac{y^{2 n}}{(2 n) !}\right| \leq \^O(\frac{|y|^p}{p!}) \leq \^O(\frac{1}{p!}) = \^O\left(p^{-p-\frac{1}{2}} \exp(p)\right),
		\]
		where the last equality comes from the Stirling's formula. Based on the results in Claim \ref{clm:appro_poly2}, there exists an MCN $M^{n=1}_{\cos}$ such that
		\[
		\tilde{M}^{n=1}_{\cos} \approx \sum_{n=0}^{p}(-1)^{n} \frac{y^{2 n}}{(2 n) !},\quad \forall x\in [0,\pi],
		\]
		with the approximation error in the order $\^O(p^22^{-wl}\exp(1))$,
		hence we parallelize all the MCNs $\tilde{M}^{n=1}_{\cos}$ on each sub-interval and obtain a final MCN $M^{n=1}_{\cos}$ of width $O(wp\ln p)$ and depth $O(l \ln p)$ such that
		\[
		\left|M^{n=1}_{\cos}(x) -\cos(\pi x)\right| \leq \^O(p^{-p}\exp(p) + p^2 2^{-wl})\coloneqq \epsilon.
		\]
		By the periodicity of $\cos(x)$, we have
		\[
		\cos (n \pi x) = \cos(n\pi x - \lfloor \frac{xn^2}{2} \rfloor \frac{2\pi}{n}),
		\]
		where $\lfloor \cdot \rfloor$ is the floor operator. We now need to construct an MCN which can exact perform the floor operator. Actually this can be easily implemented by choosing proper activation. Let the activation be the binary step function:
		\[
		\sigma(x)=\left\{\begin{array}{ll}
		{0}, & {\text { for } x<0}, \\
		{1}, & {\text { for } x \geq 0}.
		\end{array}\right.
		\]
		Then we can obtain the floor operator on the interval $[0,n^2/2]$ by an MCN $M_f$ of width $O(1)$ and depth $O(n^2)$
		\[
		M_f(x) = \sum_{j=1}^{\lfloor\frac{n^2}{2}\rfloor} \sigma(x-j),
		\]
		By the oddness of the floor operator, we can obtain $\lfloor y \rfloor$ for $y \in [-n^2/2,0]$ without adding the depth.
		Hence, we can have an MCN $M^{n}_{\cos}$ of width $O(wp\ln p)$ and depth $O(l \ln p + n^2)$ such that
		\[
		\left|M^{n}_{\cos}(x) - \cos(n\pi x)\right| \leq  \epsilon ,\quad \forall x \in [1,1].
		\]
		It is obvious that the number of the non-zero parameters of $M^{n}_{\cos}(x)$ is in the order
		\[
		\^O\left(w^2lp\ln p+n^2 \right).
		\]
		Note that we can get the approximation of $\cos(k\pi x)$ for all $k = 1,\cdots,n$ from the intermediate layers of $M^{n}_{\cos}(x)$ without recalculation. 
		Recall the definition of the Dirichlet kernel, we have
		\[
		1+2 \cos x+2 \cos 2 x+2 \cos 3 x+\cdots+2 \cos (n x)=\frac{\sin \left[\left(n+\frac{1}{2}\right) x\right]}{\sin \frac{x}{2}},
		\]
		Hence, we can easily obtain the approximation of $\sin\left((n-\frac{1}{2})\pi x\right)$ based on the intermediate layers of MCN $M^{n}_{\cos}(x)$ without add the size of network.
	\end{proof}
	Now let 
	\[
	\phi_{0}^{[0]}(x)=\frac{1}{\sqrt{2}},\quad \phi_{n}^{[0]}(x)=\cos (n \pi x), \quad \phi_{n}^{[1]}(x)=\sin \left(\left(n-\frac{1}{2}\right) \pi x \right),
	\]
	where
	\[
	n \in \mathbb{N}_{+},\quad x \in[-1,1].
	\]
	Given multi-indices $\*n = (n_1,\cdots,n_d) \in \#N^d$ and $\*i = (i_1,\cdots,i_d) \in \{0, 1\}^d$, we define a d-variate functions
	\[
	\phi_{\*n}^{[\*i]}(\*x)=\prod_{j=1}^{d} \phi_{n_{j}}^{\left[i_{j}\right]}\left(x_{j}\right), \quad \*x=\left(x_{1}, \ldots, x_{d}\right) \in[-1,1]^{d}.
	\]
	From a standard result of spectral theory, the set $\left\{\phi_{\*n}^{[\*i]}:\* n \in \mathbb{N}^{d}, \*i \in\{0,1\}^{d}\right\}$ is an orthonormal basis of $\mathrm{L}^{2}(-1,1)^{d}$. We can also construct MCNs which approximate $\phi_{\*n}^{[\*i]}(\*x)$ well.
	\begin{claim}\label{clm:appro_phi}
		The function $\phi_{\*n}^{[\*i]}(\*x)$ can be approximated by MCNs $M_{\phi}$ of width $\^O(dwp\ln p)$ and depth $\^O(l\ln p + \|\*n\|^2_\infty )$, with the error bound as: 
		\[
		\left|M_{\phi}(\*x) - \phi_{\*n}^{[\*i]}(\*x)\right| = \^O\left(d\left(p^{-p}\exp(p) + p^2 2^{-wl}\right)\right).
		\]
		The number of non-zero parameters for this MCN is in the order of $\^O\left(dw^2p\ln p+ \|\*n\|^2_2\right)$.
	\end{claim}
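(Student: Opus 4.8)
The plan is to realize the product structure $\phi_{\*n}^{[\*i]}(\*x) = \prod_{j=1}^{d}\phi_{n_j}^{[i_j]}(x_j)$ by \emph{parallel composition} of $d$ univariate approximators, one per coordinate, followed by the multiplication network already built in Claim~\ref{clm:appro_poly}. First I would invoke Claim~\ref{clm:appro_sin} on each coordinate $x_j$ separately to obtain an MCN $M^{[i_j]}_{n_j}$ of width $\^O(wp\ln p)$ and depth $\^O(l\ln p + n_j^2)$, with $\^O(w^2 l p\ln p + n_j^2)$ non-zero parameters, satisfying $|M^{[i_j]}_{n_j}(x_j) - \phi^{[i_j]}_{n_j}(x_j)| \le \epsilon_0$ with $\epsilon_0 = \^O(p^{-p}\exp(p) + p^2 2^{-wl})$. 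Stacking these $d$ sub-networks so that they act on the disjoint coordinates of $\*x$ and their outputs are concatenated yields a single MCN of width $\sum_j \^O(wp\ln p) = \^O(d w p\ln p)$ and depth $\max_j \^O(l\ln p + n_j^2) = \^O(l\ln p + \|\*n\|^2_\infty)$; summing parameter counts gives $\^O(d w^2 l p\ln p + \sum_j n_j^2) = \^O(d w^2 l p\ln p + \|\*n\|^2_2)$.

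Second, I would feed the concatenated vector $\*z = (z_1,\dots,z_d)$, with $z_j \coloneqq M^{[i_j]}_{n_j}(x_j)$, into the product MCN $\tilde M_d(\cdot)$ of Claim~\ref{clm:appro_poly} (taking the parameter $p$ there to be $d$), and set $M_{\phi}(\*x)\coloneqq \tilde M_d(\*z)$. This appends only $\^O(wd)$ extra width and $\^O(l\ln d)$ extra depth, both absorbed into the bounds above, and a handful of extra parameters. Since $\phi^{[i_j]}_{n_j}(x_j)\in[-1,1]$ and $|z_j - \phi^{[i_j]}_{n_j}(x_j)|\le\epsilon_0$, each $z_j$ lies within $\epsilon_0$ of $[-1,1]$, so a cheap clipping step — two applications of the $\max$ operator with sign flips, expressible inside a single MCN block — brings $\*z$ into the hypercube $[-1,1]^d$ where Claim~\ref{clm:appro_poly} applies, at the cost of merely absorbing the $\epsilon_0$-slack.

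For the error bound I would propagate the elementary telescoping inequality $|ab-cd|\le|a|\,|b-d|+|d|\,|a-c|$ along the binary-multiplication tree of $\tilde M_d$: each coordinate contributes at most $\epsilon_0$ and $\tilde M_d$ itself contributes its intrinsic $\^O(d\,2^{-wl})$ error from Claim~\ref{clm:appro_poly}, so
\[
\Big|M_{\phi}(\*x)-\prod_{j=1}^{d}\phi^{[i_j]}_{n_j}(x_j)\Big| \le \^O\big(d\epsilon_0 + d\,2^{-wl}\big) = \^O\!\big(d(p^{-p}\exp(p)+p^2 2^{-wl})\big),
\]
using that $\epsilon_0$ already dominates $2^{-wl}$; this is exactly the claimed error, and the parameter count follows from adding the $d$ univariate counts to those of $\tilde M_d$.

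The main obstacle I anticipate is the \emph{depth alignment} in the first step: the $d$ coordinate sub-networks have genuinely different depths (roughly $l\ln p + n_j^2$), so to concatenate their outputs at a common layer I must route the outputs of the shallower ones forward without inflating the width or the $\|\*n\|^2_2$ parameter budget. The key observation is that once a univariate sub-network has produced its scalar output, carrying that scalar costs only $\^O(1)$ per layer, and MCN already permits exactly this kind of pass-through — either through the linear concatenation branch $\^L_k(\cdot)$ or through the skip connection $\^A_k(\*x_{\hat k})$ reaching back to an earlier layer. Once this routing is pinned down, everything else is routine bookkeeping of widths, depths and sparsity levels.
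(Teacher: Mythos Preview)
Your proposal is correct and follows essentially the same approach as the paper: apply Claim~\ref{clm:appro_sin} coordinate-wise in parallel, then compose with the product network $\tilde M_d$ from Claim~\ref{clm:appro_poly}, and combine the errors additively. If anything you are more careful than the paper, which does not explicitly address the depth-alignment or clipping issues you flag; both are handled exactly as you suggest, via the pass-through afforded by $\^L_k$ or the skip connection $\^A_k(\*x_{\hat k})$.
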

\begin{proof}
	For each entry of the vector $\*x$, we append the MCNs $M^{(n=n_j)}_{\cos}$ or $M^{(n=n_j)}_{\sin}$ form the Claim \ref{clm:appro_sin} to approximate the function $\phi_{n_{j}}^{\left[i_{j}\right]}\left(x_{j}\right)$. Then, we ``multiply'' the functions $\phi_{n_{j}}^{\left[i_{j}\right]}\left(x_{j}\right)$ at the last layer by the MCN in Claim \ref{clm:appro_poly}, hence the  approximation error  is
	\[
	\^O\left(d\left(p^{-p}\exp(p) + p^2 2^{-wl}\right)+p2^{-wl}\right) = 	\^O\left(d\left(p^{-p}\exp(p) + p^2 2^{-wl}\right)\right),
	\]
	while $M_{\phi}(\*x)$ is in the width $\^O(dwp\ln p)$ and depth $\^O(l\ln p + \|\*n\|^2_\infty )$. We sum all the parameters in the $M^{(n=n_j)}_{\cos}$ or $M^{(n=n_j)}_{\sin}$, the non-zero parameters for $M_{\phi}(\*x)$ is in the order of
	\[
	\^O\left(dw^2p\ln p+ \|\*n\|^2_2\right).
	\]
\end{proof}
Claim \ref{clm:appro_phi} shows that there exists MCNs $M_{\phi}$ such can approximate the orthonormal basis of $\mathrm{L}^{2}(-1,1)^{d}$ well.
\par
For a function $\*f \in \mathrm{L}^{2}(-1,1)^{d}$, a truncation parameter $N \in \#N$ and finite index set $I_N \in \#N^d$, we can get the truncated Fourier series of $\*f$
\[
\mathcal{F}_{N}[\*f](\*x)=\sum_{\*i \in[0,1]^{d},~ \*n \in I_{N}} \hat{\*f}_{\*n}^{[\*i]} \phi_{\*n}^{[\*i]}(\*x), \quad \text { where } \quad \hat{\*f}_{\*n}^{[\*i]}=\int_{(-1,1)^{d}} \*f(\*x) \phi_{\*n}^{[\*i]}(\*x) \mathrm{d} \*x.
\]
Before preceding, we provide a previous result to bound the Fourier coefficients.
\begin{lemma}\label{lem:Fourier_bound}
	Suppose that $\*f$ satisfy the Condition $\ref{cond:r2f}$. Then
	\[
	\left|\hat{\*f}_{\*n}^{[\*i]}\right|\leq C(\chi(n),d,k)
	\left(\bar{n}_{1} \cdots \bar{n}_{d}\right)^{-2(s+1)}\|f\|_{2 s+2, \^H}, \quad \*n \in \#{N}^{d}
	\]
	where $\bar{m}=\max \{m, 1\}$ for $ m \in \mathbb{N}$, $C(\chi(n),d,k)$ is a constant only depends on the $\chi(n)$ (the number of non-zero entries in $\*n$), dimension $d$ and the smoothness of $\*f$; and
	\[
	\|\*f\|_{s, \^H}^{2}=\sum_{\|\bm{\alpha}\|_{\infty} \leq s}\left\|\mathrm{D}^{\bm{\alpha}} \*f\right\|^{2},
	\]
\end{lemma}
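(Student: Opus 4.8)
The plan is to prove the bound by iterated integration by parts, exploiting the tensor-product form of $\phi_{\*n}^{[\*i]}$ together with the homogeneous Neumann conditions of Condition~\ref{cond:r2f}. Starting from $\hat{\*f}_{\*n}^{[\*i]}=\int_{(-1,1)^{d}}\*f(\*x)\,\phi_{\*n}^{[\*i]}(\*x)\,\mathrm{d}\*x$, I would use Fubini to process one coordinate $j$ at a time. For a coordinate with $n_j\geq 1$, set $\lambda_j=n_j\pi$ when $i_j=0$ and $\lambda_j=(n_j-\tfrac12)\pi$ when $i_j=1$, so that $\partial_{x_j}^{2}\phi_{n_j}^{[i_j]}=-\lambda_j^{2}\,\phi_{n_j}^{[i_j]}$; then replace $\phi_{n_j}^{[i_j]}$ by $-\lambda_j^{-2}\partial_{x_j}^{2}\phi_{n_j}^{[i_j]}$ and integrate by parts twice in $x_j$.

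First I would check that both boundary terms produced by this double integration by parts vanish. The term carrying $\partial_{x_j}\phi_{n_j}^{[i_j]}$ evaluated at $x_j=\pm1$ vanishes for free, since $\partial_{x_j}\cos(n_j\pi x_j)\propto\sin(n_j\pi x_j)$ is zero at $\pm1$ and $\partial_{x_j}\sin((n_j-\tfrac12)\pi x_j)\propto\cos((n_j-\tfrac12)\pi x_j)$ is zero at $\pm1$ precisely because of the half-integer frequency; the term carrying $\partial_{x_j}\*f$ at $x_j=\pm1$ vanishes by the first Neumann condition $\partial_{x_j}\*f|_{\partial\Omega_j}=0$. This yields $\int\*f\,\phi_{n_j}^{[i_j]}\,\mathrm{d}x_j=-\lambda_j^{-2}\int\partial_{x_j}^{2}\*f\,\phi_{n_j}^{[i_j]}\,\mathrm{d}x_j$, and since the function type is preserved I would iterate: after $s+1$ iterations in coordinate $j$ one gets $\hat{\*f}_{\*n}^{[\*i]}=(-1)^{s+1}\lambda_j^{-2(s+1)}\int\partial_{x_j}^{2(s+1)}\*f\,\phi_{\*n}^{[\*i]}$, the $r$-th step consuming the odd-order condition $\partial_{x_j}^{2r+1}\*f|_{\partial\Omega_j}=0$ for $r=0,\dots,s$, which is exactly what Condition~\ref{cond:r2f} supplies (so one needs $s+1\leq\beta$ and $\*f\in\^H^{2s+2}$, both guaranteed there since $\*f\in\^H^{2\beta+2}$).

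Then I would run this in every coordinate with $n_j\geq1$, leaving the coordinates with $n_j=0$ (necessarily $i_j=0$, $\phi_0^{[0]}=1/\sqrt2$) untouched --- which is why the decay is stated with $\bar n_j=\max\{n_j,1\}$ and the constant is allowed to depend on $\chi(n)$. Commuting the weak derivatives, this gives $\hat{\*f}_{\*n}^{[\*i]}=\pm\big(\prod_{j:\,n_j\geq1}\lambda_j^{-2(s+1)}\big)\int_{(-1,1)^{d}}\mathrm{D}^{\bm{\alpha}}\*f\,\phi_{\*n}^{[\*i]}\,\mathrm{d}\*x$ with $\bm{\alpha}$ equal to $2(s+1)$ on the nonzero-$n_j$ coordinates and $0$ elsewhere, so $|\bm{\alpha}|_{\infty}=2s+2$. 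Cauchy--Schwarz with $\|\phi_{\*n}^{[\*i]}\|_{\mathrm{L}^{2}}=1$ bounds the integral by $\|\mathrm{D}^{\bm{\alpha}}\*f\|_{\mathrm{L}^{2}}\leq\|\*f\|_{2s+2,\^H}$, and $\lambda_j^{-2(s+1)}\leq(\pi/2)^{-2(s+1)}n_j^{-2(s+1)}$ (using $n_j-\tfrac12\geq n_j/2$ in the sine case) turns the product into $(\bar n_1\cdots\bar n_d)^{-2(s+1)}$ up to a factor depending only on $\chi(n)$ and the smoothness; collecting these constants into $C(\chi(n),d,k)$ gives the claimed inequality.

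I expect the main obstacle to be the boundary-term bookkeeping across the iterated integration by parts: one has to check at each stage that the Neumann condition available from Condition~\ref{cond:r2f} is exactly the one required, and that the half-integer frequency in $\phi_n^{[1]}$ is what annihilates the remaining boundary term --- everything else (Fubini, Cauchy--Schwarz, collecting the $\chi(n)$ constants) is routine. A secondary technical point is to justify the integration by parts at the level of weak derivatives, which I would handle by density of smooth functions in $\^H^{2\beta+2}$ together with continuity of the trace up to the needed order.
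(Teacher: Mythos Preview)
Your proposal is correct and is precisely the standard argument. The paper itself does not supply a proof of this lemma but simply refers to~\cite{olver2009convergence} and Theorem~2.14 of~\cite{adcock2010multivariate}, where exactly the iterated integration-by-parts route you describe (using $\phi''=-\lambda^{2}\phi$, the vanishing of $\partial_{x_j}\phi_{n_j}^{[i_j]}$ at $x_j=\pm1$, and the homogeneous Neumann conditions on the odd-order derivatives of $\*f$) is carried out.
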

\begin{proof}
	The proof can be found in~\cite{olver2009convergence} and Theorem 2.14 in~\cite{adcock2010multivariate}.
\end{proof}
We now suppose that $N = 2^r$ and let
\[
I_N = \bigcup_{\|\bm{\alpha}\|_1 \leq r} \rho(\bm{\alpha}),
\]
where
\[
\rho(\bm{\alpha})=\left\{\*n \in \#N^d:\left\lfloor 2^{\alpha_{j}-1}\right\rfloor \leq n_{j}<2^{\alpha_{j}}, ~~ j=1, \cdots, d\right\}, \quad \bm{\alpha} \in \#N^d.
\]
We consider the size of $I_N$ in the following lemma.
\begin{lemma}\label{lem:IN_bound}
	The number of terms in the set $I_N$ is
	\[
	\frac{N(\ln N)^{d-1}}{(d-1) !} +\^O\left(N(\ln N)^{d-2}\right).
	\]
\end{lemma}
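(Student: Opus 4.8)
The plan is to convert the cardinality of $I_N$ into an explicit one-variable generating-function coefficient and then extract its leading asymptotics. First I would observe that the dyadic blocks $\rho(\bm{\alpha})$, $\bm{\alpha}\in\#N^{d}$, are pairwise disjoint and cover all of $\#N^{d}$: for each coordinate the two-sided constraint $\lfloor 2^{\alpha_j-1}\rfloor\le n_j<2^{\alpha_j}$ determines $\alpha_j$ uniquely (explicitly $\alpha_j=\lceil\log_2(n_j+1)\rceil$). Hence $I_N$ is exactly the dyadic hyperbolic cross $\{\*n\in\#N^{d}:\sum_{j}\lceil\log_2(n_j+1)\rceil\le r\}$, and since each $\rho(\bm{\alpha})$ is a product set,
\[
|I_N|=\sum_{\substack{\bm{\alpha}\in\#N^{d}\\ \|\bm{\alpha}\|_1\le r}}\prod_{j=1}^{d}m(\alpha_j),\qquad m(0)=m(1)=1,\ \ m(k)=2^{\,k-1}\ (k\ge2),
\]
because the $j$-th factor $\{n_j:\lfloor 2^{\alpha_j-1}\rfloor\le n_j<2^{\alpha_j}\}$ contains $m(\alpha_j)$ integers.

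Next I would evaluate this finite sum in closed form. Setting $g(x)=\sum_{\alpha\ge0}m(\alpha)x^{\alpha}=1+x+\frac{2x^{2}}{1-2x}=\frac{1-x}{1-2x}$, one has $\sum_{\|\bm{\alpha}\|_1=k}\prod_j m(\alpha_j)=[x^{k}]\,g(x)^{d}$, so forming cumulative sums (multiplying by $1/(1-x)$) gives the exact identity
\[
|I_N|=[x^{r}]\,\frac{g(x)^{d}}{1-x}=[x^{r}]\,\frac{(1-x)^{d-1}}{(1-2x)^{d}}=\sum_{i=0}^{d-1}\binom{d-1}{i}(-1)^{i}\,2^{\,r-i}\binom{r-i+d-1}{d-1}.
\]
From here the asymptotics are routine: for each fixed $i$, $\binom{r-i+d-1}{d-1}$ is a degree-$(d-1)$ polynomial in $r$ with leading coefficient $1/(d-1)!$, hence equals $\frac{r^{d-1}}{(d-1)!}+\^O(r^{d-2})$; substituting and using $\sum_{i}\binom{d-1}{i}(-\tfrac12)^{i}=(\tfrac12)^{d-1}$ to collapse the top-order terms — while the finitely many $\^O(r^{d-2})$ pieces merge into one $\^O(r^{d-2})$ term — yields $|I_N|=\frac{2^{r}r^{d-1}}{2^{d-1}(d-1)!}+\^O(2^{r}r^{d-2})$. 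Recalling $N=2^{r}$ and $r=\log_2 N$, this is $\frac{N(\ln N)^{d-1}}{(d-1)!}+\^O(N(\ln N)^{d-2})$, the leading constant being as stated up to the base-change factor from $\ln$ versus $\log_2$.

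The only place needing care is the last step's bookkeeping: one must verify that the contributions of the ``boundary'' multi-indices (those with some $\alpha_j\in\{0,1\}$, where $m$ is not of the pure form $2^{\alpha_j-1}$) are genuinely absorbed into the $\^O(N(\ln N)^{d-2})$ remainder rather than disturbing the leading power of $\ln N$, and that the dependence of the implied constants on the fixed dimension $d$ is tracked consistently — the closed form of the second step is exactly what makes this transparent. As a cross-check and an alternative route, the first step identifies $I_N$ with a dyadic hyperbolic cross, so the lemma is essentially the classical generalized divisor count $\#\{\*n\in\#N^{d}:\prod_j\max(n_j,1)\le N\}=\frac{N(\ln N)^{d-1}}{(d-1)!}+\^O(N(\ln N)^{d-2})$, provable by induction on $d$ via the Dirichlet hyperbola method, with the dyadic rounding affecting only the implied constants.
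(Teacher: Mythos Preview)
The paper does not actually prove this lemma: its entire proof reads ``The proof for the size of $I_N$ can be found in \cite{huybrechs2011high}.'' So there is nothing to compare against at the level of technique --- your generating-function argument is a genuine, self-contained proof where the paper simply defers to external literature on hyperbolic crosses.

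Your computation is correct. The dyadic blocks $\rho(\bm{\alpha})$ do partition $\#N^{d}$ with the block sizes $m(\alpha)$ you state, the generating function $g(x)=(1-x)/(1-2x)$ is right, and the extraction $|I_N|=[x^{r}]\,(1-x)^{d-1}(1-2x)^{-d}$ followed by the binomial asymptotics is clean and elementary. The alternative route you sketch at the end (identify $I_N$ with a dyadic hyperbolic cross and invoke the classical generalized divisor asymptotic via Dirichlet's hyperbola method) is presumably closer in spirit to what the cited reference does, but your exact closed form is arguably more transparent.

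One small point worth sharpening: you note the leading constant matches ``up to the base-change factor from $\ln$ versus $\log_2$,'' but your own calculation gives $|I_N|\sim \dfrac{2^{r}r^{d-1}}{2^{d-1}(d-1)!}$ with $r=\log_2 N$, i.e.\ $\dfrac{N(\ln N)^{d-1}}{(2\ln 2)^{d-1}(d-1)!}$ in natural logarithms. That differs from the paper's stated constant $1/(d-1)!$ by a factor $(2\ln 2)^{d-1}$, not merely a log-base change --- the extra $2^{d-1}$ comes from the dyadic rounding in the definition of $\rho(\bm{\alpha})$ versus the smooth hyperbolic cross $\prod_j \bar n_j\le N$. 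This is harmless for the paper's purposes (it only uses the order $\^O(N(\ln N)^{d-1})$ downstream, and in fact immediately absorbs the leading term via $2^{d}/(d-1)!\to 0$), but you should state explicitly which constant is the true one rather than leave it as a vague caveat.
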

\begin{proof}
	The proof for the size of $I_N$ can be found in \cite{huybrechs2011high}.
\end{proof}
We now provide the asymptotic order of $\mathcal{F}_{\bm{\alpha}}[\*f](\*x)$.
\begin{lemma}\label{lem:f_alpha_bound}
	Suppose that $\*f$ satisfy the Condition $\ref{cond:r2f}$.
	Let
	\[
	\mathcal{F}_{\bm{\alpha}}[\*f](\*x)=\sum_{\*i \in\{0,1\}^{d}} \sum_{\*n \in \rho(\bm{\alpha})} \hat{\*f}_{\*n}^{[\*i]} \phi_{\*n}^{[\*i]}(\*x),\quad \bm{\alpha} \in \#N^d.
	\]
	Then we have
	\[
	\mathcal{F}_{\bm{\alpha}}[\*f](\*x)=\mathcal{O}\left(2^{-2(s+1)\|\bm{\alpha}\|_1}\right), \quad\|\bm{\alpha}\|_1 \rightarrow \infty.
	\]
\end{lemma}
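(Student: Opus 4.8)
The plan is to reduce the pointwise bound on $\mathcal{F}_{\bm\alpha}[\*f]$ to a purely arithmetic estimate over the dyadic frequency box $\rho(\bm\alpha)$, using only two ingredients: the basis functions are bounded, and their coefficients decay according to Lemma~\ref{lem:Fourier_bound}. Since every one-dimensional factor of $\phi_{\*n}^{[\*i]}$ is $\cos(n_j\pi x_j)$, $\sin\!\big((n_j-\tfrac12)\pi x_j\big)$, or the constant $1/\sqrt2$, we have $\big\|\phi_{\*n}^{[\*i]}\big\|_{L^\infty([-1,1]^{d})}\le 1$, so the triangle inequality gives
\[
\big|\mathcal{F}_{\bm\alpha}[\*f](\*x)\big|\ \le\ \sum_{\*i\in\{0,1\}^{d}}\ \sum_{\*n\in\rho(\bm\alpha)}\big|\hat{\*f}_{\*n}^{[\*i]}\big|,\qquad \*x\in[-1,1]^{d}.
\]
It therefore suffices to bound the right-hand side uniformly in $\*x$; the outer sum over $\*i$ is merely a multiplicative factor $2^{d}$, which I would absorb into the implied constant.

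Next I would insert the coefficient estimate of Lemma~\ref{lem:Fourier_bound}: for each $\*n\in\rho(\bm\alpha)$, $\big|\hat{\*f}_{\*n}^{[\*i]}\big|\le C(\chi(\*n),d,k)\,(\bar n_1\cdots\bar n_d)^{-2(s+1)}\,\|\*f\|_{2s+2,\^H}$, and since $\chi(\*n)$ ranges only over the finite set $\{0,1,\dots,d\}$, the prefactor is bounded by a constant $C_{d}$ depending on $d$ and the smoothness alone. The box structure of $\rho(\bm\alpha)$---namely $\lfloor 2^{\alpha_j-1}\rfloor\le n_j<2^{\alpha_j}$ separately in each coordinate---makes the remaining sum split into a product of one-dimensional sums,
\[
\sum_{\*n\in\rho(\bm\alpha)}\ \prod_{j=1}^{d}\bar n_j^{-2(s+1)}\ =\ \prod_{j=1}^{d}\Big(\ \sum_{\lfloor 2^{\alpha_j-1}\rfloor\le n_j<2^{\alpha_j}}\bar n_j^{-2(s+1)}\ \Big).
\]
For a coordinate with $\alpha_j=0$ the inner sum is the single value at $n_j=0$; for $\alpha_j\ge 1$ one uses $\bar n_j=n_j\ge 2^{\alpha_j-1}$ together with a comparison of the finite sum to $\int_{2^{\alpha_j-1}}^{2^{\alpha_j}}t^{-2(s+1)}\,\mathrm dt$ to control it by a constant (depending only on $s$) times the geometric decay associated with the left endpoint. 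Multiplying the $d$ coordinate estimates and collecting constants should then yield $\sum_{\*n\in\rho(\bm\alpha)}(\bar n_1\cdots\bar n_d)^{-2(s+1)}\le C_{d,s}\,2^{-2(s+1)\|\bm\alpha\|_1}$, and hence $\big|\mathcal{F}_{\bm\alpha}[\*f](\*x)\big|\le C_{d,s}\,\|\*f\|_{2s+2,\^H}\,2^{-2(s+1)\|\bm\alpha\|_1}$ uniformly in $\*x$, as claimed.

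The delicate point---and the step I expect to need the most care---is extracting the full exponent from the per-coordinate dyadic sum: the sum $\sum_{2^{\alpha_j-1}\le n_j<2^{\alpha_j}}n_j^{-2(s+1)}$ has about $2^{\alpha_j-1}$ terms, so a crude ``number of terms $\times$ largest term'' estimate loses a factor $2^{\|\bm\alpha\|_1}$ overall and degrades the exponent; the content of the lemma lies in showing that the geometric decay of the summands wins, which the integral comparison above (equivalently, summing the geometric series from the left endpoint) is meant to do. Beyond this, a couple of routine checks are needed: that the constant $C(\chi(\*n),d,k)$ of Lemma~\ref{lem:Fourier_bound} is genuinely uniform over the block $\rho(\bm\alpha)$ (it is, because $\chi(\*n)$ takes finitely many values), and that the coordinates with $\alpha_j=0$---where $\rho(\bm\alpha)$ pins $n_j=0$ and the corresponding basis factor is the constant $1/\sqrt2$---contribute only an $\*x$-independent $O(1)$ factor rather than any growth. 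The rest is bookkeeping of constants depending on $d$ and $s$.
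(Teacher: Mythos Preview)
The paper does not actually prove this lemma; it simply cites Eq.~(4.8) of \cite{adcock2010multivariate}. Your proposal is therefore more detailed than anything in the paper, but it has a real gap at exactly the step you yourself flag as delicate.

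Your proposed fix---the integral comparison---does not recover the lost factor $2^{\|\bm\alpha\|_1}$. For a power-law summand on a dyadic block, the integral and the crude ``count $\times$ largest term'' estimate give the \emph{same} order:
\[
\int_{2^{\alpha_j-1}}^{2^{\alpha_j}} t^{-2(s+1)}\,\mathrm dt \;=\; C_s\,(2^{\alpha_j-1})^{-(2s+1)},
\]
which matches $2^{\alpha_j-1}\cdot(2^{\alpha_j-1})^{-2(s+1)}$ up to a constant. Indeed $\sum_{2^{\alpha_j-1}\le n_j<2^{\alpha_j}} n_j^{-2(s+1)}$ is genuinely $\Theta(2^{-(2s+1)\alpha_j})$, so the $\ell^1$ triangle-inequality route through the coefficient bound of Lemma~\ref{lem:Fourier_bound} can never produce the exponent $2(s+1)$ claimed in the lemma; the best it yields is $O(2^{-(2s+1)\|\bm\alpha\|_1})$. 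To obtain the stated exponent one must avoid bounding $\sum|\hat{\*f}_{\*n}^{[\*i]}|$ term by term---for instance, a Parseval-type $\ell^2$ estimate on the block gives $\sum_{\*n\in\rho(\bm\alpha)}|\hat{\*f}_{\*n}^{[\*i]}|^2\le C\,2^{-4(s+1)\|\bm\alpha\|_1}\|\*f\|_{2s+2,\^H}^2$ directly, and the pointwise statement then needs the further argument carried out in the cited reference rather than the route you outline.
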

\begin{proof}
	The proof for the asymptotic order of $\mathcal{F}_{\bm{\alpha}}[\*f](\*x)$ refers to the Eq. (4.8) in~\cite{adcock2010multivariate}.
\end{proof}
Now all the things are ready, we first consider the reminder of $\mathcal{F}_{N}[\*f](\*x)$
\[
\begin{split}
\left|\*f -\mathcal{F}_{N}[\*f](\*x)\right|
& = \sum_{\|\bm{\alpha}\|_1>r} \mathcal{F}_{\bm{\alpha}}[\*f](\*x)
=\^O\left( \sum_{\|\bm{\alpha}\|_1>r} \left(2^{-2(s+1)\|\bm{\alpha}\|_1}\right)\right)\\
&= \^O\left(\int_{\|\bm{\alpha}\|_1>r} 2^{-2(s+1)\|\bm{\alpha}\|_1}\right).
\end{split}
\]
Let $t_1^2 = |\alpha_1|, t_2^2 = |\alpha_2|,\cdots,t_d^2 = |\alpha_d|$, then we have
\[
\begin{split}
&\int_{\|\*t\|^2_2>r} 2^{-2(s+1)\|\*t\|^2_2} \\
=&\int_{\varphi_{d-1}=0}^{2 \pi} \int_{\varphi_{d-2}=0}^{\pi} \cdots \int_{\varphi_{1}=0}^{\pi} \int_{\tilde{r}=\sqrt{r}}^{\infty} 
2^d \prod_{i=1}^d t_i\cdot
2^{-2(s+1)\tilde{r}^2}
\tilde{r}^{d-1} \sin ^{d-2}\left(\varphi_{1}\right) \sin ^{d-3}\left(\varphi_{2}\right) \cdots \sin \left(\varphi_{d-2}\right) d \tilde{r} d \varphi_{1}  \cdots d \varphi_{d-1}\\
=&
2^d\int_{0}^{2 \pi} \cdots \int_{0}^{\pi} \int_{\sqrt{r}}^{\infty} 
2^{-2(s+1)\tilde{r}^2}\tilde{r}^{2d-1}
\prod_{i=1}^d \cos \left(\varphi_{i}\right)
\sin^{2d-3}\left(\varphi_{1}\right) \sin^{2d-5}\left(\varphi_{2}\right)\cdots 
\sin^3\left(\varphi_{d-2}\right)
\sin\left(\varphi_{d-1}\right)
d \tilde{r} d \varphi_{1}  \cdots d \varphi_{d-1}\\
=&\^O\left( \int_{\sqrt{r}}^{\infty} 2^{-2(s+1)\tilde{r}^2}\tilde{r}^{2d-1} d \tilde{r} \right)
=\^O\left( \int_{r}^{\infty} 2^{-2(s+1)u}u^{d-1} du \right)
=\^O\left(2^{-2(s+1)r}r^{d-1}\right).
\end{split}
\]
Hence, we can get
\[
\left|\*f -\mathcal{F}_{N}[\*f](\*x)\right| = \^O\left(2^{-2(s+1)r}r^{d-1}\right) = \^O\left(N^{-2s-2}\left(\ln N \right)^{d-1}\right).
\]
We then consider the approximation error for $\mathcal{F}_{N}[\*f](\*x)$ by MCN.
By Lemma \ref{lem:Fourier_bound}, we know that 
\[
\left|\hat{\*f}_{\*n}^{[\*i]}\right| = \^O\left(\bar{n}_{1} \cdots \bar{n}_{d}\right)^{-2(s+1)}.
\]
Similar to the proof of the reminder term, we use the power of $2$ to represent $\bar{n}_{i}=2^{\alpha_i}$ for $i = 1,\cdots,d$, then
\[
\sum_{\*i \in[0,1]^{d},~ \*n \in I_{N}} \hat{\*f}_{\*n}^{[\*i]} \leq 
\sum_{\*i \in[0,1]^{d},~ \*n \in I_{N}} |\hat{\*f}_{\*n}^{[\*i]}| \leq 
2^d\sum_{\|\bm{\alpha}\|_1\leq r} 2^{-2(s+1)\|\bm{\alpha}\|_1} \leq
\^O\left(2^d\int_{\|\bm{\alpha}\|_1\leq r} 2^{-2(s+1)\|\bm{\alpha}\|_1}\right).
\]
By a similar calculation above, we can get
\[
\^O\left(2^d\int_{\|\bm{\alpha}\|_1\leq r} 2^{-2(s+1)\|\bm{\alpha}\|_1}\right) = 
\^O\left(2^d \int_{0}^{r} 2^{-2(s+1)u}u^{d-1} du \right)
=\^O\left(2^d\right).
\]
Note that, when $p$ is large, there exists MCNs $M_{\phi}(\*x)$ such that
\[
\left|M_{\phi}(\*x) - \phi_{\*n}^{[\*i]}(\*x)\right| = \^O\left(d\left(p^{-p}\exp(p) + p^2 2^{-wl}\right)\right) =
\^O\left(dp^2 2^{-wl}\right)
\coloneqq \epsilon.
\]
Hence we combine all the MCNs $M_{\phi}(\*x)$ together to get  an MCN $M_{\mathcal{F}_{N}}$ such that
\[
\left|M_{\mathcal{F}_{N}} - \mathcal{F}_{N}[\*f](\*x)\right|\leq
\sum_{\*i \in[0,1]^{d},~ \*n \in I_{N}} |\hat{\*f}_{\*n}^{[\*i]}| \epsilon 
= \^O\left(d2^dp^2 2^{-wl}\right).
\]
For any $\*n$ with strictly positive entries there are $2^d$ choices of $\*i \in \{0,1\}^d$.
The total number of coefficients $\hat{\*f}_{\*n}^{[\*i]}$ where at least one entry of $n$ is zero is $\^O(N(\ln N)^{d-1}$ by Lemma \ref{lem:IN_bound}. Hence the total number of the coefficient in $\mathcal{F}_{N}[\*f](\*x)$ is in the order of
\[
\frac{2^{d}}{(d-1) !} N(\ln N)^{d-1}+\mathcal{O}\left(N(\ln N)^{d-2}\right).
\]
When $d$ is large, by the Stirling's formula, we have 
\[
\frac{2^{d}}{(d-1) !} \to 0,\quad \text{as } d \to \infty.
\]
Hence, we have $\^O\left(N(\ln N)^{d-2}\right)$ MCNs $M_{\phi}(\*x)$ to combine. The MCN $M_{\mathcal{F}_{N}}$ is in the width of $\^O\left(N(\ln N)^{d-2} dwp\ln p\right)$ and depth of $\^O\left(l\ln p + N^2\right)$, or have $\^O\left( dwp\ln p\right)$ width and  $\^O\left(N(\ln N)^{d-2}l\ln p + N^3(\ln N)^{d-2}\right)$ depth. It is obvious that the non-zero parameters for $M_{\mathcal{F}_{N}}$ is in the order of 
\[
\^O\left(N\left(\ln N\right)^{d-2}\left(dw^2p\ln p + \|\*n\|_2^2\right)\right) \leq
 \^O\left(N\left(\ln N\right)^{d-2}\left(dw^2p\ln p + N^2\right)\right).
\]
We now finish the proof.
\end{proof}

\subsection{Proof of Theorem \ref{thm:covering}}
\begin{proof}
As shown in Eq.~(\ref{Eq:LM-CVNN}) that $\*x_{k+1} = \left[	\^L_{k+1}(\*x_k);~\^{\tilde{A}}_{k+1}(\*x_0) + \max\left\{\^W_{k+1}(\*x_k),  \sigma_{k+1}\left(\^A_{k+1}\left(\*x_{\hat{k}}\right)\right)\right\}\right]$, by introducing an auxiliary variable $
\*y_{k}$, MCN can be reformulated as a nested function as follows:
\begin{equation}\label{nested_func}
	\*y_{k+1} = \^G_{k}(\*y_{k}) = \operatorname{concate}\left(\*y_{k}, \sigma_{\operatorname{MCN}}\left( \^T_{k+1}(\*y_{k}) \right)\right)  = \left[ \*y_{k}; \sigma_{\operatorname{MCN}}\left( \^T_{k+1}(\*y_{k}) \right)\right], \qquad \*y_{0} = \*x_{0},
\end{equation}
where $\^T_{k+1}(\cdot)$ is a Block Sparse Operator Matrix and $\*y_{k}$ is a column vector consist of all entries from $\*x_{0}$ to $\*x_{k}$, which are defined as follows:
\[
\^T_{k+1}(\cdot)=\begin{bmatrix}
\^{\tilde{A}}_{k+1} & \ldots & \^O & \ldots& \^O\\
\^O & \ldots & \^A_{k+1} & \ldots &\^O\\
\^O & \ldots & \^O  & \ldots&\^W_{k+1}\\
\^O & \ldots & \^O & \ldots &\^L_{k+1}\\
\end{bmatrix},\qquad
\*y_{k} = \begin{bmatrix}
\*x_{0}\\ \vdots \\ \*x_{k}
\end{bmatrix}.
\]
It should be mentioned that, each row of $\^T_{k+1}(\cdot)$ only has one non-zero block at $\hat{k}$-th column, the index of which is determined by the structure of each MCN block. And we use a concatenate vector $\*y_{k}$ to integrate different subscripts $\hat{k}$.

Moreover, $\sigma_{MCN}$ in Eq.~(\ref{nested_func}) is a special activation function corresponding to Eq.~(\ref{Eq:LM-CVNN}):
\[
\sigma_{\operatorname{MCN}}\left( \begin{bmatrix}
a\\b\\c\\d
\end{bmatrix} \right) = \left[ d; a + \max\left\{ \sigma_{k+1}(b),c \right\} \right].
\]
\begin{claim}\label{clm:lipMCN}
The operators $\^G_{k}(\cdot)$, $\sigma_{MCN}$, and $\^T_{k+1}$ in Eq.~(\ref{nested_func}) are Lipschitz continuous w.r.t. $\ell_1$ norm.
Moreover, the Lipschitz constant for the operator $\^G_{k}(\cdot)$ is 
\[
\kappa_k \coloneqq \left( 1  +  \max\{\rho_{k+1},2 \} \max\{ \|\^{\tilde{A}}_{k+1}\|_{1}, \|\^A_{k+1}\|_{1}, \|\^W_{k+1}+\^L_{k+1}\|_{1} \}\right),
\]
where $\|\cdot\|_{1}$ is the operator $\ell_1$ norms induced by vector $\ell_1$ norms
\[
 \|\^A\|_{1} = \max_{x \ne 0} \frac{\|\^A(\*x)\|_{1}}{\|\*x\|_1}. 
\]
\end{claim}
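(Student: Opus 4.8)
The plan is to regard $\^G_k$ as a composition of a handful of elementary operations --- the linear map $\^T_{k+1}$, the coordinatewise activation $\sigma_{k+1}$, a coordinatewise maximum, a vector addition, and the concatenation with the pass-through copy of $\*y_k$ --- and to control how each one distorts the $\ell_1$ norm. The map $\^T_{k+1}$ is linear, hence Lipschitz, and its $\ell_1$-induced operator norm equals the largest $\ell_1$ norm of a column of the block matrix defining it; since every column is supported on at most the blocks $\^{\tilde A}_{k+1},\^A_{k+1},\^W_{k+1},\^L_{k+1}$ (with $\*x_k$ feeding both $\^W_{k+1}$ and $\^L_{k+1}$), this is controlled by $\max\{\|\^{\tilde A}_{k+1}\|_1,\|\^A_{k+1}\|_1,\|\^W_{k+1}+\^L_{k+1}\|_1\}$ and is finite under the standing assumption $\|\bm\theta_{k+1}\|_F<\infty$. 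For $\sigma_{\operatorname{MCN}}$ I would write the difference of two outputs as $[\,d-d'\,;\,(a-a')+(\max\{\sigma_{k+1}(b),c\}-\max\{\sigma_{k+1}(b'),c'\})\,]$ and bound its $\ell_1$ norm using the identity $\|[\*u;\*v]\|_1=\|\*u\|_1+\|\*v\|_1$ for the concatenation, the triangle inequality for the addition, the $\rho_{k+1}$-Lipschitzness of $\sigma_{k+1}$, and the elementary non-expansiveness $|\max\{s,t\}-\max\{s',t'\}|\le\max\{|s-s'|,|t-t'|\}$ applied coordinatewise on the max block; this yields $\|\sigma_{\operatorname{MCN}}(\cdot)-\sigma_{\operatorname{MCN}}(\cdot')\|_1\le\max\{\rho_{k+1},1\}\,\|[a-a';b-b';c-c';d-d']\|_1$, so $\sigma_{\operatorname{MCN}}$ is Lipschitz as well.

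To pin down the precise constant for $\^G_k$ I would not simply multiply $\|\^T_{k+1}\|_1$ by the Lipschitz constant of $\sigma_{\operatorname{MCN}}$, since that over-counts. Instead, using $\^G_k(\*y_k)=[\*y_k;\sigma_{\operatorname{MCN}}(\^T_{k+1}\*y_k)]$, I would write $\|\^G_k(\*y_k)-\^G_k(\*y_k')\|_1=\|\*y_k-\*y_k'\|_1+\|\sigma_{\operatorname{MCN}}(\^T_{k+1}\*y_k)-\sigma_{\operatorname{MCN}}(\^T_{k+1}\*y_k')\|_1$ --- the leading $1$ in $\kappa_k$ is exactly the pass-through term --- and then expand $\^T_{k+1}\*y_k$ into the blocks $\^{\tilde A}_{k+1}\*x_0,\ \^A_{k+1}\*x_{\hat k},\ \^W_{k+1}\*x_k,\ \^L_{k+1}\*x_k$, bounding each block's contribution against $\|\*y_k-\*y_k'\|_1$ via $\|\*x_0-\*x_0'\|_1,\|\*x_{\hat k}-\*x_{\hat k}'\|_1,\|\*x_k-\*x_k'\|_1\le\|\*y_k-\*y_k'\|_1$. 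The activation factor $\rho_{k+1}$ attaches only to the $\^A_{k+1}$ branch, while the coordinate $\*x_k$ passes through both $\^W_{k+1}$ (inside the maximum) and $\^L_{k+1}$ (the first output block), which is what produces the combined norm $\|\^W_{k+1}+\^L_{k+1}\|_1$ and the constant $2$; taking the maximum over the three branches and adding the pass-through $1$ yields exactly $\kappa_k=1+\max\{\rho_{k+1},2\}\max\{\|\^{\tilde A}_{k+1}\|_1,\|\^A_{k+1}\|_1,\|\^W_{k+1}+\^L_{k+1}\|_1\}$.

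The one genuinely delicate step is the maximum: because it is non-smooth one cannot argue through Jacobian norms, so the whole estimate rests on the coordinatewise non-expansiveness above, and the real care is in summing those coordinatewise maxima against the $\ell_1$ norm without letting the $\^W_{k+1}$- and $\^L_{k+1}$-contributions (both fed by $\*x_k$) collapse into a loose sum rather than a maximum. Keeping an exact ledger of which branch each copy of the input feeds is where I expect the bookkeeping to be trickiest, and it is precisely this accounting that fixes the factor $\max\{\rho_{k+1},2\}$ and the grouping inside $\|\bm\theta_k\|_1$ in the statement.
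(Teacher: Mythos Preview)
Your plan follows the same skeleton as the paper: write $\^G_k(\*y_k)=[\*y_k;\sigma_{\operatorname{MCN}}(\^T_{k+1}\*y_k)]$, split off the pass-through term $\|\*y_k-\*y_k'\|_1$ (this is the $1$ in $\kappa_k$), and then chain a Lipschitz estimate for $\sigma_{\operatorname{MCN}}$ with the operator $\ell_1$ norm of $\^T_{k+1}$. Your identification of $\|\^T_{k+1}\|_1$ as the maximum over block-column norms, with $\^W_{k+1}$ and $\^L_{k+1}$ grouped because they share the input $\*x_k$, is exactly the paper's $L_{\^T_{k+1}}=\max\{\|\^{\tilde A}_{k+1}\|_1,\|\^A_{k+1}\|_1,\|\^W_{k+1}+\^L_{k+1}\|_1\}$.

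The one real difference is the treatment of the maximum. You invoke the coordinatewise non-expansiveness $|\max\{s,t\}-\max\{s',t'\}|\le\max\{|s-s'|,|t-t'|\}$, whereas the paper rewrites $\max\{a,b\}=\operatorname{ReLU}(a-b)+b$ and uses $1$-Lipschitzness of $\operatorname{ReLU}$. The paper's route yields $\|\max\{\sigma(q),r\}-\max\{\sigma(q'),r'\}\|_1\le \rho_{k+1}\|q-q'\|_1+2\|r-r'\|_1$: the extra copy of $\|r-r'\|_1$ coming from the $\operatorname{ReLU}$ identity is the actual source of the $2$ in $\max\{\rho_{k+1},2\}$. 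Your bound is sharper: summing $\max\{|\sigma(q_j)-\sigma(q_j')|,|r_j-r_j'|\}\le|\sigma(q_j)-\sigma(q_j')|+|r_j-r_j'|$ gives only $\rho_{k+1}\|q-q'\|_1+\|r-r'\|_1$, hence a Lipschitz constant $1+\max\{\rho_{k+1},1\}\,L_{\^T_{k+1}}\le\kappa_k$. So your method still proves the claim (indeed with a better constant), but your attribution of the factor $2$ to ``$\*x_k$ feeding both $\^W_{k+1}$ and $\^L_{k+1}$'' is off: that dual feeding is already fully absorbed into the grouped column norm $\|\^W_{k+1}+\^L_{k+1}\|_1$, and the $2$ in the stated $\kappa_k$ is purely an artifact of the paper's looser $\operatorname{ReLU}$ decomposition of $\max$.
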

\begin{proof}
	Let $\*y_{k} = \begin{bmatrix}
	\*x_{0}\\ \vdots \\ \*x_{k}
	\end{bmatrix}$ and $\*y'_{k} = \begin{bmatrix}
	\*x'_{0}\\ \vdots \\ \*x'_{k}
	\end{bmatrix}$, then we have $\^T_{k+1}(\*y_{k})=\begin{bmatrix}
	\^{\tilde{A}}_{k+1}(\*x_{0})\\
	\^A_{k+1}(\*x_{\hat{k}})\\
	\^W_{k+1}(\*x_{k})\\
	\^L_{k+1}(\*x_{k})
	\end{bmatrix}$ and  $\^T_{k+1}(\*y_{k})=\begin{bmatrix}
	\^{\tilde{A}}_{k+1}(\*x'_{0})\\
	\^A_{k+1}(\*x'_{\hat{k}})\\
	\^W_{k+1}(\*x'_{k})\\
	\^L_{k+1}(\*x'_{k})
	\end{bmatrix}$.
	
	It can be seen that $\^T_{k+1}(\cdot)$ is a Lipschitz continuous function w.r.t. $\ell_1$ norm. By the definition of Lipschitz continuity and induction norm, it is easy to check that 
	$L_{\^T_{k+1}}=\max\{ \|\^{\tilde{A}}_{k+1}\|_{1}, \|\^A_{k+1}\|_{1}, \|\^W_{k+1}+\^L_{k+1}\|_{1} \}$.
	
	For convenience, we use $\*p,\*q,\*r,\*s$ and $\*p',\*q',\*r',\*s'$ to denote each entries of $\^T_{k+1}(\*y_{k})$ and $\^T_{k+1}(\*y'_{k})$, which means $\^T_{k+1}(\*y_{k}) = \begin{bmatrix}
	\*p\\\*q\\\*r\\\*s
	\end{bmatrix}$ and  $\^T_{k+1}(\*y'_{k}) = \begin{bmatrix}
	\*p'\\\*q'\\\*r'\\\*s'
	\end{bmatrix}$. Then by using the definition of Lipschitz continuous, we have 
	
	\begin{equation*}
	\begin{aligned}
	\^G_{k}(\*y_{k}) - \^G_{k}(\*y'_{k}) &= \left[ \*y_{k}; \sigma_{\operatorname{MCN}}\left( \^T_{k+1}(\*y_{k}) \right)\right] - \left[ \*y'_{k}; \sigma_{\operatorname{MCN}}\left( \^T_{k+1}(\*y'_{k}) \right)\right]\\
	&= \left[ \*y_{k} - \*y'_{k}; \sigma_{\operatorname{MCN}}\left( \^T_{k+1}(\*y_{k}) \right) - \sigma_{\operatorname{MCN}}\left( \^T_{k+1}(\*y'_{k}) \right)\right]\\
	&= \left[ \*y_{k} - \*y'_{k}; \*s-\*s'; (\*p-\*p') + \left(\max\left\{ \sigma_{k+1}(\*q),\*r\right\} - \max\left\{ \sigma_{k+1}(\*q'),\*r'\right\}\right)\right]\\
	&= \left[ \*y_{k} - \*y'_{k}; \*s-\*s'; (\*p-\*p') + \left(\max\left\{ \sigma_{k+1}(\*q),\*r\right\} - \max\left\{ \sigma_{k+1}(\*q'),\*r'\right\}\right)\right]\\
	&= \left[ \*y_{k} - \*y'_{k}; \*s-\*s'; (\*p-\*p') + \left(\operatorname{ReLU}\left\{ \sigma_{k+1}(\*q)-\*r\right\} + \*r - \operatorname{ReLU}\left\{ \sigma_{k+1}(\*q')-\*r'\right\} - \*r'\right)\right]\\
	&= \left[ \*y_{k} - \*y'_{k}; \*s-\*s'; (\*p-\*p') + (\*r-\*r') + \left(\operatorname{ReLU}\left\{ \sigma_{k+1}(\*q)-\*r\right\} - \operatorname{ReLU}\left\{ \sigma_{k+1}(\*q')-\*r'\right\}\right)\right]\\
	\end{aligned}
	\end{equation*}
	Then
	\begin{equation*}
		\begin{aligned}
		&\left\|\^G_{k}(\*y_{k}) - \^G_{k}(\*y'_{k})\right\|_1\\
		=&  \left|\left[ \*y_{k} - \*y'_{k}; \*s-\*s'; (\*p-\*p') + (\*r-\*r') + \left(\operatorname{ReLU}\left\{ \sigma_{k+1}(\*q)-\*r\right\} - \operatorname{ReLU}\left\{ \sigma_{k+1}(\*q')-\*r'\right\}\right)\right]\right|_1\\
		=& \left\| \*y_{k} - \*y'_{k} \right\|_1 + \left\| \*s-\*s' \right|_1 + \left| (\*p-\*p') + (\*r-\*r') +  \left(\operatorname{ReLU}\left\{ \sigma_{k+1}(\*q)-\*r\right\} - \operatorname{ReLU}\left\{ \sigma_{k+1}(\*q')-\*r'\right\}\right) \right\|_1\\
		\leq& \left\| \*y_{k} - \*y'_{k} \right\|_1 + \left\| \*s-\*s' \right\|_1 + \left\| \*p-\*p' \right\|_1 + \left\| \*r-\*r'\right\|_1 +  \left\|\operatorname{ReLU}\left\{ \sigma_{k+1}(\*q)-\*r\right\} - \operatorname{ReLU}\left\{ \sigma_{k+1}(\*q')-\*r'\right\}\right\|_1 \\
		\leq& \left\| \*y_{k} - \*y'_{k} \right\|_1 + \left\| \*s-\*s' \right\|_1 + \left\| \*p-\*p' \right\|_1 + \left\| \*r-\*r'\right\|_1 +  \left\|\left( \sigma_{k+1}(\*q)-\*r\right) - \left( \sigma_{k+1}(\*q')-\*r'\right)\right\|_1 \\
		\leq& \left\| \*y_{k} - \*y'_{k} \right\|_1 + \left\| \*s-\*s' \right\|_1 + \left\| \*p-\*p' \right\|_1 + 2\left\| \*r-\*r'\right\|_1 +  \left\| \sigma_{k+1}(\*q) -  \sigma_{k+1}(\*q')\right\|_1 \\
		\end{aligned}
	\end{equation*}
	
	Suppose that the activation function $\sigma_{k+1}$ is also Lipschitz continuous with a Lipschitz constant $\rho_{k+1}$, then we have 
	
	\begin{equation}
		\begin{aligned}
		&\left\|\^G_{k}(\*y_{k}) - \^G_{k}(\*y'_{k})\right\|_1\\
		\leq& \left\| \*y_{k} - \*y'_{k} \right\|_1 + \left\| \*s-\*s' \right\|_1 + \left\| \*p-\*p' \right\|_1 + 2\left\| \*r-\*r'\right\|_1 +  \left\| \sigma_{k+1}(\*q) -  \sigma_{k+1}(\*q')\right\|_1 \\
		\leq& \left\| \*y_{k} - \*y'_{k} \right\|_1 + \left\| \*s-\*s' \right\|_1 + \left\| \*p-\*p' \right\|_1 + 2\left\| \*r-\*r'\right\|_1 +  \rho_{k+1}\left\| \*q-\*q' \right\|_1\\
		\leq& \left\| \*y_{k} - \*y'_{k} \right\|_1 + \max\{\rho_{k+1},2 \} \left\| \^T_{k+1}(\*y_{k}) - \^T_{k+1}(\*y'_{k}) \right\|_1\\
		\leq& \left\| \*y_{k} - \*y'_{k} \right\|_1 + \max\{\rho_{k+1},2 \} L_{\^T_{k+1}}\left\| \*y_{k} - \*y'_{k} \right\|_1\\
		=& \left(1 + \max\{\rho_{k+1},2 \} L_{\^T_{k+1}}\right)\left\| \*y_{k} - \*y'_{k} \right\|_1\\
		=& \left( 1  +  \max\{\rho_{k+1},2 \} \max\{ \|\^{\tilde{A}}_{k+1}\|_{1}, \|\^A_{k+1}\|_{1}, \|\^W_{k+1}+\^L_{k+1}\|_{1} \}\right) \left\| \*y_{k} - \*y'_{k} \right\|_1
		\end{aligned}
	\end{equation}	
	From the above, it is easy to get that $\^G_{k}(\cdot)$, $\sigma_{MCN}$, and $\^T_{k+1}$ in Eq.~(\ref{nested_func}) are all Lipschitz functions w.r.t. $\ell_1$ norm.
\end{proof}
Now, given the parameters $\bm{\theta}$ of MCN, we define
\[
\^G_{i\to j}(\bm{\theta}) \coloneqq \^G_{j}\circ\cdots \^G_{i}\circ,\quad 1\leq i \leq j \leq l,
\]
and
\[
\sigma_{\operatorname{C-MCN}}(\*x) = [\*x;\sigma_{\operatorname{MCN}}(\*x)].
\]
For given $\varepsilon > 0$, we consider two MCNs $\*f_{\bm{\theta}_1}$ and $\*f_{\bm{\theta}_2}$ that both are from $\^F(\bm{\theta},S)$ such that $\norm{\bm{\theta}_1 - \bm{\theta}_2}_1 \leq \varepsilon$,
\[
\begin{split}
&\left\|\*f_{\bm{\theta}_1}(\*x) - \*f_{\bm{\theta}_2}(\*x) \right\|_1 
\overset{(a)}{=}\norm{\sum_{k=1}^l \^G_{k+1\to L}(\bm{\theta}_1)\circ\sigma_{\operatorname{C-MCN}}\left(\left(\^T^{(\bm{\theta}_1)}_{k+1}-\^T^{(\bm{\theta}_2)}_{k+1}\right)\left(\^G_{1\to k}(\bm{\theta}_2)\circ \*x\right)\right)}_1\\
\leq &\sum_{k=1}^l \prod_{i=k+1}^l \kappa_i \rho \norm{\left(\^T^{(\bm{\theta}_1)}_{k+1}-\^T^{(\bm{\theta}_2)}_{k+1}\right)\left(\^G_{1\to k}(\bm{\theta}_2)\circ \*x\right)}_1
\leq \rho \sum_{k=1}^l \prod_{i=k+1}^l \kappa_i  \norm{\left(\^G_{1\to k}(\bm{\theta}_2)\circ \*x\right)}_1 \varepsilon\\
\leq & \rho \varepsilon \sum_{k=1}^l \prod_{i=k+1}^l \kappa_i \norm{\left(\^G_{1\to k}(\bm{\theta}_2)\circ \*x\right)}_1 
\leq \rho \varepsilon \sum_{k=1}^l \prod_{i=1}^l \kappa_i  \norm{\*x}_1
\leq  \rho l  \norm{\*x}_1 \prod_{i=1}^l \kappa_i  \varepsilon.
\end{split}
\]
where $(a)$ comes from the Telescoping sum. Thus, for a fixed sparsity pattern $S$ (i.e., the location of nonzero elements in $\bm{\theta}$), the covering number is bounded by 
\[
\left(\frac{\rho l  \norm{\*x}_1 \prod_{i=1}^l l_i }{\delta}\right)^s.\]
Since the number of the sparsity patterns is bounded by
$\binom{w^2l}{s}\leq (w+1)^{ls}$, the log of covering number is bounded above by
\[
\ln \left((w+1)^{ls} \left(\frac{\rho l  \norm{\*x}_1 \prod_{i=1}^l \kappa_i }{\delta}\right)^s \right) \leq \^O\left(ls \ln\left(\frac{\rho  \norm{\*x}_1 \prod_{i=1}^l \kappa_i}{\delta}\right)  \right).
\]
\end{proof}

\subsection{Proof of Theorem \ref{thm:gene}}
\begin{proof}
	Without loss of generality, we assume $d_y = 1$ and let the smoothness parameter $\beta=1$ in this proof, and the proof can be easily extended to high dimensional and general $\beta$ case . We denote the estimator $\*f_{\bm{\theta}}$ as $f_{\bm{\theta}}$ in the following.
	We denote by  $g$ the target function, since it is smooth, we assume that  $g$ has bounded derivative. We also let the compact set $\^C$ be the input domain in this proof.
	\par
	Since the objective $L(\bm{\theta})$ obtains its optimum value on the training set, MCN fits all the training data, i.e., $f_{\bm{\theta}}(\*x_i) = y_i,~\forall i \in [n]$. Hence, $f_{\bm{\theta}}$ is an estimator that interpolates the training data.
	\par
	In the exactly fitting case, we know that $f_{\bm{\theta}}$ partitions the compact set $\^C$ into many nondegenerate subsets. On each subset $\^C_s$, we have
	\[f_{\bm{\theta}}(\*x) = \*w_s^\top \sigma\left(\*A_s\*x\right) + \*b_s^\top \*x,\quad \forall \*x \in \^C_s,\]
	where $\*A_s$ has different shapes for different subsets $\^C_s$, for brevity, we let $\*A_s \in \mathbb{R}^{d_a\times d_x}$.
	Each $\*x \in \^C$ is contained in at least one of these subsets; let $\^V(\*x)$ denote the set of training data points $\{\*x_{(1)},\cdots, \*x_{(|v|)}\}$ that determine the function surface of $f_{\bm{\theta}}(\cdot)$  on this subset $\^C_s$ containing $\*x$, where $d_x \leq |v| \leq (d_x+1)d_a + d_x$.
	\par
	Consider the following linear equation:
	\[
	\left[
	\begin{matrix}
	g(\*x_{(1)})+\varepsilon_1& \cdots & g(\*x_{(|v|)})+ \varepsilon_{|v|}\\
	1 & \cdots & 1 \\
	\end{matrix}
	\right]\*{\overline{w}} =
	\left[
	\begin{matrix}
	f_{\bm{\theta}}(\*x)\\
	1 \\
	\end{matrix}
	\right].
	\]
	where $\varepsilon_i$'s are the noise terms and are i.i.d. Gassuian.
	\begin{claim}
		With high probability, $\*{\overline{w}}$ exists and for some constant $C_w>0$, we have:
		\begin{equation}\label{eq:boundW}
		\|\*{\overline{w}}\|^2 \leq \frac{C_w}{|v|}.
		\end{equation}
	\end{claim}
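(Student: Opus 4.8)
The plan is to let $\*{\overline{w}}$ be the minimum-$\ell_2$-norm solution of the displayed $2\times|v|$ linear system and to bound its norm through the smallest singular value of the coefficient matrix. Write
\[
A=\begin{bmatrix}\*z^{\top}\\ \*1^{\top}\end{bmatrix}\in\#R^{2\times|v|},\qquad z_j=g(\*x_{(j)})+\varepsilon_j,\qquad \*b=\begin{bmatrix}f_{\bm\theta}(\*x)\\ 1\end{bmatrix}.
\]
Since the $\varepsilon_j$ are i.i.d.\ and absolutely continuous and $|v|\ge d_x\ge 2$, the vector $\*z$ is almost surely not a multiple of $\*1$, so $A$ has full row rank, the system is consistent, and we may take $\*{\overline{w}}=A^{\top}(AA^{\top})^{-1}\*b$. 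Then, with $B$ a finite bound on $|f_{\bm\theta}|$ over $\^C$ (available since the conclusion's constant already depends only on the numerical range of the MCN output),
\[
\|\*{\overline{w}}\|^{2}=\*b^{\top}(AA^{\top})^{-1}\*b\ \le\ \frac{\|\*b\|^{2}}{\lambda_{\min}(AA^{\top})}\ \le\ \frac{B^{2}+1}{\lambda_{\min}(AA^{\top})},
\]
so it suffices to show $\lambda_{\min}(AA^{\top})\ge c\,|v|$ with high probability.

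Here $AA^{\top}$ is the $2\times 2$ matrix with diagonal entries $\|\*z\|^{2}$ and $|v|$ and off-diagonal entry $\*1^{\top}\*z$, so $\det(AA^{\top})=|v|\,\|\*z\|^{2}-(\*1^{\top}\*z)^{2}=|v|\,\|P\*z\|^{2}$ and $\operatorname{tr}(AA^{\top})=|v|+\|\*z\|^{2}$, where $P=I-\tfrac1{|v|}\*1\*1^{\top}$ is the projection onto $\*1^{\perp}$. I would then use $\lambda_{\min}(AA^{\top})\ge \det(AA^{\top})/\operatorname{tr}(AA^{\top})$ and control numerator and denominator by Gaussian concentration. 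Since $\|\varepsilon\|^{2}\sim\sigma^{2}\chi^{2}_{|v|}$, the Laurent--Massart bounds give $\|\varepsilon\|^{2}\le 2\sigma^{2}|v|$ with probability at least $1-e^{-c_{1}|v|}$, whence (with $|g|\le G$ on $\^C$) $\operatorname{tr}(AA^{\top})\le(1+2G^{2}+4\sigma^{2})\,|v|$ on that event. For the other direction, condition on the $\*x_{(j)}$; then $P\*z=P\*g+P\varepsilon$ with $P\varepsilon\sim N(\*0,\sigma^{2}P)$, so $\|P\*z\|^{2}/\sigma^{2}$ is a noncentral $\chi^{2}_{|v|-1}$ with noncentrality $\|P\*g\|^{2}/\sigma^{2}\ge 0$; by stochastic monotonicity of the noncentral chi-square in its noncentrality, its lower tail is dominated by the central one, giving $\|P\*z\|^{2}\ge\tfrac12\sigma^{2}(|v|-1)$ with probability at least $1-e^{-c_{2}(|v|-1)}$.

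Intersecting these events (probability at least $1-e^{-c_{3}|v|}$) yields $\det(AA^{\top})\ge\tfrac12\sigma^{2}|v|(|v|-1)$ and $\operatorname{tr}(AA^{\top})\le C|v|$, hence $\lambda_{\min}(AA^{\top})\ge\sigma^{2}(|v|-1)/(2C)\ge c\,|v|$ for $|v|\ge 2$; substituting into the display above gives $\|\*{\overline{w}}\|^{2}\le C_{w}/|v|$. When the estimate must hold simultaneously over all the finitely many active cells $\^C_{s}$ (equivalently over the sets $\^V(\*x)$), a union bound costs only a logarithmic factor in the exponent, which is harmless in the regime where the relevant $|v|$ grows with $n$. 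The main obstacle is the lower bound on $\det(AA^{\top})$: one must certify that the row of noisy responses does not nearly collapse onto $\operatorname{span}(\*1)$, i.e.\ that $\sum_j(z_j-\bar z)^{2}=\Omega(|v|)$ uniformly. The possibly adversarial term $P\*g$ is disposed of for free, since positive noncentrality only helps the lower tail, and everything else is routine sub-exponential concentration.
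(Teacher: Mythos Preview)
Your argument is correct and takes a genuinely different route from the paper. Both proofs set $\*{\overline w}=A^{\dagger}\*b$ and reduce the claim to showing $\sigma_{\min}(A)\ge c\sqrt{|v|}$ with high probability, but they diverge from there. The paper appeals to the Horn--Johnson interlacing corollary to compare $\sigma_{\min}$ of the full $2\times|v|$ matrix with $\sigma_{\min}$ of its first row $\*G$, and then invokes a Vershynin random-matrix bound on $\*G$. You instead work directly with the $2\times2$ Gram matrix $AA^{\top}$, express $\lambda_{\min}$ through the determinant--trace ratio, identify the determinant as $|v|\,\|P\*z\|^{2}$ with $\|P\*z\|^{2}/\sigma^{2}$ a noncentral $\chi^{2}_{|v|-1}$, and finish with Laurent--Massart tail bounds.

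What each approach buys: your argument is entirely elementary, makes every constant explicit, and exploits the Gaussian noise cleanly through the observation that a nonnegative noncentrality parameter only helps the lower tail. It also sidesteps the interlacing step, which in the paper is in fact applied in the wrong direction: deleting a row from a wide matrix can only \emph{raise} the smallest singular value, as the example $\*G=[1\ 1]$ (with $\sigma_{\min}=\sqrt2$) versus $\bigl[\begin{smallmatrix}1&1\\1&1\end{smallmatrix}\bigr]$ (with $\sigma_{\min}=0$) shows. The paper's higher-level template would extend more smoothly to $d_y>1$, where your explicit $2\times2$ algebra becomes a $(d_y{+}1)\times(d_y{+}1)$ computation, but for the $d_y=1$ reduction actually carried out in the proof your route is the cleaner one. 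Your closing remark about a union bound over cells goes beyond what the claim itself asserts (which is for a fixed $\*x$), but it is harmless.
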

	\begin{proof}
		Let
		\[
		\left[
		\begin{matrix}
		g(\*x_{(1)})+\varepsilon_1& \cdots & g(\*x_{(|v|)})+ \varepsilon_{|v|}\\
		1 & \cdots & 1 \\
		\end{matrix}
		\right] :=
		\left[
		\begin{matrix}
		\*G\\
		\*1
		\end{matrix}
		\right]	,
		\]
		and denote $\sigma_{\min}(\*G)$ as the minimal singular value of the matrix $\*G$. By Corollary 3.1.3 of~\cite{horn1991topics}, we have:
		\[
		\sigma_{\min}\left(		\left[
		\begin{matrix}
		\*G\\
		\*1
		\end{matrix}
		\right]	\right) \geq \sigma_{\min}\left(\*G\right).
		\]
		Note that the column of  matrix $\*G$ is bounded by $C_G$ (i.e., the $\ell_2$-norm of each column is upper bounded), without loss of generality, we assume that $C_G$ is  small, otherwise, we can divide all the function values $g(\*x)$ by a large constant.
		Moreover, w.o.l.g. we let $\mathbb{E}[\*g(\cdot)] = \*I_{d_y}$. i.e., each dimension of $\*g(\cdot)$ is independent.
		We also note that the columns of matrix $\*G$ are also independent with each other, then according to Theorem 5.41 in~\cite{vershynin2010introduction}, with probability at least $1-2\exp(-ct^2)$, we have:
		\[
		\sigma_{\min}\left(\*G\right) \geq \sqrt{|v|} - t C_G > 0,
		\]
		where the last inequality holds when $C_G$ is small and $|v|$ is large, for convenience, we let  $\left(\sqrt{|v|} - t C_G\right) \geq \sqrt{|v|} /2$.  We can conclude that with high probability:
		\[
		\sigma_{\min}\left(		\left[
		\begin{matrix}
		\*G\\
		\*1
		\end{matrix}
		\right]	\right) \geq \sqrt{|v|} - t C_G  \geq \frac{\sqrt{|v|}}{2}>0.
		\]
		Namely, $\*{\overline{w}}$ exists and
		\[
		\*{\overline{w}} =
		\left[
		\begin{matrix}
		\*G\\
		\*1
		\end{matrix}
		\right]^{ \dagger}
		\left[
		\begin{matrix}
		f_{\bm{\theta}}(\*x)\\
		1 \\
		\end{matrix}
		\right],
		\]
		where $\*A^{\dagger}$ represents the pseudo-inverse of the matrix $\*A$. Then we have:
		\[
		\left\| \*{\overline{w}} \right\|^2 \leq \left\| \left[
		\begin{matrix}
		\*G\\
		\*1
		\end{matrix}
		\right]^{ \dagger} \right\|^2
		\left\|
		\left[
		\begin{matrix}
		f_{\bm{\theta}}(\*x)\\
		1 \\
		\end{matrix}
		\right]
		\right\|^2\leq \frac{4 C_f}{|v|},
		\]
		where $C_f = 1+ \max_{\*x \in \^C} \|f_{\bm{\theta}}(\*x)\|$, and $\|\cdot\|$ is the spectral norm for matrix and $\ell_2$-norm for vector. We finish the proof of this claim.
	\end{proof}	
	By $\*{\overline{w}}$, we can represent $f_{\bm{\theta}}(\*x)$ as a linear combination way:
	\[
	f_{\bm{\theta}}(\*x) = \sum_{i=1}^{|v|} \overline{w}_i~ g(\*x_{(i)}) := \sum_{i=1}^n\*I\left\{\*x_i \in \^V(\*x)\right\} W(\*x,\*x_i) \qty(g(\*x_i) + \varepsilon_i),
	\]
	where $W(\*x,\*x_i):=\overline{w}_i$, $W: \mathbb{R}^d_x \times \mathbb{R}^d_x \to \mathbb{R}$ is a coefficient mapping and $g(\cdot)$ is the target function.
	Note that, for any $\*x$, $ \sum_{i=1}^{|v|} \overline{w}_i = 1$ indicates:
	\begin{equation}\label{eq:sum1}
	\sum_{i=1}^n\*I\left\{\*x_i \in \^V(\*x) \right\} W(\*x,\*x_i) = 1.
	\end{equation}
	Hence, for all $ \*x \in \^C$, we can have:
	\begin{equation}\label{eq:sum_g}
		\sum_{i=1}^n \left(\*I\left\{\*x_i \in \^V(\*x) \right\} W(\*x,\*x_i) g(\*x)\right) = g(\*x).
	\end{equation}
	We consider the event:
	\[
	\^E:=\left\{\operatorname{diam}(\^C_s) \leq h \right\},
	\]
	where we specify the scale of $h$ at the last of this proof.
	Since the points $\left\{\*x_i\right\}_{i=1}^n \setminus \^V(\*x)$ are out of the subset $\^C_s$, we can observe that:
	\[
	p(\overline{\^E}) \leq \left(1-C_1p_{\min}h^{d_x}\right)^{n-|v|}\leq \exp\left\{-C_2p_{\min}nh^{d_x}\right\},
	\]
	where the last inequality comes from $|v| \ll n$ and $C_1$ and $C_2>0$ is a constant which is independent of size $n$. 
	On the event $\overline{\^E}$, due to the bounded first derivative of $g(\cdot)$ and Eq. (\ref{eq:sum_g}), with probability at least $1-2\exp(-c^2_{\varepsilon}/2)$, we have:
	\[
	\left|f_{\bm{\theta}}(\*x) - g(\*x)\right|  = \left|\sum_{i=1}^n\qty(\*I\left\{\*x_i \in \^V(\*x)\right\} W(\*x,\*x_i) \qty(g(\*x_i) + \varepsilon_i - g(\*x)))\right| \leq C_{g'} \operatorname{diam}(\^C)+ c_{\varepsilon}:=C_g.
	\]
	Thus, the contribution of event $\overline{\^E}$ to generalization bound is at most $C_g^2 \exp\left\{-Cp_{\min}h^{d_x}\right\}$, a lower-order term compared to the remaining contribution of event $\^E$.
	\par
	By the event $\^E$, we have the following decomposition:
	\[
	\mathbb{E}\left[\left|f_{\bm{\theta}}\left(\*x\right)-g\left(\*x\right)\right|^{2}\right] \leq\underbrace{\mathbb{E}\left[|f_{\bm{\theta}}\left(\*x\right)-g\left(\*x\right)|^{2}\*I\left\{\^E\right\}\right]}_{B^2(\*x)}+ C_g^2 \exp\left\{-Cp_{\min}h^{d_x}\right\},
	\]
	where $\mathbb{E}\qty[\cdot]\coloneqq \mathbb{E}_{\^{S}^n}\qty[\mathbb{E}_{\varepsilon}\qty[\cdot\mid \^S^n]]$.
	\par
	In the following, we provide the generalization bound of the bias term $B^2(\*x)$. Due to Eq. (\ref{eq:sum1}), we have:
	\[
	B^2(\*x) = \mathbb{E}\left[ \sum_{i,j=1}^n  \left(g(\*x_i) + \varepsilon_i - g(\*x)\right) \left(g(\*x_j)+ \varepsilon_j - g(\*x)\right)W_i W_j \*I\left\{\^E\right\}\right],
	\]
	where
	\[
	W_i = \*I\left\{\*x_i \in \^V(\*x)\right\} W(\*x,\*x_i).
	\]
	Due to the event $\left\{\*x_i \in \^V(\*x)\right\}$, we can conclude that $\|\*x_i - \*x\|\leq h$ and by the bounded first derivative of $g(\cdot)$, we get:
	\[
	B^2(\*x) \leq C^2_{g'} h^2 \sum_{i,j=1}^n \mathbb{E}\left[ W_i W_j \*I\left\{\^E\right\}\right] + \sum_{i,j=1}^n \mathbb{E}\left[ \varepsilon_i \varepsilon W_i W_j \*I\left\{\^E\right\}\right].
	\]
	Note that
	\[
	\sum_{i,j=1}^n \mathbb{E}\left[ \varepsilon_i \varepsilon W_i W_j \*I\left\{\^E\right\}\right] = 
	\sum_{i=1}^n\mathbb{E}\left[ W_i^2\*I\left\{\^E\right\}\right]\coloneqq \sigma_n.
	\]
	In general, according to correlation between $W_i$ and $W_j$, we decompose the sum term:
	\[
	\begin{aligned}
	& \sum_{i,j=1}^n  \mathbb{E}\left[ W_i W_j \*I\left\{\^E\right\}\right] = \underbrace{ \sum_{i=1}^n\mathbb{E}\left[ W_i^2\*I\left\{\^E\right\}\right]}_{\sigma_n} + \sum_{i\neq j}^n \mathbb{E}\left[ W_iW_j\*I\left\{\^E\right\}\right]\\
	=\quad &\mathbb{E}\left[ \sum_{i\neq j}^n W_iW_j\*I\left\{\^E\right\}\right] + \sigma_n
	\leq  \mathbb{E}\left[ (\sum_{i}^n W_i)^2\*I\left\{\^E\right\}\right]+ \sigma_n
	\leq 1 + \sigma_n,
	\end{aligned}
	\]
	where the last inequality comes from Eq. (\ref{eq:sum1}).
	On one hand, we have:
	\[
	\sigma_n = \sum_{i=1}^{|v|} \mathbb{E}\left[ W(\*x,\*x_{(i)})^2 \*I\left\{\^E\right\}\right] =  \mathbb{E}\left[ \|\*{\overline{w}}\|^2 \*I\left\{\^E\right\}\right].
	\]
	Actually, the random variables $\*I\left\{\*x_i \in \^V(\*x)\right\}$ follow the Bernoulli distribution with parameter:
	\[
	\hat{p}:= P({\*x_i \in \^V(x) }) \geq c_0 p_{\min} h^{d_x},
	\]
	where $c_0 > 0$ depends on the shape of set $\^C_s$ and $d$.
	Hence we can divide the exception into two term:
	\[
	\mathbb{E}\left[ \|\*{\overline{w}}\|^2 \*I\left\{\^E\right\}\right]
	\leq\quad \underbrace{\mathbb{E}\left[ \|\*{\overline{w}}\|^2 \*I\left\{\^E\right\}\*I\left\{|v|<\frac{n\hat{p}}{2}\right\}\right]}_{E_1} + \underbrace{\mathbb{E}\left[ \|\*{\overline{w}}\|^2 \*I\left\{\^E\right\}\*I\left\{|v|\geq \frac{n\hat{p}}{2}\right\}\right]}_{E_2}.
	\]
	For $E_2$, together with Eq. (\ref{eq:boundW}), we have:
	\[
	\begin{aligned}		
	E_2
	\leq&\quad p_{\max} \frac{C_w}{|v|} \int_{\^C_s} \*I\left\{\^E\right\} d\*x
	\leq\quad c p_{\max} \frac{2C_w}{n\hat{p}}  h^{d_x} \int_{0}^1 r^{d_x -1} dr\\
	\leq &\quad c p_{\max} \frac{2C_w}{c_0 n p_{\min}}:= \frac{c_1}{n},
	\end{aligned}
	\]
	where $c>0$ is the constant which is independent of $n$ and depends on the shape of the set $\^C_s$.
	For $E_1$, we have
	\[
	\begin{aligned}	
	E_1
	\leq & \quad p_{\max} \frac{C_w}{|v|} \mathbb{E}\left[ \*I\left\{|v|<\frac{n\hat{p}}{2}\right\}\right]
	\leq  \quad p_{\max} \frac{C_w}{d_x}  P\left(\sum_{i=1}^n \*I\left\{\*x_i \in \^V(\*x)\right\} < \frac{n\hat{p}}{2}\right)\\
	= & \quad p_{\max} \frac{C_w}{d_x}  P\left( \left|\sum_{i=1}^n \*I\left\{\*x_i \in \^V(\*x)\right\} - n\hat{p}\right| > \frac{n\hat{p}}{2} \right)\\
	\overset{(a)}{\leq}& \quad p_{\max} \frac{C_w}{d_x} \exp\left\{\frac{(n\hat{p}/2)^2}{2n\hat{p}(1-\hat{p})+ n\hat{p} /3}\right\}
	\leq   \exp\left\{-c_2 n h^{d_x}\right\},
	\end{aligned}
	\]
	where $(a)$ comes from the Bernstein’s inequality.
	\par
	Combing all the above results together, by setting $h = \order{n^{-\frac{1}{d_x+2\beta}}}$, we obtain:
	\[
	\begin{aligned}
	\mathbb{E}\left[|f_{\bm{\theta}}\left(\*x\right)-g\left(\*x\right)|^{2}\right]
	\leq& \quad  C_g^2 \exp\left\{-C_2p_{\min}nh^{d_x}\right\} +  C^2_{g'} h^2 \cdot\left( 1 + \frac{c_1}{n}+ \exp\left\{-c_2 n h^{d_x}\right\} \right) + \qty(\frac{c_1}{n}+ \exp\left\{-c_2 n h^{d_x}\right\})\\
	\leq&\quad C_3\exp\left\{-C_4 nh^{d_x}\right\} + C_5 h^{2} + \frac{c_1}{n}\\
	\leq&\quad \frac{C_3C_4}{nh^{d_x}} + C_5h^2 + \frac{c_1}{n} \leq~ C_6 n^{\left(-\frac{2}{2+d_x}\right)},
	\end{aligned}
	\]
	where  $\{C_3, C_4, C_5, C_6\}>0$ are  universal constants and the last inequality holds when $h = \order{n^{-\frac{1}{d_x+2\beta}}}$. 
	It is obvious that, when $n$ is large enough and the data is sampled uniformly, the event $\^E:=\left\{\operatorname{diam}(\^C_s) \leq h \right\}$ can easily happen for $h = \order{n^{-\frac{1}{d_x+2\beta}}}$.
	\par
	We now finish the proof of this theorem.
\end{proof}

\subsection{Proof of Theorem \ref{thm:full}}
\begin{proof}
	Actually, the proof is very direct.
	Let $\ell_{\Phi}(\cdot):=\ell(\Phi(\cdot), \*y)$ and $\nabla \ell_{\Phi}(\*h(\*x))$ be the gradient $\nabla \ell_{\Phi}$ evaluated at $\*h(\*x)$. 
	Denote by $\bm{\theta}_{0}$ the parameters of $\*h_0$. 
	Note that $\*h(\cdot)$ the DNN appended with $l$-layer MCN has the parameters $[\bm{\theta}_{0}, \bm{\theta}_{l}]$.
	Given the local minimum $\left[\bm{\tilde{\theta}}_{0}, \bm{\tilde{\theta}}_{l+1}\right]$ and the parameters $\bm{\theta}'_{0}$ such that $\*h_0(\cdot\mid \bm{\theta}'_{0})$ is injective w.r.t to the input $\*x$, then we have
	\[
	\frac{1}{n}\sum_{i=1}^n \ell_{\Phi}\left(\*h\left(\*x_i\mid \left[\bm{\tilde{\theta}}_{0}, \bm{\tilde{\theta}}_{l+1}\right]\right)\right) \leq
	\min_{\left[\bm{\theta}_{0}, \bm{\theta}_{l}\right]} 	\frac{1}{n}\sum_{i=1}^n \ell_{\Phi}\left(\*h(\*x_i)\right) 
	\leq \min_{\bm{\theta}_{l}} 
	\frac{1}{n}\sum_{i=1}^n \ell_{\Phi}\left(\*h\left(\*x_i \mid \bm{\theta}'_{0}\right)\right),
	\]
	where the first inequity comes from Theorem \ref{thm:depth}.
	It is obvious the right side in the above inequality is the loss of a $l$-layer MCN with the set $\{\left(\*h(\*x_i\mid \bm{\theta}'_{0}),y_i\right)\}_{i=1}^n$ at the global minimum. The problem becomes a learning target with the input as $\*h(\*x_i\mid \bm{\theta}'_{0})$. As shown in Claim \ref{clm:interpolation}, a $(n-1)$-th order polynomial can exactly fit the training set. 
	For a given polynomial, it is easy to modified it to make it satisfy the Condition \ref{cond:r2f}, e.g., extending and rescaling. With the virtue of Theorem \ref{thm:approximation}, MCN can approximate the functions satisfing Condition \ref{cond:r2f} arbitrarily well as it goes deeper and wider. Hence, we have
	\[
	\min_{\bm{\theta}_{l}} 
	\frac{1}{n}\sum_{i=1}^n \ell_{\Phi}\left(\*h(\*x_i \mid \bm{\theta}'_{0})\right) \to 0, \quad l \to \infty.
	\] 
	Thus,
	\[
	\frac{1}{n}\sum_{i=1}^n \ell_{\Phi}\left(\*h\left(\*x_i\mid \left[\bm{\tilde{\theta}}_{0}, \bm{\tilde{\theta}}_{l+1}\right]\right)\right) \to 0
	\]
	holds at any local minimum $\left[\bm{\tilde{\theta}}_{0}, \bm{\tilde{\theta}}_{l+1}\right]$ as MCN goes deeper and wider.
\end{proof}
%

\section{Connection to Linear Regression}\label{sec:connectLR}
In this section, we shall quantitatively describe the quality of each local minimum on the regression task. Denote $\^P_{\*D}$ as the orthogonal projection matrix onto the column space (or range space) of a matrix $\*D$, thereby $\^P^\perp_{\*D} = \*I - \^P_{\*D}$. Let $\otimes$ represent the Kronecker product, let $\operatorname{vec}(\cdot)$ be the vectorization of a matrix, and denote the $d_y$-dimension identify matrix as $\*I_{d_y}$. Denote by $\*Y := [\*y_1,\cdots,\*y_n]$ the target matrix. With these notations, we have the following theorem to measure the training objective quantitatively.
\begin{theorem}[Monotonicity of Objective]\label{thm:projet}
	Suppose that $\bm{\theta}_l$ is a local minimum to problem (\ref{eq:tr_loss}), in which the loss $\ell$ is chosen as the squared loss and the mapping $\Psi(\cdot)$ is a learnable matrix of size $d_y\times{}d_l$. Then the following holds:
	\begin{enumerate}[label=(\roman*)]
		\item There exists a matrix $\*D$ whose column space expands, as the depth and width of MCN increase; and
		\[L(\bm{\theta}_l) = \frac{1}{n}\|\^P^\perp_{\*D} \operatorname{vec}\left(\*Y\right)\|^2.\]
		
		\item For any $k \in [l]$ and $i \in [n]$, if $\^W_{k}(\*x_{k-1,i})$ is independent with the first $d_{\^L}$ dimension of the input $\*x_{k-1,i}$ then
		\[
		L(\bm{\theta}_l) = \underbrace{\frac{1}{n}\|\^P^\perp_{\widehat{\*D}} \operatorname{vec}\left(\*Y\right)\|^2}_{
			\begin{subarray}{l}
			\text{\footnotesize{global optimum value of linear}} \\
			\text{\footnotesize{regression with basis matrix}$~\widehat{\*D}$}
			\end{subarray}},	
		\]
		where $\*D$ is the same with (i),
		$
		\widehat{\*X} :=
		\begin{bmatrix}
		\*X_1\otimes \*I_{d_y}& \*X_2\otimes \*I_{d_y} &\cdots& \*X_l\otimes \*I_{d_y}
		\end{bmatrix}^\top,
		\*X_k :=
		\begin{bmatrix}
		\*x_{k,1}&\*x_{k,2}&&\cdots&\*x_{k,n}
		\end{bmatrix},~\forall k \in [l]
		$ and
		$\widehat{\*D}:=	\begin{bmatrix}
		\widehat{\*X}&\*D
		\end{bmatrix}$.
	\end{enumerate}
\end{theorem}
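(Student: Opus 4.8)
The plan is to turn the squared-loss objective, evaluated at an arbitrary local minimum, into an ordinary linear least-squares problem, read off the projection-residual identity, and then use the concatenation layout of MCN to control the basis matrix. \emph{Step 1 (vectorize, and use that $\Psi$ is a learnable matrix).} Set $\*X_l:=[\*x_{l,1},\dots,\*x_{l,n}]$ and $\*Y:=[\*y_1,\dots,\*y_n]$. Since $\ell$ is the squared loss, $nL(\bm{\theta}_l)=\|\Psi\*X_l-\*Y\|_F^2=\|(\*X_l^\top\otimes\*I_{d_y})\operatorname{vec}(\Psi)-\operatorname{vec}(\*Y)\|_2^2$, using $\operatorname{vec}(\Psi\*X_l)=(\*X_l^\top\otimes\*I_{d_y})\operatorname{vec}(\Psi)$. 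As $\Psi$ ranges over all $d_y\times d_l$ matrices and $L$ is a convex quadratic in $\Psi$ with everything else held fixed, being a local minimum forces $\Psi$ to be a global minimizer in this block; its normal equation $(\Psi\*X_l-\*Y)\*X_l^\top=\*0$ says the residual $\*R:=\Psi\*X_l-\*Y$ obeys $\operatorname{vec}(\*R)\perp\operatorname{range}(\*X_l^\top\otimes\*I_{d_y})$, whence $L(\bm{\theta}_l)=\frac1n\|\^P^\perp_{\*X_l^\top\otimes\*I_{d_y}}\operatorname{vec}(\*Y)\|^2$. This already proves (i) with $\*D=\*X_l^\top\otimes\*I_{d_y}$; to get the ``expanding column space'' I would instead build a larger $\*D$ out of the remaining stationarity conditions.

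\emph{Step 2 (stationarity in the linear/skip branches; assembling $\*D$).} Write $\*x_{l,i}=[\^L_l(\*x_{l-1,i});\,\gamma(\^{\tilde{A}}_l\*x_0)+\max\{\^W_l\*x_{l-1,i},\,\sigma_l(\^A_l\*x_{\hat{l},i})\}]$ and $\Psi=[\Psi_1,\Psi_2]$. A small perturbation of $\^L_l$ gives $\Psi_1^\top\*R\*X_{l-1}^\top=\*0$; a perturbation of $\^W_l$ small enough to freeze every argmax pattern — the device used in Claim~\ref{clm:zeroMapping} — gives $\sum_i\*\Lambda_i\Psi_2^\top(\Psi\*x_{l,i}-\*y_i)\*x_{l-1,i}^\top=\*0$ with $\*\Lambda_i$ the diagonal mask marking where the $\^W_l$ branch attains the maximum, and analogous conditions hold at the lower layers, propagated through the fact that $\*x_k$ is literally a sub-vector of $\*x_{k+1}$. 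Collecting all the subspaces $\operatorname{vec}(\*R)$ is thereby forced to be orthogonal to yields a matrix $\*D$ with $\operatorname{range}(\*X_l^\top\otimes\*I_{d_y})\subseteq\operatorname{range}(\*D)$, $\operatorname{vec}(\*R)\perp\operatorname{range}(\*D)$ and $\operatorname{vec}(\Psi\*X_l)\in\operatorname{range}(\*D)$, so $L(\bm{\theta}_l)=\frac1n\|\^P^\perp_{\*D}\operatorname{vec}(\*Y)\|^2$. Because appending a layer — or widening one — lets the new $\^L$-block carry a faithful copy of the previous output alongside fresh coordinates, the span $\operatorname{range}(\*D)$ of the deeper/wider network contains that of the shallower one; combined with Theorem~\ref{thm:depth} this is the monotonicity asserted in (i).

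\emph{Step 3 (part (ii): the independence hypothesis removes the mask).} Under the extra hypothesis that $\^W_k(\*x_{k-1,i})$ does not depend on the first $d_{\^L}$ (highway) coordinates of $\*x_{k-1,i}$, the mask $\*\Lambda_i$ in the $\^W_k$ stationarity condition no longer obstructs the perturbation: jointly varying $\^L_k$ and $\^W_k$ one can realize every linear image of the \emph{whole} vector $\*x_{k-1,i}$ at the output of layer $k$, so stationarity strengthens to $(\Psi\*X_l-\*Y)\*X_k^\top=\*0$ for every $k\in[l]$, i.e. $\operatorname{vec}(\*R)\perp\operatorname{range}(\widehat{\*X})$ with $\widehat{\*X}$ the block matrix of the statement. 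Adjoining this to Step~2 gives $\operatorname{vec}(\*R)\perp\operatorname{range}(\widehat{\*D})$ for $\widehat{\*D}=[\widehat{\*X},\*D]$, and since $\operatorname{vec}(\Psi\*X_l)\in\operatorname{range}(\widehat{\*D})$ we get $\^P^\perp_{\widehat{\*D}}\operatorname{vec}(\*Y)=\operatorname{vec}(\*R)$, i.e. $L(\bm{\theta}_l)=\frac1n\|\^P^\perp_{\widehat{\*D}}\operatorname{vec}(\*Y)\|^2$; by the defining property of the orthogonal projection the right-hand side is exactly the global optimum of the least-squares fit of $\operatorname{vec}(\*Y)$ by the columns of $\widehat{\*D}$.

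\emph{Main obstacle.} The delicate part is Steps~2--3: the max is non-smooth and $L$ is non-convex — in fact multilinear — in the per-layer operators $\^L_k,\^W_k$, so local stationarity does not automatically propagate orthogonality through the nested layers, and the monotonicity of $\operatorname{range}(\*D)$ under deepening/widening is not obvious either. Both are to be handled through the concatenation structure: perturb one operator at a time by an amount small enough to freeze all argmax patterns (as in Claim~\ref{clm:zeroMapping}), and use that $\*x_k$ sits inside $\*x_{k+1}$ to embed the shallower model's features in the deeper one. The independence condition in (ii) is precisely what guarantees that the perturbation directions surviving the mask still span all linear images of $\*x_{k-1,i}$, which is what upgrades the matrix $\*D$ of (i) to the full $\widehat{\*D}$ of (ii).
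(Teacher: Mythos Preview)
Your overall architecture is right and matches the paper: vectorize, use first-order stationarity at the local minimum to get orthogonality of $\operatorname{vec}(\*Y_{\bm{\theta}}-\*Y)$ to a family of subspaces, check that $\operatorname{vec}(\*Y_{\bm{\theta}})$ lies in the span of that family, and read off the projection identity. Step~1 is clean and correct; the paper does essentially the same thing but builds a larger $\*D=[\*D_1,\dots,\*D_{l+1}]$ with $\*D_k=\*C_{l+1}\cdots\*C_{k+1}(\*X_{k-1}^\top\otimes\*I_{d_k})$ coming from stationarity in each $\widetilde{\*W}_k=[\*L_k;\*W_k]$, which is what you gesture at in Step~2.

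The genuine gap is in Step~3, and it is not the obstacle you named. You claim that under the independence hypothesis ``stationarity strengthens to $(\Psi\*X_l-\*Y)\*X_k^\top=\*0$ for every $k$'' because the mask no longer obstructs the perturbation. But perturbing $\^L_{k+1}$ only yields
\[
\Bigl(\*F_{l+1}\cdots\*F_{k+2}\Bigr)^{\!\top}(\*Y_{\bm{\theta}}-\*Y)\*X_k^\top=\*0,
\]
where the $\*F_j$'s are the top-left $d_{\^L}\times d_{\^L}$ blocks of $\widetilde{\*W}_j$ (this is exactly where the independence hypothesis is used: it forces the block-triangular shape $\widetilde{\*W}_k=\begin{bmatrix}\*F_k&\*G_k\\ \*0&\*H_k\end{bmatrix}$, so the highway coordinate propagates linearly through the $\*F_j$'s). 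If the product $\*F_{l+1}\cdots\*F_{k+2}$ is rank-deficient, you cannot cancel it and conclude $(\*Y_{\bm{\theta}}-\*Y)\*X_k^\top=\*0$. The paper resolves this with a nontrivial induction: pick $\*u$ in the null space of $(\prod\*F)^\top$, replace $\*L_{k+1}$ by $\*L_{k+1}+\*u\*v^\top$ for small $\*v$; the output $\*Y_{\bm{\theta}}$ is unchanged, so the perturbed point is \emph{still a local minimum}, and one can take stationarity in $\*F_{k+2}$ there. Iterating this null-space perturbation down the chain $j=l,l-1,\dots,k+1$ eventually strips off all the $\*F$ factors. Your proposal neither identifies this rank-deficiency issue nor supplies the argument that handles it; without it, the step from layer-$k$ stationarity to $(\*Y_{\bm{\theta}}-\*Y)\*X_k^\top=\*0$ is unjustified, and part~(ii) does not go through.
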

Theorem~\ref{thm:projet} is applicable to a wide range of DNNs, ranging from under-parameterized shallow networks to over-parameterized deep architectures. It makes connections between the training objective of MCN and the global minimum value of linear regression, in which the basis matrix is composed of the network parameters and the outputs of hidden layers. When the MCN architecture goes deeper and wider, the column space of $\widehat{\*D}$ expands and thus $\^P^\perp_{\*D} \operatorname{vec}\left(\*Y\right)$ deflates and, accordingly, the training objective may decrease. In other words, for the squared regression problems, the training performance of MCN becomes better as the depth increases even in the worst scenario. So for our MCN, it is the deeper the better.
\subsection{Proof of Theorem \ref{thm:projet}}
\begin{proof}
	Similar to the proof Theorem {\ref{thm:depth}}, we simplify MCN as:
	\[
	\*x_{k+1,i} = \left[\*L_{k+1}\*x_{k,i};~\*{\tilde{A}}_{k+1}\*x_i +\max\left\{\*W_{k+1}\*x_{k,i},  \sigma\left(\*A_{k+1}\*x_i\right)\right\}\right].
	\]
	In this section, we denote the linear transformation of the output of MCN $\*Y_{\bm{\theta}} := [\Psi\left(\*f_{\bm{\theta}}(\*x_1)\right),\cdots,\Psi\left(\*f_{\bm{\theta}}(\*x_n)\right)] \in \mathbb{R}^{d_y \times n}$. Denote the target matrix as $\*Y := [\*y_1,\cdots,\*y_n]$ and the training data as $\*X:=[\*x_1,\cdots,\*x_n]$. Denote by $\*X_k:=[\*x_{k,1},\cdots,\*x_{k,n}]$ the output of the $k$-th layer.
	\par
	Geven $\bm{\theta}_k$ as a local minimum of the loss function $L$, we define several notations. First, we define a mask operator $\hat{\*\Lambda}_{k}$, for $(j_1,j_2) \in [d_k]\times [n]$, such that:
	\[
	\left(\hat{\*\Lambda}_{k}\right)_{(j_1,j_2)}:=
	\begin{cases}
	1,& \quad \text{if}\quad \left(\*W_{k}\*x_{k-1,i}\right)_{(j_1,j_2)}  \geq  \left( \sigma\left(\*A_{k}\*x_i\right)\right)_{(j_1,j_2)};\\
	0,& \quad \text{otherwise}.
	\end{cases}
	\]
	We also define:
	\[
	\tilde{\*\Lambda}_k = \operatorname{diag}\left(
	\begin{bmatrix}
	\*1\\
	\operatorname{vec}(\hat{\*\Lambda}_{k})\\
	\end{bmatrix}\right),
	\]
	where $\*1 \in \mathbb{R}^{d_{\^L}n}$ is the all one vector and $\operatorname{diag}$ is the diagonal operator. We denote the complementary matrix of $\tilde{\*\Lambda}_l$ as $\tilde{\*\Lambda}_l^{\perp} := \*1 -\tilde{\*\Lambda}_l$, where $\*1$ is the all one matrix with the compatibility dimension. Let
	\[
	\widetilde{\*W}_{k} =
	\left[\begin{matrix}
	\*L_{k}\\
	\*W_k\\
	\end{matrix}\right],\quad
	\*b_{k} =
	\begin{bmatrix}
	\*0\\
	\operatorname{vec}\left(\sigma\left(\*A_{k}\*X\right)\right)\\
	\end{bmatrix} \quad\text{and}\quad
	\*c_{k} =
	\begin{bmatrix}
	\*0\\
	\operatorname{vec}\left(\*{\tilde{A}}_{k}\*X\right)\\
	\end{bmatrix},
	\]
	where where $\*0 \in \mathbb{R}^{d_{\^L}n}$ is the all zero vector.
	Since $\Psi(\cdot)$ is a learnable linear operator, for brevity, we denote $\Psi(\*x_{l,i})$ as $\Psi(\*x_i)\*x_{l,i}$, and let:
	\[
	\*C_{l+1} :=
	\begin{bmatrix}
	\Psi(\*x_i) & & \\
	& \ddots & \\
	& & \Psi(\*x_n)
	\end{bmatrix}\cdot \tilde{\*\Lambda}_{l },\quad
	\*C_{k+1} :=
	\left(\*I_n \otimes \widetilde{\*W}_{k+1}\right) \tilde{\*\Lambda}_{k},
	\]
	and
	\[
	\*C_{l+1}' :=
	\begin{bmatrix}
	\Psi(\*x_i) & & \\
	& \ddots & \\
	& & \Psi(\*x_n)
	\end{bmatrix}\cdot \tilde{\*\Lambda}_{l}^\perp,\quad
	\*C_{k+1}' :=
	\left(\*I_n \otimes \widetilde{\*W}_{k+1}\right) \tilde{\*\Lambda}_{k}^\perp.
	\]	
	\par
	With these notations, we can have the following two claims.
	\begin{claim}\label{clm:pjC}
		For all $k \in [l]$, and,we have:
		\[
		\partial_{\widetilde{\*W}_{k}} \*Y_{\bm{\theta}}=\*D_{k},
		\]
		where
		\[
		\*D_{k} = \*C_{l+1}\cdots \*C_{k+1} \left(\*X_{k-1}^\top \otimes \*I_{d_k} \right).
		\]
	\end{claim}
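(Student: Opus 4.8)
The plan is to establish the formula as a backpropagation (matrix chain rule) identity, after first replacing $\*Y_{\bm{\theta}}$ by an explicit composition of affine maps in a neighbourhood of $\bm{\theta}$. First I would argue, exactly as in the perturbation step behind Claim~\ref{clm:zeroMapping} in the proof of Theorem~\ref{thm:depth}, that changing $\widetilde{\*W}_k$ alone by a sufficiently small amount (all other parameters held fixed) does not flip any max pattern $\hat{\*\Lambda}_j$, $j=1,\dots,l$. On that neighbourhood each layer acts as the affine map obtained from~\eqref{Eq:LM-CVNN} by splitting the coordinates of the ``max'' block according to $\hat{\*\Lambda}_j$, which in vectorized form reads
\[
\operatorname{vec}(\*X_j)=\tilde{\*\Lambda}_j\,(\*I_n\otimes\widetilde{\*W}_j)\,\operatorname{vec}(\*X_{j-1})+\tilde{\*\Lambda}_j^{\perp}\*b_j+\*c_j,
\]
where $\widetilde{\*W}_j=[\*L_j;\*W_j]$ is precisely the block of parameters that enters $\*X_j$ linearly, while $\operatorname{vec}(\*Y_{\bm{\theta}})=\operatorname{diag}(\Psi(\*x_1),\dots,\Psi(\*x_n))\operatorname{vec}(\*X_l)$; here I used $\operatorname{vec}(\*A\*X)=(\*I_n\otimes\*A)\operatorname{vec}(\*X)$.

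Next I would differentiate this recursion with respect to $\widetilde{\*W}_k$. For every $j>k$ the quantities $\*b_j$, $\*c_j$ depend only on the input data $\*X$ and on $\*A_j,\*{\tilde{A}}_j$, which share no parameters with $\widetilde{\*W}_k$, and $\widetilde{\*W}_j$ and (locally) $\tilde{\*\Lambda}_j$ are likewise constant in $\widetilde{\*W}_k$; hence $\partial_{\widetilde{\*W}_k}\operatorname{vec}(\*X_j)=\tilde{\*\Lambda}_j(\*I_n\otimes\widetilde{\*W}_j)\,\partial_{\widetilde{\*W}_k}\operatorname{vec}(\*X_{j-1})$. At the bottom layer $\*X_{k-1}$ is independent of $\widetilde{\*W}_k$ and $\operatorname{vec}(\widetilde{\*W}_k\*X_{k-1})=(\*X_{k-1}^{\top}\otimes\*I_{d_k})\operatorname{vec}(\widetilde{\*W}_k)$, so $\partial_{\widetilde{\*W}_k}\operatorname{vec}(\*X_k)=\tilde{\*\Lambda}_k(\*X_{k-1}^{\top}\otimes\*I_{d_k})$. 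Iterating the chain rule from layer $k$ up through layer $l$ and then through $\Psi$ gives
\[
\partial_{\widetilde{\*W}_k}\operatorname{vec}(\*Y_{\bm{\theta}})=\operatorname{diag}(\Psi(\*x_i))\,\tilde{\*\Lambda}_l(\*I_n\otimes\widetilde{\*W}_l)\,\tilde{\*\Lambda}_{l-1}(\*I_n\otimes\widetilde{\*W}_{l-1})\cdots(\*I_n\otimes\widetilde{\*W}_{k+1})\,\tilde{\*\Lambda}_k\,(\*X_{k-1}^{\top}\otimes\*I_{d_k}),
\]
and regrouping each $\tilde{\*\Lambda}_{j-1}$ (produced by differentiating $\*X_{j-1}$) with the adjacent $(\*I_n\otimes\widetilde{\*W}_j)$ (produced by propagating into layer $j$) turns the right-hand side into $\*C_{l+1}\*C_l\cdots\*C_{k+1}\,(\*X_{k-1}^{\top}\otimes\*I_{d_k})=\*D_k$, since by definition $\*C_{l+1}=\operatorname{diag}(\Psi(\*x_i))\tilde{\*\Lambda}_l$ and $\*C_j=(\*I_n\otimes\widetilde{\*W}_j)\tilde{\*\Lambda}_{j-1}$.

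I expect the only genuinely delicate point to be the reduction in the first paragraph: showing that $\*Y_{\bm{\theta}}$ is in fact differentiable in $\widetilde{\*W}_k$ at $\bm{\theta}$, i.e.\ that no max coordinate sits on a tie under the chosen ``$\ge$'' convention for $\hat{\*\Lambda}_j$ and that this is preserved under a small perturbation of $\widetilde{\*W}_k$ --- the same type of argument already carried out for Claim~\ref{clm:zeroMapping}. A secondary item is to make sure every skip connection reads from the input layer $\*x_0$ (which the notation $\sigma(\*A_j\*X)$, with $\*X$ the data matrix, tacitly encodes); if a skip connection were fed by an intermediate $\*x_{\hat k}$ with $\hat k\ge k$, the recursion would acquire an extra differentiable term for $j>k$ and the factors $\*C_j$ would have to be enlarged to absorb the corresponding Jacobian.
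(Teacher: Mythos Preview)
Your proposal is correct and follows essentially the same route as the paper. The paper writes the same vectorized recursion $\operatorname{vec}(\*X_j)=\tilde{\*\Lambda}_j(\*I_n\otimes\widetilde{\*W}_j)\operatorname{vec}(\*X_{j-1})+\tilde{\*\Lambda}_j^{\perp}\*b_j+\*c_j$, unrolls it once into a closed affine form $\operatorname{vec}(\*Y_{\bm{\theta}})=(\*C_{l+1}\cdots\*C_{k+1})(\*X_{k-1}^{\top}\otimes\*I_{d_k})\operatorname{vec}(\widetilde{\*W}_k)+(\text{terms independent of }\widetilde{\*W}_k)$, and reads off the Jacobian; your chain-rule iteration is the same computation in slightly different packaging, and your regrouping of the factors $\tilde{\*\Lambda}_{j-1}$ with $(\*I_n\otimes\widetilde{\*W}_j)$ into $\*C_j$ is exactly right. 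Your explicit remarks about the max patterns being locally constant and about the skip connections feeding only from $\*x_0$ in this simplified setting are more careful than the paper, which tacitly assumes both.
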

	\begin{proof}
		We can rewrite MCN as the vectorized form:
		\[
		\begin{aligned}
		\operatorname{vec}(\*X_{k}) = &~ \tilde{\*\Lambda}_k\operatorname{vec}(\widetilde{\*W}_k \*X_{k-1}) + \tilde{\*\Lambda}_k^{\perp}\cdot
		\operatorname{vec}\left(
		\begin{bmatrix}
		\*0\\
		\sigma\left(\*A_{k}\*X\right)\\
		\end{bmatrix}
		\right)+
		\operatorname{vec}\left(
		\begin{bmatrix}
		\*0\\
		\*{\tilde{A}}_{k}\*X\\
		\end{bmatrix}
		\right)
		\\
		=& ~\tilde{\*\Lambda}_k \left(\*I_n \otimes \widetilde{\*W}_k\right)
		\operatorname{vec}\left(\*X_{k-1}\right) + \tilde{\*\Lambda}_k^{\perp} \*b_k + \*c_k\\
		=&~ \tilde{\*\Lambda}_k \left(\*X_{k-1}^\top \otimes \*I_{d_k} \right)
		\operatorname{vec}\left( \widetilde{\*W}_k\right) + \tilde{\*\Lambda}_k^{\perp} \*b_k + \*c_k.
		\end{aligned}
		\]
		By the definition of $\*C$ and $\*C'$, we have:
		\begin{equation}\label{eq:vecY}
		\operatorname{vec}(\*Y_{\bm{\theta}}) = \left(\stackrel{\leftarrow }{\prod_{k'=k}^{l}} \*C_{k'+1}\right) \left(\*X_{k-1}^\top \otimes \*I_{d_k} \right) \operatorname{vec}\left( \widetilde{\*W}_k\right) + \sum_{j=k}^{l}\left(\stackrel{\leftarrow}{\prod_{k'=j+2}^{l+1}} \*C_{k'}\right)\left(\*C_{j+1}' \*b_j + \*c_j'\right),	
		\end{equation}
		where we let
		\[
		\*c_j' = \left(\*I_n \otimes \widetilde{\*W}_{j+1}\right)\*c_j,\quad
		\stackrel{\leftarrow }{\prod_{k'=l+2}^{l+1}} \*C_{k'} = \*I\quad \text{and}\quad
		\stackrel{\leftarrow }{\prod_{k'=k}^{l}} \*C_{k'+1} = \*C_{l+1} \cdots \*C_{k+1}.
		\]
		We finish the proof of this claim.
	\end{proof}
	\begin{claim}\label{clm:YXzero}
		For all $l \in [L]$ and $i \in [n]$, if $\^W_{l}(\*x_{k-1,i})$ is independent with the first $d_{\^L}$ dimension of the input $\*x_{k-1,i}$ and $\Psi(\*x_i)$ is a learnable matrix, then we have:
		\[
		\left(\*Y_{\bm{\theta}} - \*Y\right) \*X_{l}^\top = \*0.
		\]
	\end{claim}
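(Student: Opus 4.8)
The plan is to extract the claimed orthogonality from the first-order stationarity conditions at the local minimum $\bm{\theta}_l$, treating the output matrix $\Psi$ and the linear highways $\^L_{1},\dots,\^L_{l}$ one at a time. Since $\ell$ is the squared loss and $\Psi$ is an unconstrained learnable matrix, we have $\*Y_{\bm{\theta}}=\Psi\*X_l$ and
\[
L(\bm{\theta}_l)=\frac{1}{n}\left\|\operatorname{vec}(\*Y_{\bm{\theta}})-\operatorname{vec}(\*Y)\right\|_{2}^{2},
\]
so for any learnable block of parameters $\*v$ a local minimum forces $\operatorname{tr}\!\big((\*Y_{\bm{\theta}}-\*Y)^{\top}\delta_{\*v}\*Y_{\bm{\theta}}\big)=0$, where $\delta_{\*v}\*Y_{\bm{\theta}}$ denotes the first variation of $\*Y_{\bm{\theta}}$ under an arbitrary admissible perturbation of $\*v$ — \emph{provided} $L$ is differentiable in $\*v$ at $\bm{\theta}_l$. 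As in the proof of Theorem~\ref{thm:depth} (cf.\ Claim~\ref{clm:zeroMapping}), differentiability is not automatic because of the $\max$ units, but on a small enough ball around $\bm{\theta}_l$ the active branch of every $\max$ is frozen (a tie between the two arguments of a $\max$ occurs only on a measure-zero locus and can be handled by one-sided perturbations), so the selection patterns $\hat{\*\Lambda}_k$ are locally constant and $L$ is smooth in all the linear parameters; everything after this point is linear algebra.

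\emph{Top layer.} With $\*v=\Psi$, the variation is $\delta_{\Psi}\*Y_{\bm{\theta}}=\*U\*X_l$ for an arbitrary matrix $\*U$, so stationarity gives $\operatorname{tr}\!\big((\*Y_{\bm{\theta}}-\*Y)^{\top}\*U\*X_l\big)=0$ for all $\*U$, i.e.\ $(\*Y_{\bm{\theta}}-\*Y)\*X_l^{\top}=\*0$. \emph{Lower layers.} Fix $k<l$ and perturb $\^L_{k+1}\mapsto \^L_{k+1}+\*U$. Because the first $d_{\^L}$ coordinates of $\*x_{k+1,i}$ are exactly $\^L_{k+1}\*x_{k,i}$, and because — by the stated independence hypothesis together with the skip-connection structure of MCN — none of the operators $\^W_{k'}$ (and none of the deeper nonlinear units) with $k'>k$ reads those $d_{\^L}$ highway coordinates, the perturbation propagates to $\*Y_{\bm{\theta}}$ only along the chain of highway blocks; since the $\hat{\*\Lambda}$'s are frozen, this propagation is one fixed linear map $\*N$, identical for every sample. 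Thus $\delta_{\^L_{k+1}}\*Y_{\bm{\theta},i}=\*N\*U\*x_{k,i}$, and stationarity yields $\*N^{\top}(\*Y_{\bm{\theta}}-\*Y)\*X_k^{\top}=\*0$ for all $\*U$. We may take $\*N$ of full column rank (a rank deficiency would mean the highway carries redundant directions that can be pruned without changing $\*f_{\bm{\theta}}$), so $(\*Y_{\bm{\theta}}-\*Y)\*X_k^{\top}=\*0$; running this for $k=1,\dots,l-1$ and combining with the top-layer identity covers every layer.

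\emph{Main obstacle.} The genuinely delicate point is the smoothness bookkeeping above, and in particular verifying that the independence hypothesis really does force a perturbation of $\^L_{k+1}$ to bypass every deeper $\max$- and $\sigma$-unit, so that the sample-independent propagation matrix $\*N$ exists; this is exactly where the assumption ``$\^W_{k}$ is independent of the first $d_{\^L}$ coordinates of $\*x_{k-1,i}$'' is consumed, and it is what makes the MCN highway behave like a clean linear pass-through. A minor secondary point is the full-rank reduction that strips the $\*N^{\top}$ factor. Once these are settled the claim follows, and it supplies precisely the orthogonality relation needed (alongside Claim~\ref{clm:pjC}) to identify $L(\bm{\theta}_l)$ with the squared projection residual onto $\widehat{\*D}$ in Theorem~\ref{thm:projet}.
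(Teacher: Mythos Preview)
Your overall strategy matches the paper's: both proofs exploit first-order stationarity of the squared loss with respect to the linear highway parameters $\^L_{k+1}$ (and $\Psi$), and both rely on the independence hypothesis to make the highway a clean linear pass-through, yielding a sample-independent propagation map $\*N=\*F_{l+1}\cdots\*F_{k+2}$ (in the paper's notation, where $\*F_j$ is the upper-left $d_{\^L}\times d_{\^L}$ block of $\widetilde{\*W}_j$). Your top-layer argument and your identification of $\*N^{\top}(\*Y_{\bm{\theta}}-\*Y)\*X_k^{\top}=\*0$ are both correct and mirror the paper exactly.

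The gap is precisely the step you label a ``minor secondary point'': stripping the factor $\*N^{\top}$. You write that ``we may take $\*N$ of full column rank'' because rank deficiency means redundant highway directions that can be pruned. This does not work. The claim concerns an \emph{arbitrary} local minimum $\bm{\theta}_l$, and at such a point the matrices $\*F_j$ are fixed; you are not free to alter them, and pruning changes the architecture rather than producing another point in the same parameter space. Nothing prevents the local minimum from sitting exactly where some $\*F_j$ (and hence $\*N$) is rank deficient, and in that case your argument yields only that $(\*Y_{\bm{\theta}}-\*Y)\*X_k^{\top}$ lies in the null space of $\*N^{\top}$, not that it vanishes.

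The paper treats this as the heart of the proof, not a side issue. Its device is a perturbation-plus-induction argument: pick $\*u_{k+1}\in\operatorname{null}(\*N^{\top})$ and replace $\*L_{k+1}$ by $\*L_{k+1}+\*u_{k+1}\*v_{k+1}^{\top}$. Because $\*u_{k+1}$ is killed by $\*N^{\top}$, the network output is unchanged, so the perturbed parameters are \emph{also} a local minimum. One then takes first-order conditions at the perturbed minimum with respect to a \emph{different} block ($\*F_{k+1}$, then $\*F_{l}$, etc.), subtracts the corresponding condition at the original minimum, and peels off the $\*F_j^{\top}$ factors one layer at a time by induction on $j=l,l-1,\dots,k+1$. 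This is where the real work lies; your plan would need an argument of this kind to close the gap.
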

	\begin{proof}
		Since $\^W_{k}(\*x_{k-1,i})$ is independent to the first $d_{\^L}$ dimension of the input $\*x_{k-1,i}$, we can rewrite $\widetilde{\*W}_k$ as:
		\[
		\begin{bmatrix}
		\*F_k & \*G_k\\
		0& \*H_k
		\end{bmatrix}
		:= \widetilde{\*W}_k.
		\]
		where $	\begin{bmatrix}
		\*F_k & \*G_k
		\end{bmatrix} := \*L_{k}$ and $\*F_k \in \mathbb{R}^{d_{\^L} \times d_{\^L}}$.
		We can have:
		\[
		\begin{aligned}	
		\*X_{k+1} = &~
		\begin{bmatrix}
		\*1\\
		\hat{\*\Lambda}_{k+1}
		\end{bmatrix} \circ
		\widetilde{\*W}_{k+1}
		\begin{bmatrix}
		\*L_k \*X_{k-1}\\
		\overline{\*X}_k
		\end{bmatrix} +
		\begin{bmatrix}
		0\\
		\tilde{\*\Lambda}_{k+1}^\perp \circ \left(\sigma\left(\*A_{k+1}\*X\right)\right)
		\end{bmatrix}+
		\begin{bmatrix}
		0\\
		\*{\tilde{A}}_{k+1}\*X
		\end{bmatrix}
		\\
		= &~
		\begin{bmatrix}
		\*F_{k+1}\*L_k \*X_{k-1} + \*G_{k+1} \overline{\*X}_k\\
		\tilde{\*\Lambda}_{k+1} \circ \*H_{k+1}	\overline{\*X}_k+ \tilde{\*\Lambda}_{k+1}^\perp \circ
		\left( \sigma\left(\*A_{k+1}\*X\right)\right)
		\end{bmatrix}+
		\begin{bmatrix}
		0\\
		\*{\tilde{A}}_{k+1}\*X
		\end{bmatrix}
		\\
		= &~
		\begin{bmatrix}
		\*F_{k+1}\*L_k \*X_{k-1} + \*G_{k+1} \overline{\*X}_k\\
		\overline{\*X}_{k+1}
		\end{bmatrix}
		\end{aligned},
		\]
		where $\overline{\*X}_{k}$ is the lower $(d_k - d_{\^L})$-row part of the matrix $\*X_{k}$. Note that terms $\*G_{k+1} \overline{\*X}_k$ and $\overline{\*X}_{k+1}$ are independent with the learnable matrix $\*L_k$.
		\par	
		Without loss of generality, for all $\*x_i \in \{\*x_i\}_{i=1}^n$, we let $\begin{bmatrix}
		\*F_{k+1} & \*G_{k+1}
		\end{bmatrix} := \Psi(\cdot)$, then we can get:
		\begin{equation}\label{eq:YbyX}
		\*Y_{\bm{\theta}} = \left(\stackrel{\leftarrow }{\prod_{l'=k+2}^{l+1}} \*F_{l'}\right)\*L_{k+1}\*X_{k} + \sum_{j=k}^{l-1}\left(\stackrel{\leftarrow }{\prod_{l'=j+3}^{l+1}} \*F_{l'}\right)\*G_{k+2} \overline{\*X}_{k+1},
		\end{equation}
		
		where $$\stackrel{\leftarrow }{\prod_{l'=l+2}^{l+1}} \*F_{l'} = \*I.$$
		Note that:
		\[L(\bm{\theta}) = \frac{1}{n} \|\*Y_{\bm{\theta}}-\*Y\|_F^2.\]
		By the first order condition of the local minimum, we have:
		\begin{equation}\label{eq:firstOrder}
		\*0 = \partial_{\*L_{k+1}}L(\bm{\theta})  = \left(\*F_{l+1}\cdots \*F_{k+2}\right)^\top\left(\*Y_{\bm{\theta}}-\*Y\right)\*X_l^\top.
		\end{equation}
		If $\left(\*F_{l+1}\cdots \*F_{k+2}\right) \in \mathbb{R}^{d_{\^L}\times d_{\^L}}$ is full rank, then we finish this proof. Hence, in the rest of this proof, we consider the case:
		\[
		\operatorname{rank}\left(\stackrel{\leftarrow }{\prod_{l'=k+2}^{l+1}} \*F_{l'}\right)<d_{\^L}.
		\]
		Choosing a unit length vector from the null space of matrix $\left(\*F_{l+1}\cdots \*F_{k+2}\right)^\top$, i.e.,
		\[
		\|\*u_{k+1}\|=1,\quad\*u_{k+1} \in \operatorname{null}\left({\prod_{l'=k+2}^{1+1}} \*F_{l'}^\top\right) \subset \mathbb{R}^{d_{\^L}},
		\]
		where $\operatorname{null}(\cdot)$ denotes the null space of a matrix.
		\par
		For any $\*v_{k+1} \in \mathbb{R}^{d_k}$, we have:
		\[
		\*Y_{\bm{\theta}} = {\*Y}_{\widetilde{\bm{\theta}}}:=  \left(\stackrel{\leftarrow }{\prod_{l'=k+2}^{l+1}} \*F_{l'}\right)\widetilde{\*L}_{k+1}\*X_{k} + \sum_{j=k}^{l-1}\left(\stackrel{\leftarrow }{\prod_{l'=j+3}^{l+1}} \*F_{l'}\right)\*G_{k+2} \overline{\*X}_{k+1},
		\]
		where
		\[
		\widetilde{\*L}_{k+1} = \*L_{k+1} + \*u_{k+1}\*v_{k+1}^\top,  \quad \widetilde{\bm{\theta}} = \{\bm{\theta}\setminus \*L_{k+1}, \widetilde{\*L}_{k+1} \}.
		\]
		Since $\*Y_{\bm{\theta}} = \*Y_{\widetilde{\bm{\theta}}}$, for any sufficient small $\*v_{k+1}$, we can conclude that $\widetilde{\bm{\theta}}$ is also a local minimum of the loss function $L$. Similar to the Eq. (\ref{eq:firstOrder}), we have
		\[
		\*0 = \partial_{\*F_{k+1}}L(\widetilde{\bm{\theta}})  = \left(\*Y_{\bm{\theta}}-\*Y\right)\*X_k^\top\widetilde{\*L}_{k+1}^\top\left({\prod_{l'=k+2}^{l}} \*F_{l'}^\top\right).
		\]
		Together with $\*0 = \partial_{\*F_{k+1}}L({\bm{\theta}})$, we can have
		\begin{equation}\label{eq:induc_L}
		\*0 = \left(\*Y_{\bm{\theta}}-\*Y\right)\*X_k^\top \left(\*v_{k+1}\*u_{k+1}^\top\right)\left({\prod_{l'=k+2}^{l}} \*F_{l'}^\top\right).
		\end{equation}
		We now show that,
		\begin{equation}\label{eq:induc}
		\*0 = \left(\*Y_{\bm{\theta}}-\*Y\right)\*X_k^\top \left(\*v_{k+1}\*u_{k+1}^\top\right)\left({\prod_{l'=k+2}^{j}} \*F_{l'}^\top\right),
		\end{equation}
		by induction on the index $j = \{l,l-1,\cdots,k+1\}$. The base case $j=l$ is proven above. We consider the case that $j = l-1$. If $\*F_{l+1}$ is full rank, then we have:
		\[
		\*0 = \partial_{\*F_{l}}L(\widetilde{\bm{\theta}})  = \*F_{l+1}^\top\left(\*Y_{\bm{\theta}}-\*Y\right)\*X_k^\top\widetilde{\*L}_{k+1}^\top\left(\prod_{l'=k+2}^{l-1} \*F_{l'}^\top\right),
		\]
		which indicates that Eq. (\ref{eq:induc}) holds for $j = l-1$. Now we assume $\operatorname{rank}(\*F_{l+1}) < d_{\^L}$.
		Similarly, choosing a unit length vector from the null space of matrix $\*F_{l+1}$, i.e.,
		\[
		\|\*u_{l}\|=1,\quad\*u_{l} \in \operatorname{null}\left(\*F_{l+1}\right) \subset \mathbb{R}^{d_{\^L}}.
		\]
		Define:
		\[
		\widetilde{\*F}_{l} = \*F_{l} + \*u_{l}\*v_{l}^\top,  \quad \widetilde{\bm{\theta}}' = \{\widetilde{\bm{\theta}}\setminus \*F_{l}, \widetilde{\*F}_{l} \},
		\]
		where $\*v_l \in \mathbb{R}^{d_{\^L}}$. 	
		Similarly, we can get $\*Y_{\bm{\theta}} = \*Y_{\widetilde{\bm{\theta}}'}$. Hence, for any sufficient small $\*v_{l}$, we can conclude that $\widetilde{\bm{\theta}}'$ is also a local minimum of the loss function $L$, then:
		\[
		\*0 = \partial_{\*F_{l+1}}L(\widetilde{\bm{\theta}}')  = \left(\*Y_{\bm{\theta}}-\*Y\right)\*X_k^\top\widetilde{\*L}_{k+1}^\top\left({\prod_{l'=k+2}^{l-1}} \*F_{l'}^\top\right)\widetilde{\*F}_{l}.
		\]
		Together with $\*0 = \partial_{\*F_{l+1}}L({\bm{\theta}})$ and Eq. (\ref{eq:induc_L}), we can have:
		\[
		\*0 = \left(\*Y_{\bm{\theta}}-\*Y\right)\*X_k^\top\widetilde{\*L}_{k+1}^\top \left({\prod_{l'=k+2}^{l}}  \*F_{l'}^\top\right)(\*v_{l}\*u_{l}^\top).
		\]
		Notice that we can easily have
		\[
		\*0 = \left(\*Y_{\bm{\theta}}-\*Y\right)\*X_k^\top{\*L}_{k+1}^\top \left({\prod_{l'=k+2}^{l}}  \*F_{l'}^\top\right)(\*v_{l}\*u_{l}^\top),
		\]
		by setting $\widetilde{\bm{\theta}}' = \{{\bm{\theta}}\setminus \*F_{l}, \widetilde{\*F}_{l} \}$ and using the first order condition w.r.t. the matrix $\*F_{l+1}$. Thus, we can conclude:
		\[
		\*0 = \left(\*Y_{\bm{\theta}}-\*Y\right)\*X_k^\top \left(\*v_{k+1}\*u_{k+1}^\top\right)\left({\prod_{l'=k+2}^{l}} \*F_{l'}^\top\right)(\*v_{l}\*u_{l}^\top),
		\]
		which also implies
		\[
		\*0 = \left(\*Y_{\bm{\theta}}-\*Y\right)\*X_k^\top \left(\*v_{k+1}\*u_{k+1}^\top\right)\left({\prod_{l'=k+2}^{l}} \*F_{l'}^\top\right)\*v_{l}.
		\]
		The above enquality holds for all sufficient small $\*v_l$. We can conclude that Eq. (\ref{eq:induc}) holds for $j = l-1$. This completes the inductive step and proves that:
		\[
		\*0 = \left(\*Y_{\bm{\theta}}-\*Y\right)\*X_k^\top \left(\*v_{k+1}\*u_{k+1}^\top\right),
		\]
		which obviously implies:
		\[
		\*0 = \left(\*Y_{\bm{\theta}}-\*Y\right)\*X_k^\top.
		\]	
		We now finish the proof of this claim.
	\end{proof}
	\paragraph{Proof of Theorem \ref{thm:projet} (\RNum{1})} From the first order necessary condition of differentiable local minima, we have:
	\[
	\*0 = \partial_{\widetilde{\*W}_{k}}L({\bm{\theta}}) = \*D_k^\top \operatorname{vec}\left(\*Y_{\bm{\theta}} - \*Y\right),
	\]
	where the last equation comes from Claim \ref{clm:pjC}. Let
	\[
	\*D :=
	\begin{bmatrix}
	\*D_1&\*D_2&\cdots&\*D_{l+1}
	\end{bmatrix}.
	\]
	We have
	\[
	\*0 = \*D^\top \operatorname{vec}\left(\*Y_{\bm{\theta}} - \*Y\right).
	\]
	According to the Eq. (\ref{eq:vecY}), it is obvious that $\operatorname{vec}\left(\*Y_{\bm{\theta}}\right)$ belongs to the column space of matrix $\*D$. Thus, we can conclude:
	\[
	\operatorname{vec}\left(\*Y_{\bm{\theta}}\right) = \^P_{\*D} \operatorname{vec}\left(\*Y\right).
	\]
	Therefore,
	\[
	nL(\bm{\theta}) = \|\*Y_{\bm{\theta}} - \*Y\|_F^2 = \| \operatorname{vec}\left(\*Y_{\bm{\theta}} - \*Y\right)\|^2
	= \|\^P_{\*D} \operatorname{vec}\left(\*Y\right) - \operatorname{vec}\left(\*Y\right)\|^2 = \|\^P^\perp_{\*D} \operatorname{vec}\left(\*Y\right)\|^2.
	\]
	\paragraph{Proof of Theorem \ref{thm:projet} (\RNum{2})} From Claim \ref{clm:YXzero}, we have:
	\[
	\*0 = \left(\*X_k \otimes \*I_{d_y}\right) \operatorname{vec}\left(\*Y_{\bm{\theta}} - \*Y\right),
	\]
	Let
	\[
	\widehat{\*X} :=
	\begin{bmatrix}
	\*X_1\otimes \*I_{d_y} &\*X_2 \otimes \*I_{d_y} &\cdots& \*X_l\otimes \*I_{d_y}
	\end{bmatrix}^\top
	\]
	We have
	\[
	\*0 = \widehat{\*X}^\top \operatorname{vec}\left(\*Y_{\bm{\theta}} - \*Y\right).
	\]
	Combine the result of Claim \ref{clm:pjC}, we can get:
	\[
	\*0 =
	\begin{bmatrix}
	\widehat{\*X}& \*D
	\end{bmatrix}^\top \operatorname{vec}\left(\*Y_{\bm{\theta}} - \*Y\right).
	\]
	According to the Eq. (\ref{eq:YbyX}), it is obvious that $\operatorname{vec}\left(\*Y_{\bm{\theta}}\right)$ belongs to the column space of matrix $\widehat{\*X}$. Thus, we can conclude:
	\[
	\operatorname{vec}\left(\*Y_{\bm{\theta}}\right) = \^P_{\begin{bmatrix}
		\widehat{\*X}& \*D
		\end{bmatrix}} \operatorname{vec}\left(\*Y\right).
	\]
	Therefore,
	\[
	\begin{aligned}
	nL(\bm{\theta}) =&~ \| \operatorname{vec}\left(\*Y_{\bm{\theta}} - \*Y\right)\|^2
	= \|\^P_{\begin{bmatrix}
		\widehat{\*X}& \*D
		\end{bmatrix}}  \operatorname{vec}\left(\*Y\right) - \operatorname{vec}\left(\*Y\right)\|^2 \\
	=&~ \|\^P_{\widehat{\*X}} \operatorname{vec}\left(\*Y\right) -\operatorname{vec}\left(\*Y\right) + \^P_{[\^P^\perp_{\widehat{\*X}}\cdot\*D]}\operatorname{vec}\left(\*Y\right) \|^2\\
	=&~ \|\^P_{[\^P^\perp_{\widehat{\*X}}\cdot\*D]}\operatorname{vec}\left(\*Y\right)-\^P^\perp_{\widehat{\*X}} \operatorname{vec}\left(\*Y\right)\|^2\\
	=&~\|\^P^\perp_{\widehat{\*X}} \operatorname{vec}\left(\*Y\right)\|^2-
	\|\^P_{[\^P^\perp_{\widehat{\*X}}\cdot\*D]} \operatorname{vec}\left(\*Y\right)\|^2.
	\end{aligned}
	\]
	We now finish the whole proof.
\end{proof}

\section{Prior Arts}
\subsection{Effects of Depth and Width in Neural Networks}
Usually, each layer of wide networks contains abundant hidden units, and these units can be seen as one kind of features. Hence, wide networks~(even infinitely wide) naturally have connection with the kernels and Gaussian processes. By the kernel methods, the works in~\cite{xie2016diverse,du2018gradient2} lower bounded the spectrum of Gram matrix and revealed that the network learning is actually a regression problem, but their theoretical bounds only hold for shallow networks. Then works~\cite{du2018gradient1,arora2019exact} captured the behavior of fully-connected deep networks in the large (maybe infinite) width limit trained by gradient descent and also found the equivalence between the kernel regression predictor and wide networks. However, all these works do not show the benefits of depth, and deeper nets do not obtain better theoretical results than shallow ones in their settings.
\par
Depth is also important to the general networks. Generally, a neural network with $\Theta(k^3)$ layers, $\Theta(1)$ units per layer, cannot be approximated by networks with $O(k)$ layers~\cite{telgarsky2016benefits}. The works~\cite{kawaguchi2019effect, arora2018convergence} showed that deeper and wider fully-connected networks obtain better training results, but did not analyze the NN's performance during testing. By contrast, besides showing the training objective decreases monotonously with the increase of depth and width, we also give the generalization bound of the proposed MCN. In addition, we prove that $(l+1)$-layer MCN always obtains better training results than  $l$-layer MCN, which reveals the reason why deeper nets usually perform better in practice.
\subsection{Generalization of Neural Networks}
One major concern in the learning community is the generalization bound~(also known as estimation bound). In general, at least $n = \Omega(\epsilon^{-\max\{d_x,2\}})$ samples are needed to learn a Lipschitz-continuous functions in $\mathbb{R}^{d_x}$ with the population regression risk as $\epsilon$~\cite{luxburg2004distance}. The exponential dependence on the dimension $d_x$ is often referred to as the \emph{curse of dimensionality}. Fortunately, when the model structure is specified, the sample complexity can be reduced, e.g., $\Omega(d_x\epsilon^{-2})$ for affine functions~\cite{shalev2014understanding}, $\Omega(k^2d_x\epsilon^{-2})$ for single hidden-layer fully connected neural networks \cite{rumerlhar1986learning}, where $k$ is the number
of units in the hidden layer, and $\widetilde{O}\left(m^2 \epsilon^{-4}\right)$ for one-hidden-layer CNN with $m$-dimensional convolutional filter~\cite{du2018many}.
Generally, for a parametric regression problem, the expected generalization
error is bounded as $O(D \log(n)/n)$, where $D$ is depends on the amount of model parameters~\cite{maillard2009compressed, gyorfi2006distribution}. Obviously, this bound cannot reveal the mystery of generalization ability of over-parametrized deep learning models which have more parameters than necessary to fit the training data.

In practice, we first train DNNs to perfectly fit the training data.
The resulting (zero training loss) NNs can already have good performance on test data~\cite{zhang2016understanding}.
This phenomena is considered as one of reasons to concern the theoretical  generalization bound of neural networks. Some researchers try to find the inspiration from shallow networks. By assuming the existence of a true model, the works in~\cite{ma2018priori,du2018many,arora2019fine} showed that the (regularized) empirical risk minimizer has good generalization with sample complexity that depends on the true model.
Another line of researchers take the dynamic optimization process (e.g., SGD) into consideration and/or connect the network learning with kernel methods~\cite{arora2019fine, allen2018learning, dou2019training}. Although the theory is rigorous, all the works cannot be easily extended to the networks with complex structure which may not be trained by SGD.

Surprisingly, some recent works found that data interpolation also have good generalization ability and even can obtain the statistical sub-optimality and optimality for linear and kernel-based combination of observation, respectively~\cite{belkin2018overfitting,pmlr-v89-belkin19a}. Moreover, bias-variance trade-off theory for interpolating
predictors was also explored~\cite{belkin2018reconciling}. However, all these works are non-parametric and may not directly apply to the DNN analysis.
\end{document}